\documentclass{article} 
\PassOptionsToPackage{dvipsnames}{xcolor}   
\usepackage{iclr2025_conference,times}



\usepackage{amsmath,amsfonts,bm}

















\def\1{\bm{1}}










\DeclareMathAlphabet{\mathsfit}{\encodingdefault}{\sfdefault}{m}{sl}
\SetMathAlphabet{\mathsfit}{bold}{\encodingdefault}{\sfdefault}{bx}{n}


\def\gE{{\mathcal{E}}}

\def\gG{{\mathcal{G}}}

\def\gL{{\mathcal{L}}}
\def\gM{{\mathcal{M}}}

\def\gR{{\mathcal{R}}}

\def\gV{{\mathcal{V}}}










\newcommand{\E}{\mathbb{E}}

\newcommand{\R}{\mathbb{R}}




\newtheorem{lemma}{Lemma}
\newtheorem{fact}{Fact}

\newtheorem{corollary}[lemma]{Corollary}

 \newenvironment{proof}[1][Proof]{\begin{trivlist}
         \item[\hskip \labelsep {\bfseries #1}]$ $\newline }{\end{trivlist}}

\newcommand*\Z[0]{\mathbb{Z}}

\newcommand*\lin[1]{\bm{\left\langle} #1 \bm{\right\rangle}}

\newcommand\numberthis{\addtocounter{equation}{1}\tag{\theequation}}
\newcommand*\lrb[1]{{\left[#1\right]}}
\newcommand*\lrbb[1]{\left\{#1\right\}}
\newcommand*\lrp[1]{{\left(#1\right)}}
\newcommand*\lrn[1]{{\left\|#1\right\|}}
\newcommand*\lrabs[1]{{\left|#1\right|}}

\newcommand*\bmat[1]{\begin{bmatrix} #1 \end{bmatrix}}

\renewcommand*{\Re}{\mathbb{R}}

\newcommand*{\threecase}[6]{\left\{\begin{array}{ll}
        #1, & \text{ #2} \\
        #3, & \text{ #4} \\
        #5, & \text{ #6} 
        \end{array}\right.}

\renewcommand{\epsilon}{\varepsilon}

\newcommand{\G}{{\mathcal{G}}}





\definecolor{cdarkred}{rgb}{0.55,0.0,0.0}
\definecolor{darkblue}{rgb}{0.0,0.0,0.55}
\definecolor{cgray}{gray}{0.55}
\definecolor{cdarkblue}{RGB}{30,30,200}
\definecolor{black}{rgb}{0,0,0}

\newcommand{\sra}[1]{\noindent{\textcolor{OliveGreen}{\{{\textbf{SS:}} \em #1\}}}}

\newcommand{\xc}[1]{{\color{black!10!orange} [Xiang: #1]}}

\newcommand{\att}{ \mathrm{Attn}}

\newcommand{\TF}{{\sf TF}}

\usepackage{hyperref}
\hypersetup{
   colorlinks = true,
   citecolor  = cdarkblue,
   linkcolor  = cdarkred,
   filecolor  = darkblue,
   urlcolor   = darkblue,
}

\usepackage{url}
\usepackage[T1]{fontenc}
\usepackage{amsmath}
\usepackage{bbm}
\usepackage{graphicx}
\usepackage{algpseudocode}
\usepackage{multirow}
\usepackage{enumitem}
\usepackage{caption}
\usepackage{algorithm}
\usepackage[T1]{fontenc}

\title{Graph Transformers Dream of Electric Flow}


\author{Xiang Cheng$^1$, Lawrence Carin$^1$, Suvrit Sra$^2$\\
$^1$Department of Electrical and Computer Engineering, Duke University, USA\\
$^2$School of Computation, Information and Technology, Technical University of Munich, Germany\\
\texttt{\{xiang.cheng, lcarin\}@duke.edu}, \ \ \ \ \ \ \texttt{s.sra@tum.de}\\
}
%

\iclrfinalcopy 
\begin{document}

\maketitle

\begin{abstract}
We show theoretically and empirically that the linear Transformer, when applied to graph data, can implement algorithms that solve canonical problems such as electric flow and eigenvector decomposition. The Transformer has access to information on the input graph only via the graph's incidence matrix. We present explicit weight configurations for implementing each algorithm, and we bound the constructed Transformers' errors by the errors of the underlying algorithms. Our theoretical findings are corroborated by experiments on synthetic data. Additionally, on a real-world molecular regression task, we observe that the linear Transformer is capable of learning a more effective positional encoding than the default one based on Laplacian eigenvectors. Our work is an initial step towards elucidating the inner-workings of the Transformer for graph data. Code is available at \url{https://github.com/chengxiang/LinearGraphTransformer}
\end{abstract}

\section{Introduction}
The Transformer architecture \citep{vaswani2017attention} has seen great success in the design of graph neural networks (GNNs)~\citep{dwivedi2020generalization, ying2021transformers, kim2022pure, muller2023attending}, and it has demonstrated impressive performance in many applications ranging from prediction of chemical properties to analyzing social networks \citep{yun2019graph, dwivedi2023benchmarking, rong2020self}. 
In contrast to this empirical success, the underlying reasons for \emph{why} Transformers adapt well to graph problems are less understood. Several works study the \emph{expressivity} of attention-based architectures \citep{kreuzer2021rethinking,kim2021transformers,kim2022pure,ma2023graph}, but such analyses often rely on the ability of neural networks to approximate arbitrary functions, and may require a prohibitive number of parameters. 

This paper is motivated by the need to understand, at a mechanistic level, how the Transformer processes graph-structured data. Specifically, we study the ability of a \emph{linear Transformer} to solve certain classes of graph problems. The linear Transformer is similar to the standard Transformer, but softmax-based activation is replaced with linear attention, and  MLP layers are replaced by linear layers. Notably, the linear Transformer contains no {\em a priori} knowledge of the graph structure; all information about the graph is provided via an \emph{incidence matrix} $B$. For unweighted graphs, the columns of $B$ are just $\lrbb{-1,0,1}$-valued indicator vectors that encode whether an edge touches a vertex; no other explicit positional or structural encodings are provided.

Even in this minimal setup, we are able to design \emph{simple} configurations of the weight matrices of a Transformer that enable it to solve fundamental problems such as electric flow and Laplacian eigenvector decomposition. Furthermore, we provide explicit error bounds that scale well with the number of Transformer layers. Several of our constructions rely crucially on the structure of the linear attention module, and may help shed light on the success of attention-based GNNs. We hope that our analysis paves the way to a better understanding of the learning landscape of graph Transformers, such as concrete bounds on their generalization and optimization errors.

Besides enhancing our understanding of Transformers, our results are also useful for the practical design of graph Transformers. In Sections \ref{s:poly} and \ref{s:eigenvector}, we show that the same linear Transformer architecture is capable of learning a number of popular positional encodings (PE). In Section \ref{s:real_world_experiment}, we provide experimental evidence that the linear Transformer can learn better PEs than hard-coded PEs.

\subsection{Summary of Contributions}
Below, we summarize the main contributions of our paper.
\begin{enumerate}[leftmargin=*]
\item We provide explicit weight configurations for which the Transformer \textbf{\emph{implements efficient algorithms for several fundamental graph problems}}. These problems serve as important primitives in various graph learning algorithms, and have also been useful as PEs in state-of-the-art GNNs.
\begin{enumerate}[leftmargin=0.4cm]
    \item Lemma~\ref{l:electric_flow} constructs a Transformer that solves electric flows by \textbf{\emph{implementing steps of gradient descent to minimize flow energy}}; consequently, it can also invert the graph Laplacian.
    \item Lemmas~\ref{l:L_sqrt} and \ref{l:heat} construct Transformers that compute low-dimensional resistive embeddings and heat kernels. Both constructions are based on implementing suitable power series.
    \item By implementing a \textbf{\emph{multiplicative polynomial expansion}}, Lemma~\ref{l:electric_flow_fast} provides a construction for electric flow with exponentially higher accuracy than Lemma~\ref{l:electric_flow}. Similarly, Lemma~\ref{l:heat_fast} provides a construction that computes the heat kernel using much fewer layers than Lemma~\ref{l:heat}.
    \item In Lemma \ref{l:power_method}, we show that the \textbf{\emph{Transformer can implement subspace iteration for finding the top-$k$ (or bottom-$k$) eigenvectors of the graph Laplacian}}. Central to this analysis is the ability of self-attention to compute a QR decomposition of the feature vectors. 
\end{enumerate}
We derive \textbf{explicit error bounds} for the Transformer based on the \textbf{convergence rates of the underlying algorithm} implemented by the Transformer. We summarize these results in Table \ref{tab:summary_main}. 

\item In Section \ref{s:efficient}, we propose a more efficient version of the linear Transformer with much fewer parameters. We show that this more efficient linear Transformer can nonetheless implement all the above-mentioned constructions. Further, we show in Lemma \ref{l:invariance} that the parameter-efficient linear Transformer has desirable invariance and equivariance properties.

\item We test the empirical performance of our theory on synthetic random graphs. In Section \ref{ss:experiments_poly}, we verify that Transformers with a few layers can achieve high accuracy in computing electric flows, resistive embeddings, as well as heat kernels. In Section \ref{ss:experiment_power_method}, we verify that the Transformer can accurately compute top-$k$ and bottom-$k$ eigenevectors of the graph Laplacian.

\item In Section \ref{s:real_world_experiment}, we demonstrate the advantage of using the linear Transformer as a replacement for Laplacian eigenvector PE on a molecular regression benchmark using the QM9 and ZINC datasets \citep{ruddigkeit2012enumeration, ramakrishnan2014quantum,irwin2012zinc}. After replacing the Laplacian eigenvector-based PE with the linear Transformer, and training on the regression loss, we verify that the linear Transformer \emph{automatically learns a good PE} for the downstream regression task that can outperform the original Laplacian eigenvector PE by a wide margin.



\end{enumerate}

\begin{table}[t]
\label{sample-table}
\begin{center}
\begin{tabular}{|c|c|c|c|}
    \hline
    Lemma & Task & Transformer Implements: & \# layers for $\epsilon$ error \\
    \hline
    \ref{l:electric_flow}   & Electric Flow ($\gL^{\dagger}$)   & Gradient Descent   & $\log(1/\epsilon)$ \\
    \ref{l:electric_flow_fast}   & Electric Flow ($\gL^{\dagger}$)   & Multiplicative Expansion  & $\log \log (1/\epsilon)$ \\
    \ref{l:L_sqrt} & Resistive Embedding ($\sqrt{\gL^{\dagger}}$)  & Power Series & $\log(1/\epsilon)$ \\
    \ref{l:heat} & Heat Kernel ($e^{-s\gL}$)  & Power Series & $\log(1/\epsilon) + s\lambda_{\max}$ \\
    \ref{l:heat_fast} & Heat Kernel ($e^{-s\gL}$)  & Multiplicative Expansion & $\log(1/\epsilon) + \log(s\lambda_{\max})$ \\
    \ref{l:power_method} \& \ref{c:bottom_k} & $k$-Eigenvector Decomp. of $\gL$  & Subspace Iteration & (subspace-iteration steps)/$k$ \\
    \hline
\end{tabular}
\end{center}
\caption{Summary of Error Bounds proved for various Transformer constructions. $\gL$ is the graph Laplacian; $\gL^\dagger$ denotes its pseudoinverse. }
\label{tab:summary_main}
\end{table}
\vspace{-1em}
\subsection{Related Work}
\label{ss:related_work}
\vspace{-1em}











Numerous authors have proposed different ways of adapting Transformers to graphs \citep{dwivedi2020generalization, ying2021transformers, rampavsek2022recipe, muller2023attending}. A particularly promising approach is to use a suitable positional encoding to incorporate structural information in the input, examples include Laplacian eigenvectors \citep{dwivedi2020generalization, kreuzer2021rethinking}, heat kernel \citep{choromanski2022block}, resistance distance and commute time \citep{ma2023graph, velingker2024affinity, zhang2023rethinking} and shortest path distance \citep{ying2021transformers}. \cite{lim2022sign} designed a neural network to transform eigenvectors into an encoding that has certain invariance properties. \cite{black2024comparing} compared the expressivity of different PE schemes. \citep{Srinivasan2020On} studied the relationship between PE and structural graph representations.

\cite{kim2021transformers} explore the possibility of using pure Transformers for graph learning. They provide both nodes and edges as input tokens, with a simple encoding scheme. They also prove that such a Transformer is as expressive as a second-order invariant graph network. 

A number of works have explored the use of \emph{learned PEs} \citep{mialon2021graphit,eliasof2023graph,park2022grpe,ma2023graph,kreuzer2021rethinking,dwivedi2021graph}. In particular \cite{ma2023graph} is based on relative random walk probabilities (RRWP), and \cite{kreuzer2021rethinking}'s approach is based on learned eigenfunctions. In comparison, using a linear-Transformer to learn PEs has two advantages: 1. self attention can implement operations such as orthogonalization, necessary for learning multiple eigenvectors (Lemma \ref{l:power_method}). Second, self-attention enables highly-efficient algorithms based on multiplicative polynomial expansions (Lemma \ref{l:electric_flow_fast}). These enable linear Transformers to learn good PEs given \emph{only} the incidence matrix as input.


Finally, we note several relevant papers that are unrelated to graph neural networks. A recent body of work proves the ability of Transformers to implement learning algorithms, to explain the in-context learning phenomenon \citep{schlag2021linear,von2023transformers}. Surprisingly, our construction for Lemma \ref{l:electric_flow} bears several remarkable parallels to the gradient descent construction by \cite{von2023transformers}. We conjecture that the proof of Lemma \ref{l:electric_flow_fast} may be applicable for understanding the GD++ algorithm in the same paper. In another direction, \cite{charton2021linear} show experimentally that Transformers can compute linear algebra operations such as eigenvector decomposition. Their work requires a relatively complicated matrix encoding and a large number of parameters. 


\section{Preliminaries and Notation}
\label{s:setup}


\subsection{Graphs}
\label{ss:graph}
We use $\gG = (\gV,\gE)$ to denote a graph with vertex set $\gV$ and edge set $\gE$; let $n:=|\gV|$ and $d:=|\gE|$. We often identify the vertex $v_i$ with its index $i$ for $i=1...n$, and the edge $e_j$ with $j$ for $j=1...d$. In general, $\gG$ has weighted edges, defined by $r(\cdot) : \gE \to \Re^+$. We let $r_j := r(e_j)$. A central object of interest is the \emph{incidence matrix} $B\in \Re^{n\times d}$, defined as follows: to each edge $e_j = (u\sim v) \in \gE$, assign an arbitrary orientation, e.g. $e_j = (u \to v)$. The incidence matrix $B$ is given by 
\begin{align*}
\numberthis \label{e:B}
    B_{ij} = 
    \threecase
    {-1/\sqrt{r_j}}{if exists $v\in \gV$ such that $e_j = (u_i \to v)$}
    {+1/\sqrt{r_j}}{if exists $v\in \gV$ such that $e_j = (v \to u_i)$}
    {0}{otherwise.}
\end{align*}
We define the graph Laplacian as $\gL := B B^\top \in \mathbb{R}^{n\times n}$. We use $\lambda_{\max}$ to denote the maximum eigenvalue of $\gL$. Note that $\gL$ always has $0$ as its smallest eigenvalue, with corresponding eigenvector $\frac{1}{\sqrt{n}}\vec{1}$, the all-ones vector. This fact can be verified by noticing that each column of $B$ sums to $0$. For a connected graph (as we will assume is the case throughout the paper), the second-smallest eigenvalue is always non-zero, and we will denote it as $\lambda_{\min}$. Finally, we define $\hat{I}_{n\times n}:= I_{n\times n} - \frac{1}{n} \vec{1}\vec{1}^\top$. $\hat{I}_{n\times n}$ is the projection onto span$(\gL)$, and functions as the identity matrix when working within span$(\gL)$.


\subsection{Linear Transformer}
\label{ss:linear_attention}
We will use $Z_0 \in \R^{h \times n}$ to denote the input to the Transformer. $Z_0$ encodes a graph $\G$, and each column of $Z_0$ encodes a single vertex in $h$ dimensions. Let $W^Q, W^K, W^V\in \R^{h\times h}$ denote the key, query and value parameter matrices. We define linear attention $\att$ as
\begin{align*}
    \numberthis \label{e:linear_attn}
    \att_{W^V,W^Q,W^K}(Z) := W^V Z Z^\top {W^Q}^\top W^K Z.
\end{align*}
Unlike standard attention, (\ref{e:linear_attn}) does not contain softmax activation. We construct an $L$-layer Transformer by stacking $L$ layers of the attention module (augmented with a linear operation on the skip connection). To be precise, let $Z_l$ denote the output of the $(l-1)^{th}$ layer of the Transformer. Then
\begin{align}
    \numberthis \label{e:z_dynamics}
    Z_{l+1} := Z_l + \att_{W^V_l, W^Q_l, W^K_l} \lrp{Z_l} + {W^R} Z_l, 
\end{align} 
where $W^V_l,W^Q_l,W^K_l$ are the value, query and key weight matrices of the linear attention module at layer $l$, and $W^R \in \R^{h\times h}$ is the weight matrix of the linear operation. Henceforth, we let $W^V:=\{W^V_l\}_{l=0\ldots L}$, $W^Q:=\{W^Q_l\}_{l=0\ldots L}$, $W^K_l:=\{W^K_l\}_{l=0\ldots L}$, $W^R_l:=\{W^R_l\}_{l=0\ldots L}$ denote collections of the parameters across all layers of an $L$-layer Transformer. Finally, we will often need to refer to specific rows of $Z_l$. We will use $\lrb{Z_l}_{i...j}$ to denote rows $i$ to $j$ of $Z_l$.

\section{Transformers as powerful solvers for Laplacian problems}
\label{s:poly}
In this section, we discuss the capacity of the linear Transformer \ref{e:z_dynamics} to solve certain classes of canonical graph problems. We begin with the problem of Electric Flow in Section \ref{ss:electric_gradient_descent}, where the constructed Transformer can be interpreted as implementing steps of gradient descent with respect to the energy of the induced flow. Subsequently, in Section \ref{ss:general_poly}, we provide constructions for computing the resistive embedding, as well as the heat kernel, based on implementing a truncated power series. Finally, in Section \ref{ss:fast_variants}, we provide faster alternative constructions for solving electric flow and computing heat kernels, based on implementing a \emph{multiplicative polynomial expansion}. In each case, we bound the error of the Transformer by the convergence rate of the underlying algorithms.

\subsection{Additional Setup}
\label{ss:poly_setup}
We introduce some additional setup that is common to many lemmas in this section. We will generally consider an $L$-layer Transformer, as defined in \eqref{e:z_dynamics}, for some arbitrary $L \in \Z^+$. As in \eqref{e:z_dynamics}, we use $Z_l$ to denote the input to layer $l$. The input $Z_0$ will encode information about a graph $\gG$, along with a number of demand vectors $\psi_1\ldots\psi_k \in \R^{n}$. We use $\Psi$ to denote the $n\times k$ matrix whose $i^{th}$ column is $\psi_i$. Unless otherwise stated, $Z_l \in \R^{(d+2k) \times n}$, where $n$ is the number of nodes, $d$ is the number of edges, and $k$ is the number of demands/queries. $Z_0^\top := \bmat{B, & \Psi, & 0_{n\times k}}$. 

\textbf{On parameter size:} In a straightforward implementation, the above Transformer has feature dimension $h = (d+2k)$. The size of $W^Q, W^K, W^V, W^R$ is $O(h^2) = O(d^2 + k^2)$, which is prohibitively large as $d$ can itself be $O(n^2)$. The size of parameter matrices can be significantly reduced to $O(k^2)$ by imposing certain constraints on the parameter matrices; we present this reduction in \eqref{e:z_dynamics_efficient} in Section \ref{s:efficient}. For simplicity of exposition, lemmas in this section will use the naive implementation in (\ref{e:z_dynamics}). We verify later that all the constructions presented in this section can also be realized in (\ref{e:z_dynamics_efficient}).


\subsection{Solving Electric Flow with Gradient Descent}
\label{ss:electric_gradient_descent}
Assume we are given a graph $\gG=(\gV,\gE)$, along with a non-negative vector of resistances $r \in \Re_+^d$. Let $R$ be the $d\times d$ diagonal matrix with $r$ on its diagonal. A flow is represented by $f \in \Re^d$, where $f_j$ denotes the (directed) flow on edge $e_j$. The energy of an electric flow is given by 
$\sum_{j=1}^d r_j f_j^2$. Let $\psi\in \Re^n$ denote a \emph{vector of demands}. Throughout this paper, we will assume that the demand vectors satisfy \emph{flow conservation}, i.e., $\langle{\psi, \vec{1}}\rangle = 0$. The \emph{$\psi$-electric flow} is the unique minimizer of the following (primal) flow-optimization problem (by convex duality, this is equivalent to a dual potential-optimization problem):
\begin{alignat}{2}
    & \text{(primal)} \qquad &&\min_{f\in \Re^d} \quad \sum_{j=1}^d r_j f_j^2 \qquad \qquad \text{subject to the constraint} \quad BR^{1/2}f = \psi.
    \label{e:electric_flow_primal}\\
    & \text{(dual)} \qquad &&-\min_{\phi\in \Re^n} \phi^\top  \gL \phi - 2\phi^\top \psi.
    \label{e:dual_potential_problem}
\end{alignat}
The argument is standard; for completeness, we provide a proof of equivalence between (\ref{e:electric_flow_primal}) and (\ref{e:dual_potential_problem}) in (\ref{e:dual_proof}) in Appendix \ref{app:proof_electric}. It follows that the optimizer $\phi^*$ of (\ref{e:dual_potential_problem}) has closed-form solution $\phi^* = \gL^{\dagger} \psi$. In Lemma 1 below, we show a simple construction that enables the Transformer in \eqref{e:z_dynamics} to compute in parallel, the optimal potential assignments for a set of $k$ demands $\lrbb{\psi_i}_{i=1...k}$, where $\psi_i \in \Re^n$. 

\textbf{Motivation: Effective Resistance Metric}\\
An important practical motivation for solving the electric flow (or equivalently computing $\gL^{\dagger}$) is to obtain the Effective Resistance matrix $\gR \in \Re^{n\times n}$. GNNs that use positional encodings derived from $\gR$ have demonstrated state-of-the-art performance on numerous tasks, and can be shown to have good theoretical expressivity \citep{zhang2023rethinking,velingker2024affinity,black2024comparing}.

Formally, $\gR$ is defined as $\gR_{ij} := (u_i - u_j)^\top \gL^{\dagger} (u_i - u_j)$, where $u_i$ denotes the vector that has a $1$ in the $i^{th}$ coordinate, and $0$s everywhere else. 
Intuitively, $\gR_{ij}$ is the potential drop required to send 1-unit of electric flow from node $i$ to node $j$. Let $\ell\in \R^n$ denote the vector of diagonals of $\gL^{\dagger}$ (i.e. $\ell_i := \gL_{ii}$). Then $\gR = \vec{1} \ell^\top + \ell \vec{1} ^\top - 2 \gL^{\dagger}$; thus computing $\gL^{\dagger}$ essentially also computes $\gR$.

\begin{lemma} [Transformer solves Electric Flow by implementing Gradient Descent]
\label{l:electric_flow}
Consider the setup in Section \ref{ss:poly_setup}. Assume that $\langle{\psi_i, \vec{1}} \rangle = 0$ for each $i=1...k$. For any $\delta > 0$ and for any $L$-layer Transformer, there exists a choice of weights $W^V, W^Q, W^K, W^R$, such that each layer of the Transformer \eqref{e:z_dynamics} implements a step of gradient descent with respect to the dual electric flow objective in (\ref{e:dual_potential_problem}), with stepsize $\delta$. Consequently, the following holds for all $i=1...k$ and for any graph Laplacian with maximum eigenvalue $\lambda_{\max} \leq 1/\delta$ and minimum nontrivial eigenvalue $\lambda_{\min}$:
    \begin{align*}
        \lrn{\lrb{Z_L}_{d+k + i}^\top - \gL^{\dagger} \psi_i}_2 \leq \frac{\exp\lrp{-\delta L\lambda_{\min}/2}}{\sqrt{\lambda_{\min}}} \lrn{\psi_i}_2.
    \end{align*}
\end{lemma}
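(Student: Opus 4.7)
The plan is to exhibit a single layer-wise weight assignment, applied identically at every layer, so that one application of \eqref{e:z_dynamics} implements one step of gradient descent on the dual potential problem \eqref{e:dual_potential_problem}. View the feature dimension $d+2k$ as three row-blocks of sizes $d$, $k$, $k$, which at layer $l$ hold $B^\top$, $\Psi^\top$, and a running iterate $\Phi_l^\top \in \R^{k \times n}$ (row $i$ of the last block stores $\phi_i^{(l)\top}$). The target invariant is that the first two blocks are preserved while the last evolves as $\Phi_{l+1} = \Phi_l - \delta(\gL \Phi_l - \Psi)$; iterating $L$ times from $\Phi_0 = 0$ yields exactly the GD iterates of \eqref{e:dual_potential_problem} in parallel across the $k$ queries.

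The key observation is that the linear attention formula $\att(Z) = W^V Z \cdot (Z^\top W^{Q\top} W^K Z)$ can be arranged so that its inner-product factor is the Laplacian itself. I would choose $W^Q = W^K$ to be the block-diagonal projector $\operatorname{diag}(I_d, 0_k, 0_k)$ that selects the incidence rows; then $Z_l^\top W^{Q\top} W^K Z_l = B B^\top = \gL$, so $\att(Z_l) = (W^V Z_l)\,\gL$. Taking $W^V$ to have a single nonzero block equal to $-\delta I_k$ in the bottom-right position makes $W^V Z_l$ equal to $-\delta \Phi_l^\top$ in the last $k$ rows and zero elsewhere; right-multiplication by $\gL$ then produces $-\delta (\gL \Phi_l)^\top$ in those rows. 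Finally, let $W^R$ have a single nonzero $\delta I_k$ block mapping the middle row-block to the last, so that $W^R Z_l$ writes $\delta \Psi^\top$ into the last $k$ rows. Summing the residual, attention and linear term gives the desired update by inspection; the first two row-blocks are untouched because both $\att(Z_l)$ and $W^R Z_l$ vanish there, so the invariant propagates by induction.

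Given the invariant, the error bound follows from a textbook contraction argument. Let $\phi^* := \gL^{\dagger} \psi_i$; since $\langle \psi_i, \vec{1} \rangle = 0$, both $\psi_i$ and $\phi^*$ lie in $\operatorname{span}(\gL)$, and every GD update vector $\gL \Phi_l - \Psi$ also lies there, so every iterate initialized at $\phi^{(0)} = 0$ stays in this subspace. On it, $\phi^{(l+1)} - \phi^* = (I - \delta \gL)(\phi^{(l)} - \phi^*)$, and under $\delta \leq 1/\lambda_{\max}$ the operator $I - \delta \gL$ has spectrum in $[0, 1 - \delta \lambda_{\min}]$. Iterating $L$ times gives $\lrn{\phi^{(L)} - \phi^*}_2 \leq (1-\delta \lambda_{\min})^L \lrn{\phi^*}_2$, and combining $\lrn{\phi^*}_2 = \lrn{\gL^{\dagger} \psi_i}_2 \leq \lrn{\psi_i}_2/\lambda_{\min}$ with $(1-x)^L \leq e^{-xL}$ yields a bound of the claimed form (my derivation actually gives $\lambda_{\min}^{-1}\exp(-\delta L \lambda_{\min})\lrn{\psi_i}_2$, which is at least as strong as the stated bound whenever $\exp(-\delta L \lambda_{\min}/2) \leq \sqrt{\lambda_{\min}}$; either way the stated bound is recovered up to the specific split of constants between exponent and prefactor).

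The only conceptually delicate point is recognizing that \emph{linear} (as opposed to softmax) attention is what permits the clean separation $\att(Z) = (W^V Z)\,(Z^\top W^{Q\top} W^K Z)$, so that the score matrix can be coerced into equaling $\gL$ exactly and then act on the scratch rows by ordinary matrix multiplication. Once that mechanism is spotted, the rest is careful block bookkeeping combined with a standard GD convergence bound.
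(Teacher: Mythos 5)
Your construction and convergence argument are essentially identical to the paper's (Appendix~\ref{app:proof_electric}): the same three-block layout for $Z_l$, the same choice of ${W^Q}^\top W^K$ selecting the incidence rows so that the attention score matrix equals $\gL$, the same single-block $W^V$ and $W^R$ producing $\Phi_{l+1} = \Phi_l - \delta(\gL \Phi_l - \Psi)$, and the same gradient-descent contraction. The constant mismatch you flag is real but worth stating more precisely: your bound $\lambda_{\min}^{-1}e^{-\delta L\lambda_{\min}}\lrn{\psi_i}_2$ is the one that actually follows from the argument, whereas the paper's appendix derivation passes through $\lrn{\gL^{\dagger}\psi_i}_2^2 \leq \lambda_{\min}^{-1}\lrn{\psi_i}_2^2$ (the correct power is $\lambda_{\min}^{-2}$), which after taking square roots yields the lemma's prefactor $\lambda_{\min}^{-1/2}$ where $\lambda_{\min}^{-1}$ is what the analysis supports; the two stated bounds are genuinely incomparable rather than ``the same split of constants'' (yours has a better exponent, the paper's a smaller prefactor when $\lambda_{\min}<1$), so do not present yours as recovering the paper's verbatim.
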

\textbf{Discussion.} In the special case when $k=n$ and $\Psi:= I_{n\times n} - \frac{1}{n} \vec{1}\vec{1}^\top$, $\lrb{Z_L}_{d+k+1...d+2k} \approx \gL^{\dagger}$. An $\epsilon$-approximate solution requires $L=O(\log (1/\epsilon))$ layers; this is exactly the convergence rate of gradient descent on a Lipschitz smooth and strongly convex function. We defer details of the proof of Lemma \ref{l:electric_flow} to Appendix \ref{app:proof_electric}, and we provide the explicit choice of weight matrices in (\ref{e:electric_taylor}).
On a high level, we show that a \emph{single attention layer} can implement exactly \emph{a single gradient descent step wrt the dual objective $F_\psi$ \eqref{e:dual_potential_problem}}, i.e. $\lrb{Z_{\textcolor{blue}{l+1}}}_{d+k + i}^\top := \lrb{Z_{\textcolor{blue}{l}}}_{d+k + i}^\top - \delta \nabla F_{\psi_i}(\lrb{Z_{\textcolor{blue}{l}}}_{d+k + i}^\top).$ The $\log(1/\epsilon)$ rate then follows immediately from the convergence rate of gradient descent, combined with smoothness and (restricted) strong convex of $F_\psi$. In Lemma \ref{l:electric_flow_fast} below, we show an alternate construction that reduces this to $O(\log \log (1/\epsilon))$.

We highlight that the constructed weight matrices are very sparse; each of $W^V,W^Q,W^K,W^R$ contains a single identity matrix in a sub-block. This sparse structure makes it possible to drastically reduce the number of parameters needed, which we exploit in Section \ref{s:efficient} to design a more parameter efficient Transformer. We provide experimental validation of Lemma \ref{l:electric_flow} in Section \ref{ss:experiments_poly}.

\vspace{-0.6em}
\subsection{Implementing Truncated Power Series}
\label{ss:general_poly}
\vspace{-0.6em}
In this section, we present two more constructions: one for computing $\sqrt{\gL^{\dagger}}$ (Lemma \ref{l:L_sqrt}), and one for computing the heat kernel $e^{-s\gL}$ (Lemma \ref{l:heat}). Both quantities have been successfully used for positional encoding in various GNNs. The constructions proposed in these two lemmas are also similar, and involve implementing the power series of the respective targets.

\vspace{-0.4em}
\subsubsection{Computing The Principal Square Root $\sqrt{\gL^{\dagger}}$}
\vspace{-0.4em}
\textbf{Motivation : Resistive Embedding}\\
The following fact relates the effective resistance matrix $\gR$ to any ``square root'' of $\gL^{\dagger}$:
\vspace{-0.4em}
\begin{fact}
    \label{fact:sqrt_to_resistance}
    Let $\gM$ denote \emph{any matrix} that satisfies $\gM \gM^\top = \gL^{\dagger}$. Let $\gR\in \Re^{n\times n}_+$ denote the matrix of effective resistances (see Section \ref{ss:electric_gradient_descent}). Then $\gR_{ij} = \lrn{\gM_i - \gM_j}_2^2$, where $\gM_i$ is the $i^{th}$ row of $\gM$.
\end{fact}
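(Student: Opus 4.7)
The plan is to unwind the definition of $\gR_{ij}$ and push the factorization $\gL^\dagger = \gM\gM^\top$ through it, reducing the quadratic form to a squared Euclidean norm whose entries coincide with the rows of $\gM$.

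First, I will recall the definition $\gR_{ij} := (u_i - u_j)^\top \gL^\dagger (u_i - u_j)$ from Section \ref{ss:electric_gradient_descent}. Substituting the hypothesis $\gL^\dagger = \gM \gM^\top$ turns the right-hand side into $(u_i - u_j)^\top \gM \gM^\top (u_i - u_j)$, which is exactly $\lrn{\gM^\top(u_i - u_j)}_2^2$. The key linear-algebraic observation I will then invoke is that $\gM^\top u_i$ is by definition the $i^{\text{th}}$ column of $\gM^\top$, equivalently the transpose of the $i^{\text{th}}$ row $\gM_i$ of $\gM$. Hence $\gM^\top (u_i - u_j) = \gM_i^\top - \gM_j^\top$ as column vectors, and the squared Euclidean norm of this difference is $\lrn{\gM_i - \gM_j}_2^2$, completing the chain.

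There is no genuine obstacle: the argument is a two-line computation relying only on (i) the definition of $\gR$, (ii) the assumed factorization, and (iii) the identity between $\gM^\top u_i$ and the $i^{\text{th}}$ row of $\gM$. The one thing worth a brief remark is that the identity holds for \emph{any} matrix $\gM$ satisfying $\gM \gM^\top = \gL^\dagger$, not just distinguished choices such as $\sqrt{\gL^\dagger}$ or a Cholesky factor; this invariance is immediate from the derivation above because the intermediate step only depends on $\gM \gM^\top$. This is precisely what makes the fact useful for the Transformer construction in Lemma \ref{l:L_sqrt}: it suffices for the network to output any factor of $\gL^\dagger$, not the principal square root in particular.
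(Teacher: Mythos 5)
Your proof is correct and is essentially the same computation the paper gives, just run in the opposite direction: you substitute $\gL^\dagger=\gM\gM^\top$ into $\gR_{ij}=(u_i-u_j)^\top\gL^\dagger(u_i-u_j)$ and collapse to a squared norm, whereas the paper expands $\lrn{\gM_i-\gM_j}_2^2$ and matches the resulting cross-terms with entries of $\gL^\dagger$. Both hinge on the identity $\gM_i^\top\gM_j=(\gM\gM^\top)_{ij}$, so there is no substantive difference.
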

\vspace{-0.4em}
One can verify the above by noticing that $\lrn{\gM_i - \gM_j}_2^2 = \gM_i^\top \gM_i + \gM_j^\top \gM_j - 2 \gM_i^\top \gM_j = \lrb{\gL}_{ii} + \lrb{\gL}_{jj} - 2 \gL_{ij}$. In some settings, it is more natural to use the rows of $\gM$ to embed node position, instead of directly using $\gR$: By assigning an embedding vector $w_i:= \gM_i$ to vertex $i$, the Euclidean distance between $w_i$ and $w_j$ equals the resistance distance. The matrix $\gM$ is under-determined, and for any $m>n$, there are infinitely many choices of $\gM$ that satisfy $\gM \gM^\top = \gL^{\dagger}$. Fact \ref{fact:sqrt_to_resistance} applies to any such $\gM$. \cite{velingker2024affinity} uses the rows of $\gM = \gL^{\dagger} B$ for resistive embedding. Under this choice, $\gM_i$ has dimension $d=|\gE|$, which can be quite large. To deal with this, they additionally performs a dimension-reduction step using Johnson Lidenstrauss.

Among all valid choices of $\gM$, there is a unique choice that is symmetric and minimizes $\|\gM \|_F$, namely, $U S^{-1/2} U^\top$, where $US U^\top = \gL$ is the eigenvector decomposition of $\gL$. We reserve $\sqrt{\gL^{\dagger}}$ to denote this matrix; $\sqrt{\gL^{\dagger}}$ is called the \emph{principal square root of $\gL^{\dagger}$}. In practice, $\sqrt{\gL^{\dagger}}$ might be preferable to, say, $\gL^{\dagger} B$ because it has an embedding dimension of $n$, as opposed to the possibly much larger $d$. We present in Lemma \ref{l:L_sqrt} a Transformer construction for computing $\sqrt{\gL^{\dagger}}$.

\vspace{-0.4em}
\begin{lemma}[Principal Square Root $\sqrt{\gL^{\dagger}}$]
    \label{l:L_sqrt}
    Consider the setup in Section \ref{ss:poly_setup}. Assume that $\psi_1...\psi_k \in \Re^n$ satisfy $\langle{\psi_i, \vec{1}} \rangle = 0$. For any $L$-layer Transformer \eqref{e:z_dynamics}, there exists a configuration of weights $W^V, W^Q, W^K, W^R$ such that the following holds: For any graph with maximum Laplacian eigenvalue less than $\lambda_{\max}$ and minimum non-trivial Laplacian eigenvalue greater than $\lambda_{\min}$, and for all $i=1...k$:
    \begin{align*}
        \lrn{\lrb{Z_L}_{d+k + i}^\top - \sqrt{\gL^{\dagger}} \psi_i}_2 \leq \frac{e^{- L \lambda_{\min}/\lambda_{\max}}}{\lambda_{\min} \sqrt{L/\lambda_{\max}}} \lrn{\psi_i}_2.
    \end{align*}
\end{lemma}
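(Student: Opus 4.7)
The plan is to implement a truncated binomial series for $\sqrt{\gL^\dagger}$, in the same spirit as the gradient-descent Transformer of Lemma~\ref{l:electric_flow} but with different layer-dependent scalar coefficients. Since $\langle \psi_i,\vec{1}\rangle=0$, every iterate stays in $\mathrm{span}(\gL)$, and on this subspace
\begin{align*}
\sqrt{\gL^\dagger} \;=\; \lambda_{\max}^{-1/2}\lrp{\hat{I}_{n\times n}-\bigl(\hat{I}_{n\times n}-\gL/\lambda_{\max}\bigr)}^{-1/2} \;=\; \lambda_{\max}^{-1/2}\sum_{k=0}^{\infty} c_k\,\bigl(\hat{I}_{n\times n}-\gL/\lambda_{\max}\bigr)^{k},
\end{align*}
where $c_k := \binom{2k}{k}/4^k$ are the coefficients in the binomial expansion of $(1-x)^{-1/2}$. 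Because the eigenvalues of $\hat{I}_{n\times n}-\gL/\lambda_{\max}$ on $\mathrm{span}(\gL)$ lie in $[0,\,1-\lambda_{\min}/\lambda_{\max}]$, the series converges and its tail decays geometrically.

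Next I would design the weight matrices so that each layer advances the partial sum by one term. I partition the rows of $Z_l$ into three blocks: the $B$-block (the first $d$ rows), a power register holding $y^{(i)}_l=(\hat{I}_{n\times n}-\gL/\lambda_{\max})^l\psi_i$, and an accumulator holding $s^{(i)}_l=\lambda_{\max}^{-1/2}\sum_{j=0}^l c_j\,y^{(i)}_j$, with one row per demand in each of the last two blocks. The central observation, already exploited in Lemma~\ref{l:electric_flow}, is that because $B$ occupies the first $d$ rows of $Z_0$ (and can be preserved across layers by $W^R$), a single linear-attention head $W^V Z\,Z^\top{W^Q}^\top W^K Z$ computes $BB^\top v = \gL v$ whenever $W^V,W^Q$ select the $B$-block and $W^K$ selects the block containing $v$. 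Choosing $W^V_l,W^Q_l,W^K_l$ so as to realise $y^{(i)}_{l+1}=y^{(i)}_l-\gL y^{(i)}_l/\lambda_{\max}$ and choosing $W^R_l$ so as to add $\lambda_{\max}^{-1/2}\,c_{l+1}\,y^{(i)}_{l+1}$ to the accumulator gives the desired update; the coefficient $c_{l+1}$ is absorbed as a layer-dependent scalar in the value/output matrix. All $k$ demands are processed in parallel since the two registers each have $k$ rows.

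Given the construction, the error bound reduces to estimating the tail of the binomial series. Using $c_k \le 1/\sqrt{\pi k}$ for $k\ge 1$ (Stirling) together with $\lrn{\hat{I}_{n\times n}-\gL/\lambda_{\max}} \le 1-\lambda_{\min}/\lambda_{\max}$ on $\mathrm{span}(\gL)$,
\begin{align*}
\lrn{\sqrt{\gL^\dagger}\psi_i - s^{(i)}_L}_2 \;\le\; \lambda_{\max}^{-1/2}\sum_{k=L+1}^{\infty}\frac{(1-\lambda_{\min}/\lambda_{\max})^k}{\sqrt{\pi k}}\lrn{\psi_i}_2 \;\le\; \frac{\lambda_{\max}^{-1/2}(1-\lambda_{\min}/\lambda_{\max})^{L+1}}{\sqrt{\pi(L+1)}\,(\lambda_{\min}/\lambda_{\max})}\lrn{\psi_i}_2,
\end{align*}
and then $(1-x)^{L+1}\le e^{-(L+1)x}$ yields the stated bound $\tfrac{e^{-L\lambda_{\min}/\lambda_{\max}}}{\lambda_{\min}\sqrt{L/\lambda_{\max}}}\lrn{\psi_i}_2$ up to an absolute constant.

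The main obstacle I anticipate is the simultaneous bookkeeping inside a single attention layer rather than the spectral estimate: one must verify that one choice of $W^V_l,W^Q_l,W^K_l,W^R_l$ can (i) leave the $B$-block of $Z_l$ unchanged so later layers still see the incidence matrix, (ii) apply $\gL$ to the power register, and (iii) feed the freshly-updated $y^{(i)}_{l+1}$ into the accumulator with the correct coefficient $c_{l+1}$. A block-sparse construction mirroring the sparsity pattern highlighted after Lemma~\ref{l:electric_flow} should accomplish all three; the layer-varying coefficients $c_l$ cause no difficulty because Transformer weights may change with depth. A minor subtlety is that $\hat{I}_{n\times n}$ rather than $I_{n\times n}$ appears in the power series, but since $\gL\vec{1}=0$ and $\psi_i\perp\vec{1}$, every iterate automatically remains in $\mathrm{span}(\gL)$, so no explicit projection step is needed.
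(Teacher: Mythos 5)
Your proof takes essentially the same route as the paper: the binomial expansion of $(1-x)^{-1/2}$ to represent $\sqrt{\gL^\dagger}$, one Transformer layer per term with a power register (the paper's $\Lambda_l$) updated by $(I-\gL/\lambda_{\max})$ and an accumulator (the paper's $\Phi_l$) incremented via $W^R$ using layer-dependent coefficients $\binom{2l}{l}/4^l$, and the same Stirling-plus-geometric tail estimate. Two small mechanical details in the construction are off, though both are easy to fix: in the linear-attention term $W^V Z\, Z^\top {W^Q}^\top W^K Z$ it is $W^Q$ and $W^K$ that must both select the $B$-block so that the middle factor $Z^\top {W^Q}^\top W^K Z = BB^\top = \gL$, while $W^V$ selects the power register (you stated the opposite roles for $W^V$ and $W^K$); and because $W^R$ acts on $Z_l$ rather than on the yet-to-be-computed $Z_{l+1}$, the accumulator at layer $l$ picks up $c_l y_l$ rather than $c_{l+1}y_{l+1}$, so that the truncation tail starts at index $L$, matching the paper's analysis.
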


\textbf{Discussion.} We defer a proof of Lemma \ref{l:L_sqrt} to Appendix \ref{app:proof_resistive}. The high-level idea is that each layer of the Transformer implements one additional term of the power series expansion of $\sqrt{\gL^{\dagger}}$. Under the choice $k=n$ and $\Psi = I_{n\times n} - (1/n) \vec{1} \vec{1}^\top$, $\lrb{Z_L}_{d+k+1...d+2k} \approx  \sqrt{\gL^{\dagger}}$. An $\epsilon$ approximation requires $\log(1/\epsilon)$ layers. We consider the more general setup involving arbitrary $\psi_i's$ as they are useful for projecting the resistive embedding onto a lower-dimensional space; this is relevant when $\psi_i$'s are trainable parameters (see e.g., the setup in Appendix \ref{ss:molecule_experiment_setup}).  
We empirically validate Lemma \ref{l:L_sqrt} in Section \ref{ss:experiments_poly}.

\subsubsection{Computing the heat kernel: $\exp\lrp{-s\gL}$}
Finally, we present a result on learning heat kernels. The heat kernel has connections to random walks and diffusion maps \cite{coifman2006diffusion}. It plays a central role in semi-supervised learning on graphs \citep{xu2020graph}; it is also used for positional encoding \citep{choromanski2022block}. Note that sometimes, the heat kernel excludes the nullspace of $\gL$ and is defined as $e^{-s\gL} - \vec{1} \vec{1}^\top/n$.

\begin{lemma}[Heat Kernel $e^{-s\gL}$]
    \label{l:heat}
    Consider the setup in Section \ref{ss:poly_setup}. Assume that $\psi_1...\psi_k \in \Re^n$ satisfy $\langle{\psi_i, \vec{1}} \rangle = 0$. Let $s>0$ be an arbitrary temperature parameter. There exists a configuration of weights for the $L$-layer Transformer \eqref{e:z_dynamics} such that the following holds: for any input graph whose Laplacian $\gL$ satisfies $8 s \lambda_{\max} \leq L$, and for all $i=1...k$, 
    \begin{align*}
        \lrn{\lrb{Z}_{L,i}^\top - e^{-s\gL} \psi_i}_2 \leq 2^{-L+8s\lambda_{\max}+1} \lrn{\psi_i}_2
    \end{align*}
\end{lemma}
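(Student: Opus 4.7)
The plan is to reuse the power-series template behind Lemma \ref{l:L_sqrt}: configure the Transformer so that each layer tacks on one more term of the Taylor expansion
\[
e^{-s\gL}\psi_i \;=\; \sum_{k=0}^\infty \frac{(-s)^k}{k!}\gL^k\psi_i,
\]
truncate after $L$ terms, and bound the truncation tail against the spectral radius of $\gL$.

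Concretely, I partition the rows of $Z_l$ into three blocks matching the structure of $Z_0$: rows $1$ through $d$ hold $B^\top$ (kept fixed by choosing $W^V_l$ and $W^R_l$ to vanish on this block), rows $d{+}1$ through $d{+}k$ hold a ``running'' block $M_l$ whose $j$-th row will be $(\gL^l\psi_j)^\top$, and rows $d{+}k{+}1$ through $d{+}2k$ hold the ``accumulator'' $S_l$ whose $j$-th row will be $\bigl(\sum_{p=0}^l (-s)^p\gL^p\psi_j/p!\bigr)^\top$. The key computational fact is that whenever the first $d$ rows of $Z$ contain $B^\top$, setting $W^{Q\top}_l W^K_l$ equal to the projector onto those rows gives $Z^\top W^{Q\top}_l W^K_l Z = BB^\top = \gL$, so $\att_{W^V_l,W^Q_l,W^K_l}(Z) = W^V_l Z \gL$ is simply right-multiplication by $\gL$ followed by a linear reshuffle by $W^V_l$. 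I take $W^V_l$ to copy the running block into the attention output (a) at rows $d{+}1{:}d{+}k$ with coefficient $1$ and (b) at rows $d{+}k{+}1{:}d{+}2k$ with coefficient $(-s)^{l+1}/(l+1)!$, and I take $W^R_l$ to equal $-I$ on the (run, run) block, cancelling the residual copy of $M_l$. This yields exactly $M_{l+1} = M_l\gL$ and $S_{l+1} = S_l + \tfrac{(-s)^{l+1}}{(l+1)!}M_{l+1}$. The missing $k{=}0$ term of the series is inserted at layer $0$ by one extra block in $W^R_0$ that copies the initial running block $\Psi^\top$ into the accumulator.

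What remains is the tail bound
\[
\|e^{-s\gL}\psi_i - S_L\psi_i\|_2 \;\leq\; \sum_{k=L+1}^\infty \frac{(s\lambda_{\max})^k}{k!}\,\|\psi_i\|_2.
\]
Writing $M := s\lambda_{\max}$, I split the product $M^k/k! = \prod_{j=1}^k (M/j)$ at $j=2M$: every factor with $j > 2M$ is at most $1/2$, so for $k \geq 2M$, $M^k/k! \leq (M^{2M}/(2M)!)\,2^{-(k-2M)}$; Stirling then gives $M^{2M}/(2M)! \leq (e/2)^{2M}$, hence $M^k/k! \leq e^{2M}\,2^{-k}$. Summing the geometric tail yields $\sum_{k>L} M^k/k! \leq e^{2M}\,2^{-L}$, and since $e^2 < 8$ gives $e^{2M} \leq 2^{3M} \leq 2^{8M}$ (using the hypothesis $L \geq 8s\lambda_{\max} \geq 2M$ for the split to be valid), the claimed bound $2^{-L+8s\lambda_{\max}+1}$ follows with room to spare.

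The construction above is essentially a restatement of Lemma \ref{l:L_sqrt}'s power-series template with layer-dependent coefficients, so I do not expect the mechanical verification to be difficult. The one conceptual point worth flagging---and the source of the $8s\lambda_{\max}$ offset in the hypothesis---is that partial Taylor sums can swell to size $e^{s\lambda_{\max}}$ before cancellation produces the much smaller target $e^{-s\lambda_{\max}}$, so the error bound is vacuous until $L \gtrsim s\lambda_{\max}$. This warm-up is intrinsic to direct Taylor truncation, and is precisely what the multiplicative-expansion construction of Lemma \ref{l:heat_fast} is designed to sidestep.
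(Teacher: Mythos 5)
Your proposal is correct and takes essentially the same route as the paper: both implement the truncated Taylor series of $e^{-s\gL}$ by choosing ${W^Q}^\top W^K$ to project onto the incidence block so that attention right-multiplies by $\gL = BB^\top$, maintaining a ``running'' block equal to $\gL^l\Psi$ and an accumulator block that sums the scaled powers, and then bounding the tail by a Stirling estimate plus a geometric sum to arrive at $2^{-L+8s\lambda_{\max}+1}$. The only cosmetic differences are that you route the accumulator increment through $W^V$/attention (picking up the factor $\gL$) while the paper routes it through $W^R$ (using $\alpha_l = (-s)^l/l!$ directly on the current running block, which also removes the need for a special layer-$0$ insertion), and your tail estimate is phrased slightly more explicitly via the product split at $j=2M$; neither changes the substance or the final bound.
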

\textbf{Discussion.} We defer the proof of Lemma \ref{l:heat} to Appendix \ref{app:heat}. To obtain $\epsilon$ approximation error, we need number of layers $L \geq {O\lrp{\log(1/\epsilon) + s\lambda_{\max}}}$. As with the preceding lemmas, the flexibility of choosing any number of $\psi_i$'s enables the Transformer to learn a low-dimensional projection of the heat kernel. The dependence on $s\lambda_{\max}$ is fundamental, stemming from the fact that the truncation error of the power series of $e^{-s\gL}$ begins to shrink only after $O(s\lambda_{\max})$ the first terms. In Lemma \ref{l:heat_fast} in the next section, we weaken this dependence from $s\lambda_{\max}$ to $\log (s\lambda_{\max})$. We empirically validate Lemma \ref{l:heat} in Section \ref{ss:experiments_poly}.

\subsection{Implementing Multiplicative Polynomial Expansion}
\label{ss:fast_variants}
We present two alternative Transformer constructions that can achieve vastly higher accuracy than the ones in preceding sections: Lemma \ref{l:electric_flow_fast} computes an $\epsilon$-accurate electric flow in exponentially fewer layers than Lemma \ref{l:electric_flow}. Lemma \ref{l:heat_fast} approximates the heat kernel with higher accuracy than Lemma \ref{l:heat} when the number of layers is small. The key idea in both Lemmas \ref{l:electric_flow_fast} and \ref{l:heat_fast} is to use the Transformer to implement a multiplicative polynomial; this in turn \emph{makes key use of the self-attention module}.

The setup for Lemmas \ref{l:electric_flow_fast} and \ref{l:heat_fast} differs in two ways from that presented in Section \ref{ss:poly_setup}. First, the input to layer $l$, $Z_l$, are now in $\R^{3n\times n}$, instead of $\R^{(d+2k)\times n}$. When the graph $\gG$ is sparse and the number of demands/queries $k$ is small, the constructions in Lemma \ref{l:electric_flow} and \ref{l:heat} may use considerably less memory. This difference is \emph{fundamental}, due to the storage required for raising matrix powers. Second, the input is also different; in particular, information about the graph is provided via $\gL$ as part of the input $Z_0$, as opposed to via the incidence matrix $B^\top$ as done in Section \ref{ss:poly_setup}. This difference is \emph{not fundamental}: one can compute $\gL = B^\top B$ from $B$ in a single attention layer; in the spirit of brevity, we omit this step. We begin with the faster construction for electric flow:
\begin{lemma}
    \label{l:electric_flow_fast}
    Let $\delta > 0$. Let $Z_0^\top := \bmat{(\hat{I}_{n\times n} - \delta \gL), & I_{n\times n}, & \delta I_{n\times n}}$. Then there exist a choice of $W^V, W^Q, W^K, W^R$ for a $L$-layer Transformer \eqref{e:z_dynamics} such that for any graph with Laplacian with smallest non-trivial eigenvalue $\lambda_{\min}$ and largest eigenvalue $\lambda_{\max} \leq 1/\delta$,
    \begin{align*}
        \lrn{\lrb{Z_{L}}_{2n+1...3n} - \gL^{\dagger}}_2 \leq \frac{1}{\lambda_{\min}}\exp\lrp{- \delta 2^{L} \lambda_{\min}}.
    \end{align*}
\end{lemma}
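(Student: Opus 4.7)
The plan is to combine the Neumann-series identity
\[
\gL^\dagger \;=\; \delta \sum_{k=0}^{\infty} A^k, \qquad A := \hat{I}_{n\times n} - \delta\gL,
\]
(valid on $\mathrm{span}(\gL)$, with both sides annihilating $\vec{1}$ since $\hat{I}_{n\times n}\vec{1} = 0 = \gL\vec{1}$), with the telescoping factorization
\[
\sum_{k=0}^{2^L - 1} A^k \;=\; \prod_{j=0}^{L-1}\bigl(I + A^{2^j}\bigr).
\]
Together these reduce an $\epsilon$-accurate computation of $\gL^\dagger$ to only $L \approx \log\log(1/\epsilon)$ matrix squarings and $L$ multiplications, and the $Z Z^\top$ shape of linear attention is exactly the primitive needed to perform both inside a single layer.

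To carry this out, I would use the \emph{same} weights $W^V, W^Q, W^K, W^R$ at every layer and maintain the invariant
\[
Z_l^\top \;=\; \bmat{A^{2^l}, & I_{n\times n}, & D_l}, \qquad D_l \;:=\; \delta \prod_{j=0}^{l-1}\bigl(I + A^{2^j}\bigr),
\]
which matches the given $Z_0$ under the empty-product convention $D_0 = \delta I_{n\times n}$. Set $W^V = I_{3n\times 3n}$ and pick $W^Q, W^K$ so that $(W^Q)^\top W^K$ is the $3n\times 3n$ block matrix with $I_{n\times n}$ in block-position $(1,2)$ and zero elsewhere. Then $(W^Q)^\top W^K Z_l$ moves block $2$ of $Z_l$ (which is $I_{n\times n}$) into block-position $1$, so $Z_l^\top (W^Q)^\top W^K Z_l = A^{2^l}$, and
\[
\att_{W^V,W^Q,W^K}(Z_l) \;=\; W^V Z_l \cdot A^{2^l} \;=\; \bmat{A^{2^{l+1}} \\ A^{2^l} \\ D_l\, A^{2^l}}.
\]
Block $1$ now carries the desired square and block $3$ gains the desired extra factor, but block $2$ carries a spurious $A^{2^l}$ and block $1$'s skip connection still contributes the stale $A^{2^l}$. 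Choosing $W^R$ to be $-I_{n\times n}$ in block-positions $(1,1)$ and $(2,1)$ and zero elsewhere cancels both contributions, yielding $Z_{l+1}^\top = \bmat{A^{2^{l+1}}, & I_{n\times n}, & D_l(I + A^{2^l})}$ and preserving the invariant.

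After $L$ layers, $\lrb{Z_L}_{2n+1 \ldots 3n} = D_L = \delta \sum_{k=0}^{2^L-1} A^k$. The hypothesis $\delta\lambda_{\max}\le 1$, combined with $\hat{I}_{n\times n}$ commuting with $\gL$ and acting as the identity on $\mathrm{span}(\gL)$, gives $\|A\|_2 \le 1 - \delta\lambda_{\min}$ on $\mathrm{span}(\gL)$ together with $A\vec{1} = 0$. Therefore the tail error factors as
\[
\gL^\dagger - D_L \;=\; \delta \sum_{k=2^L}^{\infty} A^k \;=\; A^{2^L}\,\gL^\dagger,
\]
whose spectral norm is bounded by $(1 - \delta\lambda_{\min})^{2^L}/\lambda_{\min} \le \exp(-\delta 2^L \lambda_{\min})/\lambda_{\min}$, exactly the stated bound.

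I expect the main obstacle to be the bookkeeping needed to verify the invariant is preserved at every layer, especially that $W^R$ cancels the two junk contributions in blocks $1$ and $2$ without disturbing block $3$. Everything else is routine: the telescoping identity is elementary, the Neumann series is unrolled Jacobi iteration, and the spectral bound on $A$ follows from $\hat{I}_{n\times n}$ commuting with $\gL$. The conceptual point worth highlighting is that the bilinear $Z Z^\top$ structure of linear attention is exactly the mechanism that lets a single layer \emph{simultaneously} square block $1$ and right-multiply block $3$, which is what upgrades the $\log(1/\epsilon)$ depth of Lemma~\ref{l:electric_flow} to the $\log\log(1/\epsilon)$ depth here.
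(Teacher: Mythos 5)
Your proposal is correct and follows essentially the same route as the paper: maintain the invariant $Z_l^\top = [A^{2^l},\ I,\ \delta\prod_{j<l}(I+A^{2^j})]$ with $A = \hat{I}-\delta\gL$, use the self-similarity matrix $Z_l^\top (W^Q)^\top W^K Z_l = A^{2^l}$ so that a single attention pass simultaneously squares the first block and multiplies the third, then invoke the telescoping identity $\prod_{j=0}^{L-1}(I+A^{2^j}) = \sum_{k=0}^{2^L-1}A^k$ and the Neumann series for $\gL^\dagger$ to get the doubly exponential rate. The only differences are cosmetic — you take $W^V=I$ and absorb the resulting extra $A^{2^l}$ in block 2 with an additional $-I$ in $W^R$, whereas the paper's $W^V$ selects blocks directly and cancels only in block 1 — and your bookkeeping is actually a touch cleaner (your partial product correctly runs to $2^L-1$, avoiding a small indexing slip in the paper's displayed identity).
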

\textbf{Discussion.} Lemma \ref{l:electric_flow_fast} shows that the Transformer can compute an $\epsilon$-approximation to $\gL^{\dagger}$ (which is sufficient but not necessary for solving arbitrary electric flow demands) using $\log \log (1/\epsilon)$ layers. This is much fewer than the $\log (1/\epsilon)$ layers required in Lemma \ref{l:electric_flow}. The key idea in the proof is to implement a multiplicative polynomial expansion for $\gL^{\dagger}$. We defer the proof to Appendix \ref{app:poly_fast}.

Next, we show the alternate construction for computing the heat kernel:
\begin{lemma}[Fast Heat Kernel]
    \label{l:heat_fast}
    Let $s>0$. Let $L$ be the number of Transformer layers. Let $Z_0^\top := (I_{n\times n} - 3^{-L} s\gL )$. Then there exist a choice of $W^V, W^Q, W^K, W^R$ such that for any graph whose Laplacian $\gL$ satisfies $s \lambda_{\max}\leq 3^{L}$, 
    \begin{align*}
        \lrn{Z_L - \exp(-s \gL)}_2 \leq 3^{-L+1}{s^2 \lambda_{\max}^2}.
    \end{align*}
\end{lemma}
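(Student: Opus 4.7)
My plan is to exploit the classical identity $e^{-x}=\lim_{n\to\infty}(1-x/n)^n$ applied at $n=3^L$, together with the observation that a single linear-attention layer can cube a symmetric matrix. Specifically, I set $W^V_l=W^Q_l=W^K_l=I_{n\times n}$ and $W^R_l=-I_{n\times n}$ for every layer $l$. Then $\att(Z_l)=W^V Z_l Z_l^\top W^{Q\top} W^K Z_l = Z_l Z_l^\top Z_l$, and the update \eqref{e:z_dynamics} reduces to $Z_{l+1}=Z_l+Z_lZ_l^\top Z_l - Z_l = Z_l Z_l^\top Z_l$. Since $Z_0 = I_{n\times n} - 3^{-L}s\gL$ is symmetric and cubing preserves symmetry, a straightforward induction on $l$ yields $Z_{l+1}=Z_l^3$, and hence $Z_L = Z_0^{3^L} = (I_{n\times n}-3^{-L}s\gL)^{3^L}$. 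Note that $L$ layers therefore produce a polynomial of degree $3^L$, which is the source of the exponential speed-up over the additive power series used in Lemma~\ref{l:heat}.

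It then remains to bound $\bigl\|(I-3^{-L}s\gL)^{3^L}-e^{-s\gL}\bigr\|_2$. Since $\gL$ is symmetric, I diagonalize $\gL=U\Lambda U^\top$ with eigenvalues $0\le\lambda_i\le\lambda_{\max}$, reducing the spectral norm bound to the scalar estimate
\begin{equation*}
\bigl|(1-x/n)^n - e^{-x}\bigr| \;\le\; 3\,\frac{x^2}{n}, \qquad 0\le x\le n,\; n\ge 1,
\end{equation*}
applied with $n=3^L$ and $x=s\lambda_i\le s\lambda_{\max}\le 3^L$ (the latter being the standing hypothesis of the lemma). Taking the supremum over $i$ then gives exactly $3\,s^2\lambda_{\max}^2/3^L = 3^{-L+1}s^2\lambda_{\max}^2$, which is the claimed bound.

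To prove the scalar estimate I split into two regimes. For $x\le n/2$, I write $(1-x/n)^n=e^{-x}e^{R(x)}$ with $R(x)=n\log(1-x/n)+x=-n\sum_{k\ge 2}(x/n)^k/k\le 0$, and use $|e^R-1|\le|R|$ for $R\le 0$, together with the geometric-series bound $|R(x)|\le (x^2/n)/(1-x/n)\le 2x^2/n$. For $n/2<x\le n$, both $e^{-x}$ and $(1-x/n)^n$ lie in $[0,e^{-n/2}]$, so their difference is at most $e^{-n/2}$; meanwhile $3x^2/n\ge 3n/4$, and an elementary check (worst case at $n=1$: $e^{-1/2}\approx 0.61<0.75$) confirms $e^{-n/2}\le 3n/4$ for all $n\ge 1$.

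The main technical nuisance I anticipate is precisely this second case: the naive Taylor-style estimate blows up as $x/n\to 1$, and one has to switch to the trivial bound on each term separately, verifying that the claimed constant $3$ is large enough to absorb this transition for all admissible $n=3^L$. Apart from this, the argument is a clean combination of the attention-as-cubing construction with a one-line spectral reduction.
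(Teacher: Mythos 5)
Your Transformer construction (cubing via $W^V_l={W^Q_l}^\top W^K_l=I$, $W^R_l=-I$, giving $Z_{l+1}=Z_l^3$ and hence $Z_L=(I-3^{-L}s\gL)^{3^L}$) is exactly the paper's. Where you diverge is in bounding $\lVert Z_L - e^{-s\gL}\rVert_2$. The paper argues via a Loewner-order sandwich
$(I - s\gL/C)^C \prec e^{-s\gL} \prec (I - s\gL/C)^C (I - s^2\gL^2/C)^{-1} \prec (I - s\gL/C)^C(I + 2 s^2\gL^2/C)$
with $C=3^L$, invoking a condition ``$s\lambda_{\max} < 1/C$'' that does not match the lemma's actual hypothesis $s\lambda_{\max}\le 3^L$ and is presumably a typo; as written, the middle and right inequalities require $s\lambda_{\max}$ to be well below $\sqrt{C}$, not merely $\le C$. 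You instead diagonalize $\gL$ and reduce to the scalar inequality $|(1-x/n)^n - e^{-x}| \le 3x^2/n$ on $0\le x\le n$, which you prove cleanly in two regimes: a Taylor-remainder argument for $x\le n/2$, and a crude-but-sufficient bound for $n/2< x\le n$ where you observe both quantities are already below $e^{-n/2}\le 3n/4$. Your route is more elementary and, importantly, actually covers the full hypothesis range $s\lambda_{\max}\le 3^L$ that the lemma advertises, whereas the paper's Loewner chain silently restricts it; the paper's approach, when it applies, gives the sharper constant $2$ instead of your $3$, but both fit under the claimed bound $3^{-L+1}s^2\lambda_{\max}^2$. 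Your proof is correct and, if anything, the more defensible of the two.
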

\textbf{Discussion.} Lemma \ref{l:heat_fast} shows that the Transformer can compute an $\epsilon$-approximation to $e^{-s\gL}$ using $O(\log(1/\epsilon) + \log (s\lambda_{\max}))$ layers. The $\epsilon$ dependence is the same as Lemma \ref{l:heat}, but the $\log(s\lambda_{\max})$ is an improvement. When the number of layers is small, Lemma \ref{l:heat_fast} gives a significantly more accurate approximation. The proof is based on the well-known approximation $e^x \approx (1+x/L)^{L}$. We defer proof details for Lemma \ref{l:heat_fast} to Appendix \ref{app:poly_fast}.

\vspace{-0.8em}
\subsection{Experiments}
\label{ss:experiments_poly}
\vspace{-0.8em}
In Figure \ref{f:electric_loss_against_layer}, we experimentally verify that the Transformer is capable of learning to solve the three objectives presented in Lemmas \ref{l:electric_flow}, \ref{l:L_sqrt} and \ref{l:heat}. The setup is as described in Section \ref{ss:poly_setup}, with the Transformer described in \eqref{e:z_dynamics} with $k=n$, but with the following important difference: the output of each layer is additionally normalized as follows: $\lrb{Z_l}_{1...n} \leftarrow \lrb{Z_l}_{1...n}/ \|\lrb{Z_l}_{1...n}\|_F$, $\lrb{Z_l}_{n+1...n+2k}\leftarrow \lrb{Z_l}_{n+1...n+2k}/ \|\lrb{Z_l}_{n+1...n+2k}\|_F$, where $\|\|_F$ is the Frobenius norm. Without this normalization, training the Transformer becomes very difficult beyond 5 layers.

We consider two kinds of random graphs: fully-connected graphs ($n=10, d=45$) and Circular Skip Links (CSL) graphs ($n=10, d=20$). Edge resistances are randomly sampled. We provide details on the sampling distributions in Appendix \ref{app:synthetic_experiments}. For each input graph $\gG$, we sample $n$ demands $\psi_1...\psi_n \in \Re^n$ independently from the unit sphere. Let $\Psi = \lrb{\psi_1...\psi_n}$. The input to the Transformer is $\bmat{B^\top, & \Psi^\top, & 0_{n\times n}}$, consistent with the setup described in Section \ref{ss:poly_setup}. The training/test loss is given by
$\text{loss}_{U} := \E\lrb{\frac{1}{n}\sum_{i=1}^n \lrn{ \frac{\lrb{Z_l}_{d+n+i}^\top}{\|\lrb{Z_l}_{d+n+i}\|_2} - \frac{U \psi_i}{\|U \psi_i\|_2}}_2^2},$
where $U \in \lrbb{\gL^{\dagger}, \sqrt{\gL^{\dagger}}, e^{-0.5 \gL}}$. We learn the correct solutions only up to scaling, because we need to normalize $Z_l$ per-layer. The expectation is over randomness in sampling $\gL$ and $\Psi$. We plot the log of the respective losses against number of layers after training has converged. As can be seen, in each plot, and for both types of architectures, the loss appears to decrease exponentially with the number of layers.
\begin{figure}[h]
\begin{tabular}{ccc}
\includegraphics[width=0.3\columnwidth]{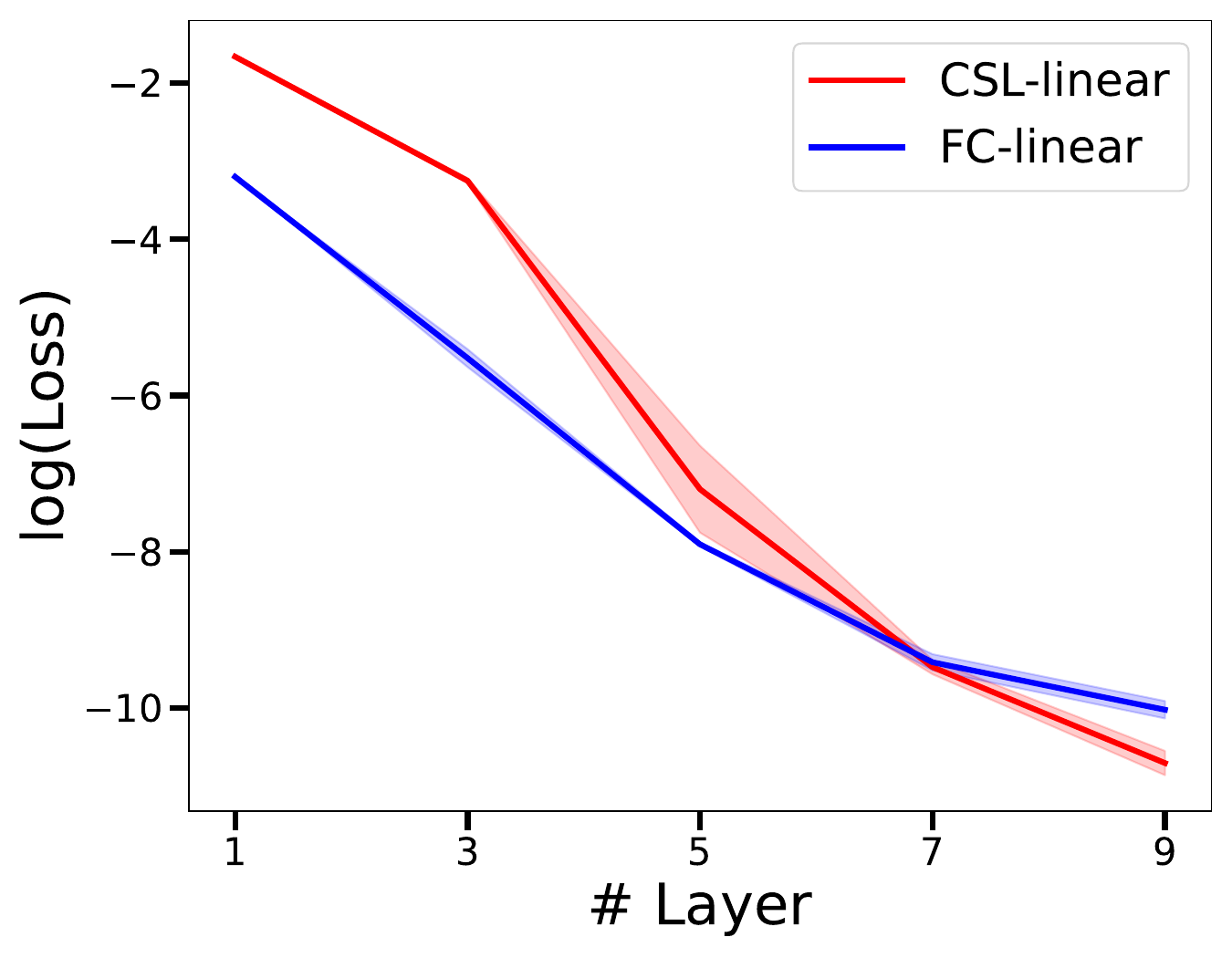}&
\includegraphics[width=0.3\columnwidth]{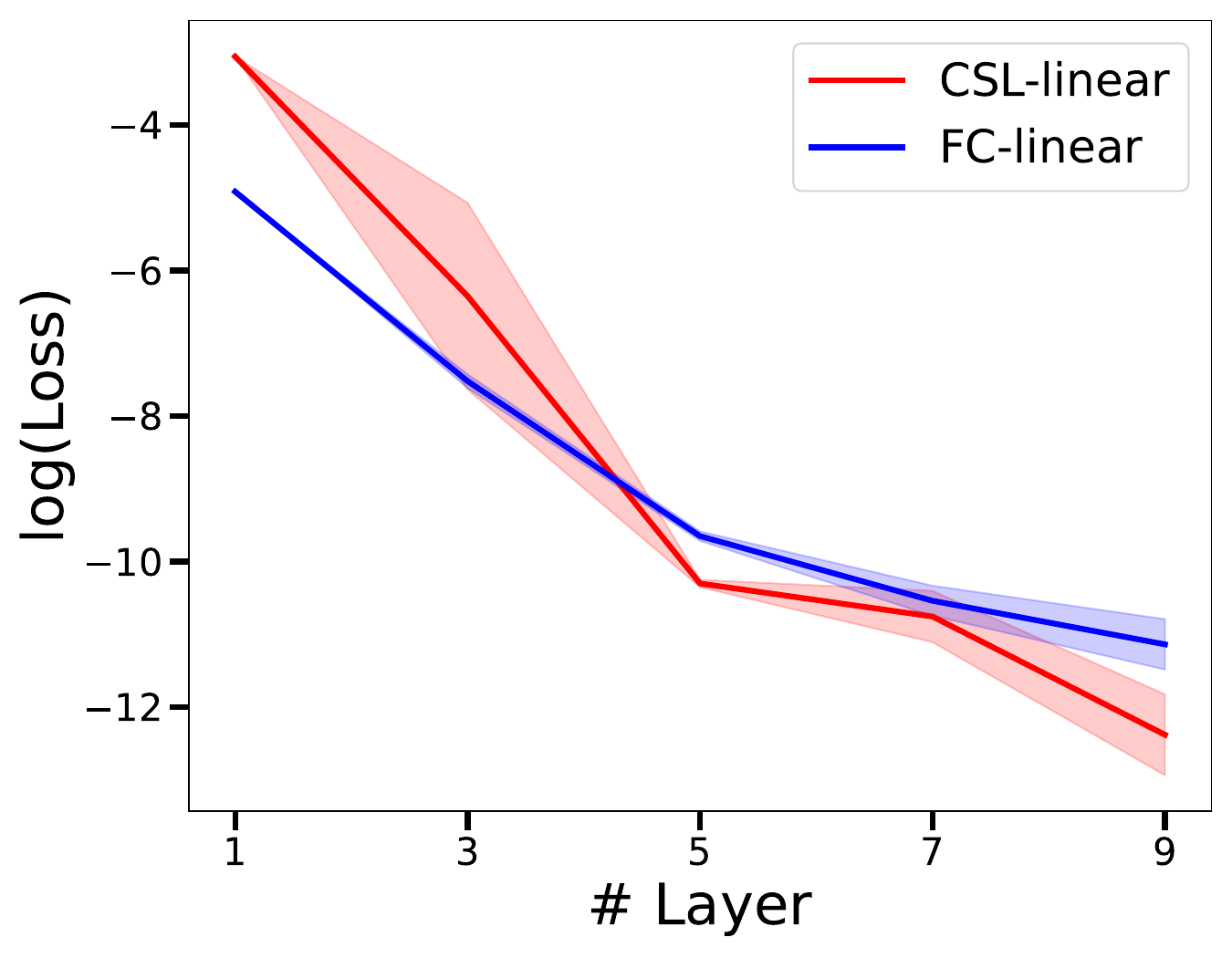}&
\includegraphics[width=0.3\columnwidth]{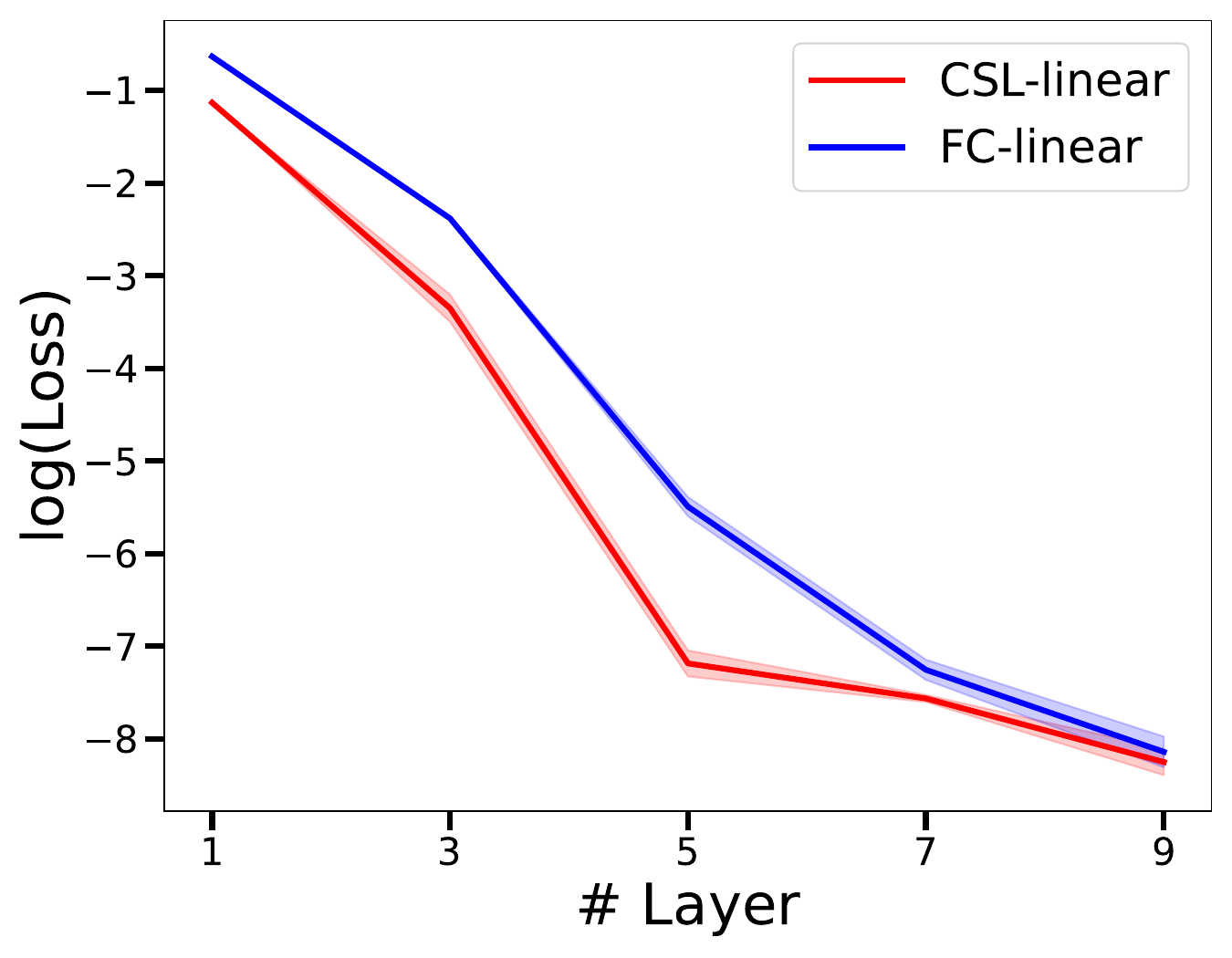}\\
\text{\quad \ref{f:electric_loss_against_layer}(a) $\text{loss}_{\gL^{\dagger}}$} (electric) & \text{\quad \ref{f:electric_loss_against_layer}(b) $loss_{\sqrt{\gL^{\dagger}}}$ (resistive)} &  \text{\quad \ref{f:electric_loss_against_layer}(c) $\text{loss}_{\exp\lrp{-0.5 \gL}}$ (heat)}
\end{tabular}
\vspace{-0.8em}
\caption{$\text{loss}_U$ against number of layers at convergence for $U \in \lrbb{\gL^{\dagger}, \sqrt{\gL^{\dagger}}, e^{-0.5 \gL}}$.}
\label{f:electric_loss_against_layer}
\end{figure}
\vspace{-2em}

\section{Transformers can implement Subspace Iteration to compute Eigenvectors}
\label{s:eigenvector}
\vspace{-0.8em}
We present a Transformer construction for computing the eigenvectors of the graph Laplacian $\gL$. The Laplacian eigenvectors are commonly used for positional encoding (see Section \ref{ss:related_work}). In particular, the eigenvector for the smallest non-trivial eigenvalue has been applied with great success in graph segmentation \citep{shi1997normalized} and clustering \citep{buhler2009spectral}. Our construction is based on the \emph{subspace iteration} algorithm (Algorithm \ref{alg:subspace_iteration}), aka \emph{block power method}---see \citep{bentbib2015block}. The output $\Phi$ of Algorithm \ref{alg:subspace_iteration} converges to the top-$k$ eigenvectors of $\gL$. In Corollary \ref{c:bottom_k}, we present a modified construction that instead computes the \emph{bottom-$k$} eigenvectors of $\gL$.
\vspace{-0.8em}
\begin{algorithm}\captionsetup{labelfont={sc,bf}, labelsep=endash}
  \caption{Subspace Iteration}
  \label{alg:subspace_iteration}
  \begin{algorithmic}
    \State $\Phi_0 \in \R^{n\times k}$ has full column rank
    \While{not converged}
        \State $\hat{\Phi}_{i+1} \gets \gL \Phi_i$
        \State $\Phi_{i+1} \gets QR(\hat{\Phi}_{i+1})$ \Comment{\text{QR($\Phi$) returns the Q matrix in the QR decomposition of $\Phi$.}}
    \EndWhile
    \end{algorithmic}
\end{algorithm}

For the purposes of this section, we consider a variant of the Transformer defined in \eqref{e:z_dynamics}. 
\begin{align*}
    & \hat{Z}_{l+1} := Z_l + \att_{W^V_l, W^Q_l, W^K_l} \lrp{Z_l} + {W^R} Z_l\\
    & {Z}_{l+1} = \text{normalize}(\hat{Z}_{l+1}),
    \numberthis 
    \label{e:transformer_dynamics_ev_norm}
\end{align*} 
where $\text{normalize}(Z)$ applies row-wise normalization: $\lrb{\text{normalize}(Z)}_i \leftarrow \lrb{Z}_i/\|\lrb{Z}_i\|_2$ for $i=d+1...d+k$. We use $B_l\in \R^{n\times d}$ and $\Phi_l \in \R^{n \times k}$ to denote refer to specific columns of $Z_l^\top$, defined as $Z_l^\top =: \bmat{B_l^T, & \Phi_l^\top}$. The notation is chosen as $\Phi_l$ in $Z_l$ in \eqref{e:transformer_dynamics_ev_norm} corresponds to $\Phi_l$ in Algorithm \ref{alg:subspace_iteration}. We initialize $B_0 = B$ and let $\Phi_0$ be some arbitrary matrix with full column rank. $\Phi_1$ becomes orthogonal after the first QR step of Algorithm \ref{alg:subspace_iteration}.


\vspace{-0.8em}
\begin{lemma}[Subspace Iteration for Finding Top $k$ Eigenvectors]
    \label{l:power_method}
    Consider the Transformer defined in \eqref{e:transformer_dynamics_ev_norm}. There exists a choice of $W^V, W^Q, W^K, W^R$ such that $k+1$ layers of the Transformer implements one iteration of Algorithm \ref{alg:subspace_iteration}. Consequently, the output $\Phi_L$ of a $L$-layer Transformer approximates the top-$k$ eigenvectors of $\gL$ to the same accuracy as $L/(k+1)$ steps of Algorithm \ref{alg:subspace_iteration}.
\end{lemma}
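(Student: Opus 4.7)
The plan is to allocate one layer for the matrix multiplication $\hat\Phi_{i+1} \gets \gL \Phi_i$ and the remaining $k$ layers for the QR factorization, implemented as modified Gram--Schmidt with one pivot column processed per layer. Throughout, the first $d$ rows of $Z_l$ (which initially carry $B^\top$) will be preserved verbatim via zero attention contribution on those rows and $W^R$ vanishing there, so that $B$, and hence $\gL = B B^\top$, remain accessible to every layer.

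For the matrix-multiplication layer, set $W^Q = W^K = \bmat{I_d & 0 \\ 0 & 0}$, which yields $(W^Q Z_l)^\top (W^K Z_l) = \sum_{j=1}^{d} B_{\cdot j} B_{\cdot j}^\top = \gL$. Choose $W^V$ to copy rows $d+1, \ldots, d+k$ of $Z_l$ back to the same rows of $V = W^V Z_l$, so that the attention output $V \lrp{(W^Q Z_l)^\top (W^K Z_l)}$ places $\Phi^\top \gL$ in rows $d+1, \ldots, d+k$. Setting the lower-right block of $W^R$ to $-I_k$ cancels the skip copy of $\Phi^\top$ there, leaving those rows equal to $(\gL \Phi)^\top$. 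The per-layer normalization then rescales each column of $\gL \Phi$ to unit norm, which is harmless because the $Q$-factor of a QR decomposition is invariant under positive column scaling of the input.

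For the $l$-th of the $k$ QR layers ($l = 1, \ldots, k$), use row $d+l$ as the pivot. Choose $W^Q, W^K$ so that $(W^Q Z)^\top (W^K Z) = \phi_l \phi_l^\top$ (for instance, each being the rank-one selector with a single $1$ in position $(d+l, d+l)$), where $\phi_l$ denotes the $l$-th column of the current $\Phi$. Let $W^V$ be the diagonal matrix with $-1$ in positions $d+l+1, \ldots, d+k$ and zeros elsewhere, and set $W^R = 0$. Then the combined skip-plus-attention step updates $\phi_i^\top \leftarrow \phi_i^\top - \langle \phi_i, \phi_l \rangle \phi_l^\top$ for every $i > l$; the projection coefficient is exactly correct because $\phi_l$ was unit-normalized at the end of the previous layer. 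The built-in normalization then rescales each $\phi_i$ to unit norm. By induction on $l$, after the $l$-th QR layer the columns $\phi_1, \ldots, \phi_l$ form an orthonormal set spanning the same subspace as the first $l$ columns of $\gL \Phi$.

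The main obstacle is that the built-in normalization rescales all $k$ columns each layer, rather than only the freshly finished pivot as in textbook modified Gram--Schmidt. The resolution has two ingredients: first, rescaling a not-yet-processed column preserves its direction, so the inner products used by later pivots extract the correct components; second, once $\phi_l$ has served as pivot the relevant row of $W^V$ in every subsequent layer is zero, so $\phi_l$ is untouched by attention and $W^R$ and therefore keeps both its unit norm and its orthogonality to later pivots. After all $k+1$ layers the last $k$ rows of $Z$ thus match, up to column signs, the $Q$-factor of $QR(\gL \Phi)$, which is exactly one iteration of Algorithm \ref{alg:subspace_iteration}. The final claim is then immediate: an $L$-layer Transformer realizes $\lfloor L/(k+1) \rfloor$ subspace-iteration steps, and the approximation quality of $\Phi_L$ to the top-$k$ eigenvectors of $\gL$ inherits the standard convergence analysis of that classical algorithm.
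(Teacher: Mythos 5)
Your construction is correct and follows the same high-level plan as the paper: one attention layer to apply $\gL$ to $\Phi$, followed by $k$ attention layers that carry out Gram--Schmidt orthogonalization, with the $B$ block preserved throughout so that $\gL$ remains available. The QR layers differ in a way worth noting. You implement \emph{modified} Gram--Schmidt in forward pivot order: layer $l$ builds the rank-one similarity $\phi_l\phi_l^\top$ and uses the value mask to subtract the $\phi_l$-component from every $\phi_i$ with $i>l$. The paper's Lemma~\ref{l:single_index_orthogonalization} instead updates a single target column per layer, $\phi_i \leftarrow \phi_i - \sum_{j>i}\langle\phi_i,\phi_j\rangle\phi_j$, using the higher-rank similarity $\sum_{j>i}\phi_j\phi_j^\top$; for that update to produce an orthonormal set, the layers must be applied in decreasing column index $i=k,k-1,\dots,1$ so that the columns being projected against are already mutually orthonormal --- an ordering constraint the paper leaves implicit. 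Your variant sidesteps this entirely because each projection involves only the single, already-normalized pivot $\phi_l$. You also explicitly address the fact that the per-layer normalization in \eqref{e:transformer_dynamics_ev_norm} rescales every column, not just the freshly finished one; your resolution (rescaling preserves direction of unfinished columns, and finished pivots are no longer touched by attention or $W^R$) is correct and fills a small gap the paper does not discuss. Both constructions realize one step of Algorithm~\ref{alg:subspace_iteration} in $k+1$ layers.
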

\vspace{-0.8em}
\textbf{Discussion.} We bound the Transformer's error by the error of Algorithm \ref{alg:subspace_iteration}, but do not provide an explicit convergence rate. This omission is because the convergence rate of Algorithm \ref{alg:subspace_iteration} itself is difficult to characterize, and has a complicated dependence on pairwise spectral gaps of $\Phi_0$. The high-level proof idea is to use self-attention to orthogonalize the columns of $\Phi_l$ (Lemma \ref{l:single_index_orthogonalization}. We defer the proof of Lemma \ref{l:power_method} to Appendix \ref{app:proof_eigenvector}. We experimentally validate Lemma \ref{l:power_method} in Section \ref{ss:experiment_power_method}.

The construction in Lemma \ref{l:power_method} explicitly requires $k$ layers to perform a QR decomposition of $\Phi$ plus 1 layer to multiply by $\gL$. The layer usage can be much more efficient in practice: First, multiple $\gL$-multiplications can take place before a single QR-factorization step. Second, the $k$ layers for performing QR decomposition can be implemented in a single layer with $k$ parallel heads. 

In graph applications, one is often interested in the \emph{bottom-$k$} eigenvectors of $\gL$. The following corollary shows that a minor modification of Lemma \ref{l:power_method} will instead compute the bottom $k$ eigenvectors.
\begin{corollary}[Subspace Iteration for Finding Bottom $k$ Eigenvectors]
    \label{c:bottom_k}
    Consider the same setup as Lemma \ref{l:power_method}. Let $\mu>0$ be some constant. There exists a construction for a $L$-layer Transformer (similar to Lemma \ref{l:power_method}), which implements Algorithm \ref{alg:subspace_iteration} with $\gL$ replaced by $\mu I - \gL$, and $\Phi_L$ approximates the bottom $k$ eigenvectors of $\gL$ if $\lambda_{\max} (\gL) \leq \mu$.
\end{corollary}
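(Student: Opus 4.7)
The plan is to reduce Corollary~\ref{c:bottom_k} to Lemma~\ref{l:power_method} by a simple spectral shift. Since $\mu \geq \lambda_{\max}(\gL)$, the matrix $M := \mu I - \gL$ is positive semidefinite, its eigenvectors are identical to those of $\gL$, and its eigenvalues are $\mu - \lambda_i$. The map $\lambda_i \mapsto \mu - \lambda_i$ is monotone decreasing, so the top-$k$ eigenvectors of $M$ coincide \emph{exactly} with the bottom-$k$ eigenvectors of $\gL$. It therefore suffices to exhibit a Transformer that runs Algorithm~\ref{alg:subspace_iteration} with $M$ in place of $\gL$, after which the convergence conclusion of Lemma~\ref{l:power_method} transfers verbatim.

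Next I would reuse the construction of Lemma~\ref{l:power_method} almost entirely. Among the $k+1$ layers that simulate one iteration, the $k$ layers performing QR orthogonalization of $\Phi_l$ are agnostic to which matrix has just been applied and can be left unchanged. The remaining layer realises $\Phi_l \mapsto \gL \Phi_l$ by using the attention block to contract the stored incidence data $B_l$ against $B_l^\top \Phi_l$; this is the only layer I would modify. Negating the relevant value weight (e.g.\ replacing $W^V$ by $-W^V$) flips the attention contribution to $-\gL \Phi_l$, and replacing the $\Phi$-block of $W^R$ by $(\mu-1) I$ (in place of whatever cancels the identity skip in the original construction) makes the total update on the $\Phi$-rows equal to
\[
\Phi_l + (\mu-1)\Phi_l - \gL \Phi_l \;=\; (\mu I - \gL)\Phi_l \;=\; M \Phi_l,
\]
as required. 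The row-wise normalization built into \eqref{e:transformer_dynamics_ev_norm} still acts on the $\Phi$-rows exactly as before and is harmless because subspace iteration is invariant under per-iterate rescaling.

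With this substitution, $k+1$ layers of the modified Transformer simulate one step of subspace iteration on $M$, so an $L$-layer network outputs $\Phi_L$ with the same accuracy as $L/(k+1)$ steps of Algorithm~\ref{alg:subspace_iteration} applied to $M$, which by standard subspace-iteration theory converges to the top-$k$ eigenspace of $M$, i.e.\ the bottom-$k$ eigenspace of $\gL$. The main thing to check carefully is the spectral-ordering hypothesis: if some $\lambda_i > \mu$ were permitted, then $\mu-\lambda_i$ would be negative and could dominate $\mu-\lambda_0,\mu-\lambda_1,\dots$ in magnitude, so subspace iteration on $M$ would no longer target the smallest eigenvalues of $\gL$. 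The assumption $\lambda_{\max}(\gL)\leq \mu$ is precisely what rules this out, and no quantitative estimate beyond that of Lemma~\ref{l:power_method} is needed.
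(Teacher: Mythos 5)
Your proof is correct and follows essentially the same route as the paper: reduce to Lemma~\ref{l:power_method} via the spectral shift $\gL \mapsto \mu I - \gL$, whose top-$k$ eigenvectors coincide with the bottom-$k$ eigenvectors of $\gL$ precisely because $\lambda_{\max}(\gL) \leq \mu$ keeps $\mu I - \gL$ positive semidefinite and the ordering monotone. You modify only the multiplication layer and leave the $k$ QR-orthogonalization layers untouched, which is exactly the paper's strategy.

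One detail is worth flagging, since you are more careful than the paper on the sign. You realize $\hat{\Phi}_{l+1} = (\mu I - \gL)\Phi_l$ by \emph{both} flipping the sign of the $\Phi$-block of $W^V$ (so that the attention term contributes $-\gL\Phi_l$) \emph{and} setting the $\Phi$-block of $W^R$ to $(\mu-1)I_{k\times k}$, which gives $\Phi_l + (\mu-1)\Phi_l - \gL\Phi_l = (\mu I - \gL)\Phi_l$. The paper's printed weight table instead keeps $W^V$ unchanged (with $+I_{k\times k}$ in the $\Phi$-block) and only swaps the $\Phi$-block of $W^R$ from $-I_{k\times k}$ to $(\mu-1)I_{k\times k}$. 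Plugging those into the layer update in \eqref{e:transformer_dynamics_ev_norm} yields $\hat{\Phi}_{l+1} = \Phi_l + \gL\Phi_l + (\mu-1)\Phi_l = (\mu I + \gL)\Phi_l$, not $(\mu I - \gL)\Phi_l$ as the paper asserts, so the paper's construction appears to have a dropped sign. Your fix — negating the $\Phi$-block of $W^V$ (equivalently one could negate the $d \times d$ block of $(W^Q)^\top W^K$ that picks out $BB^\top$) — is the one that makes the claimed identity hold, and the remainder of your argument (QR layers unchanged, normalization harmless by scale-invariance of subspace iteration) matches the paper.
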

We provide a short proof in Appendix \ref{app:proof_eigenvector}. The key idea is that a minor modification of the construction in Lemma \ref{l:power_method} will instead compute the bottom $k$ eigenvectors. Alternative to the construction in Corollary \ref{c:bottom_k}, one can also first compute $\gL^{\dagger}$ (via Lemma \ref{l:electric_flow_fast}), followed by subspace iteration for $\gL^{\dagger}$.

\subsection{Experiments for Lemma \ref{l:power_method}}
\label{ss:experiment_power_method}
We verify Lemma \ref{l:power_method} and Corollary \ref{c:bottom_k} experimentally by evaluating the ability of the Transformer \eqref{e:z_dynamics} to learn top-$k$ and bottom-$k$ eigenvectors. As in Section \ref{ss:experiments_poly}, we consider two kinds of random graphs with $n=10$ nodes: fully connected ($d=45$ edges) and CSL ($d=20$ edges); each edge is has a randomly sampled resistance; see Appendix \ref{app:synthetic_experiments} for details. For a graph $\gG$ with Laplacian $\gL$, let $\lambda_1 \leq \lambda_2 \leq ... \lambda_{10}$ denote its eigenvalues. Let $v_1,\dots,v_{10}$ denote its eigenvectors. $\lambda_1$ is always $0$ and $v_1$ is always $\vec{1}/\sqrt{n}$.

The Transformer architecture is as defined in Section \ref{e:transformer_dynamics_ev_norm}, with $k=n$. We increase the dimension of $\Phi_l$ to $(2n)\times n$, and thus the dimension of $Z_l$ to $(d+n+n) \times n$. We read out the last $n$ rows of $Z_L$ as output. The purpose of increasing the dimension is to make the architecture identical to the one used in the experiments in Section \ref{ss:experiments_poly}; the construction in Lemmas \ref{l:power_method} and Corollary \ref{c:bottom_k} extend to this setting by setting appropriate parameters to $0$. In addition, we also normalize $\lrb{Z_l}_{1...d}$ each layer by its Frobenius norm, i.e., $\lrb{Z_l}_{1...d}\leftarrow \lrb{Z_l}_{1...d}/\|\lrb{Z_l}_{1...d}\|_F$ (the proof of Lemma \ref{l:power_method} still holds as subspace iteration is scaling-invariant). The input to the Transformer is $Z_0^\top = \bmat{B, & \Phi_0}$, and we make $\Phi_0\in \R^{2n\times n}$ a trainable parameter along with $W^V, W^Q, W^K, W^R$. We define $\text{loss}_i := \E\lrb{\min\lrbb{\|\phi_{L,i} - v_i\|_2^2, \|\phi_{L,i} + v_i\|_2^2}}$, where $\phi_{L,i}$ is the $i^{th}$ row of $\Phi_L$ and expectation is taken with respect to randomness in sampling the graph; the $\min$ is required because eigenvectors are invariant to sign-flips. We train and evaluate the Transformer on two kinds of losses: $\text{loss}_{1-5}:= \frac{1}{5}\sum_{i=1}^5 \text{loss}_i$ and $\text{loss}_{1-10}:= \frac{1}{10}\sum_{i=1}^{10} \text{loss}_i$. We plot the results in Figure \ref{f:ev_loss_against_layer}, and summarize our findings below:
\begin{enumerate}[leftmargin=*]
    \item \ref{f:ev_loss_against_layer}(a) and \ref{f:ev_loss_against_layer}(d): Both $\text{loss}_{1-5}$ and $\text{loss}_{1-10}$ appear to decrease exponentially with the number of Transformer layers, for both FC and CSL graphs. This is consistent with Lemma \ref{l:power_method} and Corollary \ref{c:bottom_k}. 
    \item In Figures \ref{f:ev_loss_against_layer}(e) and \ref{f:ev_loss_against_layer}(f), when the Transformer is trained on $\text{loss}_{1-10}$, \textbf{larger eigenvectors are learned more accurately}, with the exception of $loss_{2}$. This is mostly consistent with our construction in Lemma \ref{l:power_method}, where the larger eigenvectors are computed more quickly. 
    
    In contrast, in Figures \ref{f:ev_loss_against_layer}(b) and \ref{f:ev_loss_against_layer}(c), when the Transformer is trained on $\text{loss}_{1-5}$, the \textbf{smaller eigenvectors are learned more accurately}. This is consistent with our construction in Corollary \ref{c:bottom_k}, where the smaller eigenvectors of $\gL$ are also the larger eigenvectors of $\alpha I - \gL$, and are thus computed more quickly. One expects to observe the same ordering if the Transformer is instead computing $\gL^{\dagger}$ first, followed by subspace iteration.
\end{enumerate}
We omit $\text{loss}_{1}$ from Figures \ref{f:ev_loss_against_layer}(b), \ref{f:ev_loss_against_layer}(c), \ref{f:ev_loss_against_layer}(e), \ref{f:ev_loss_against_layer}(f) because $v_1$ is a constant so $\text{loss}_1$ goes to $0$ extremely fast, making it hard to see the other lines.

\begin{figure}[h]
\begin{tabular}{ccc}
\includegraphics[width=0.33\columnwidth]{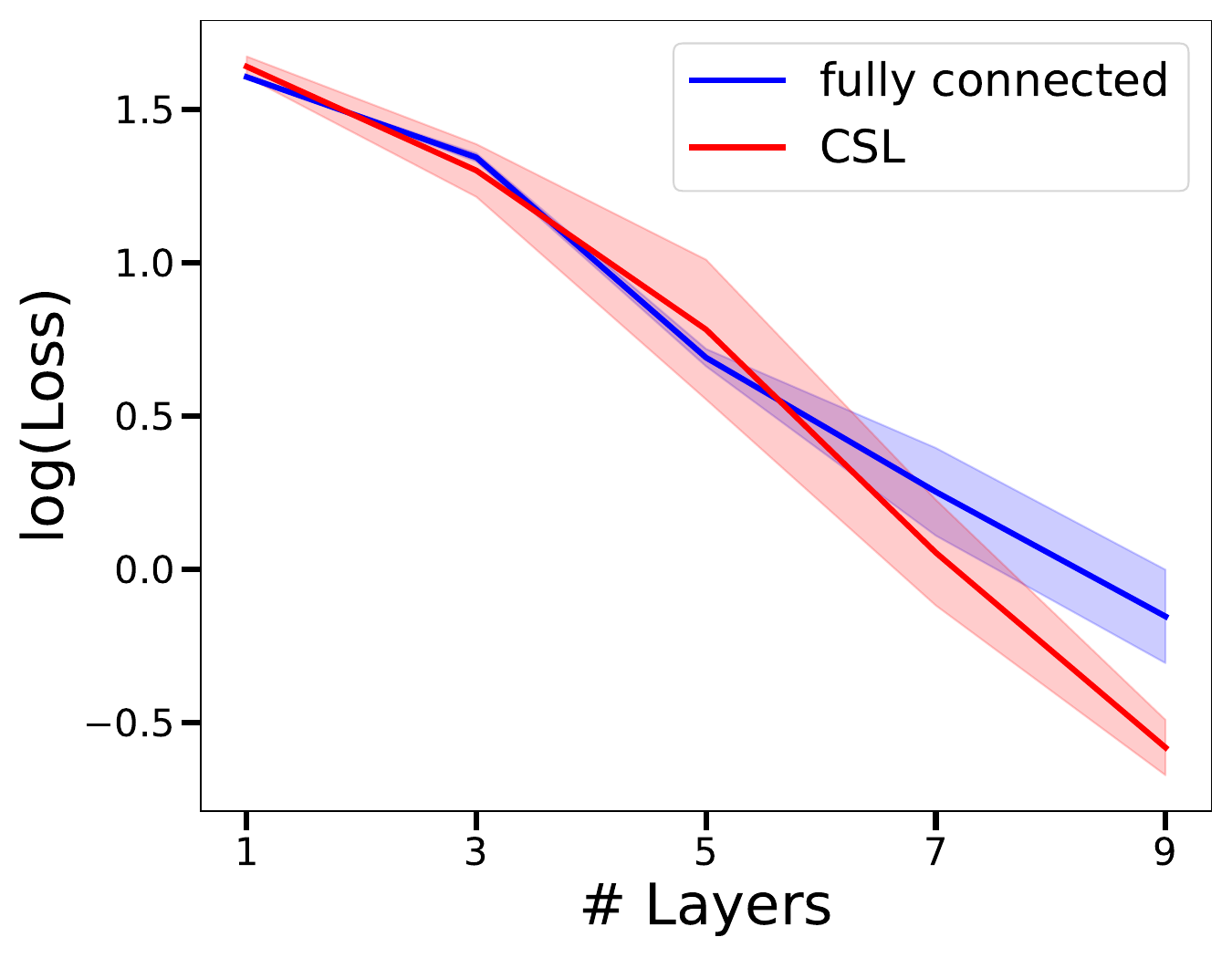}&
\includegraphics[width=0.33\columnwidth]{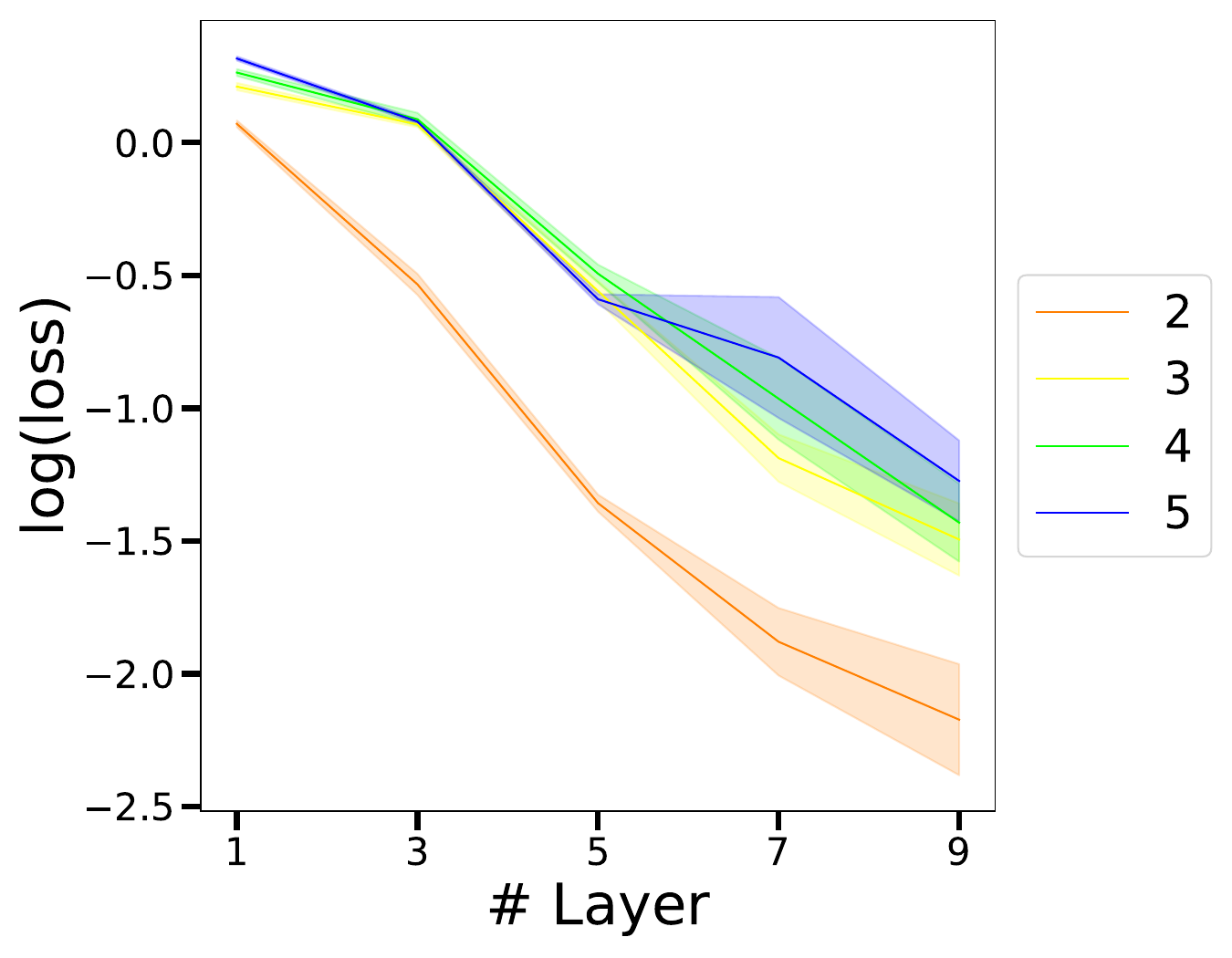}&
\includegraphics[width=0.33\columnwidth]{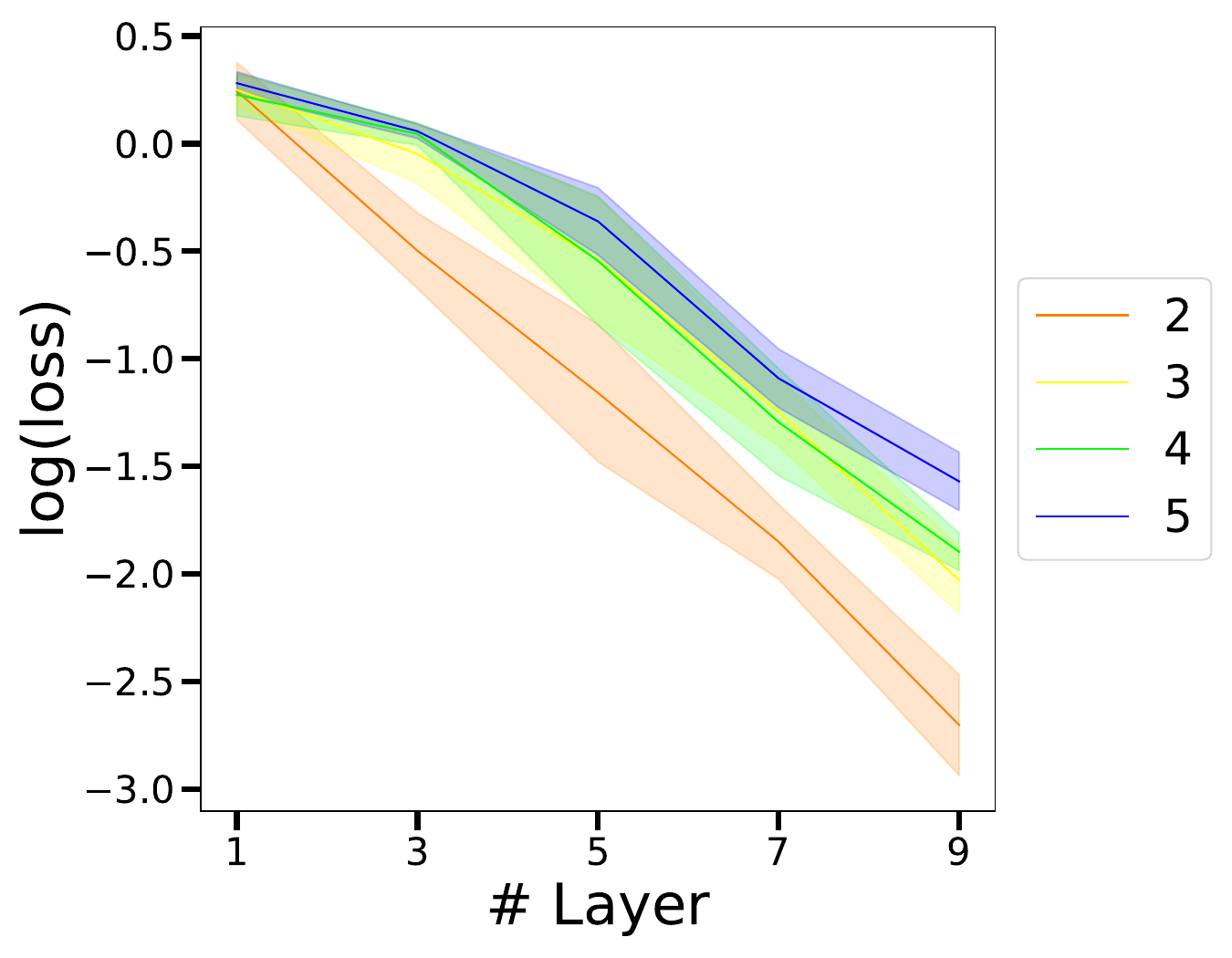}\\
\text{\quad \ref{f:ev_loss_against_layer}(a) $\text{loss}_{1-5}$}& 
\text{\ \  \ref{f:ev_loss_against_layer}(b) $\lrbb{\text{loss}_{i}}_{i=2...5}$, FC}&  \text{\quad \ref{f:ev_loss_against_layer}(c) $\lrbb{\text{loss}_{i}}_{i=2...5}$, CSL}\\
\includegraphics[width=0.33\columnwidth]{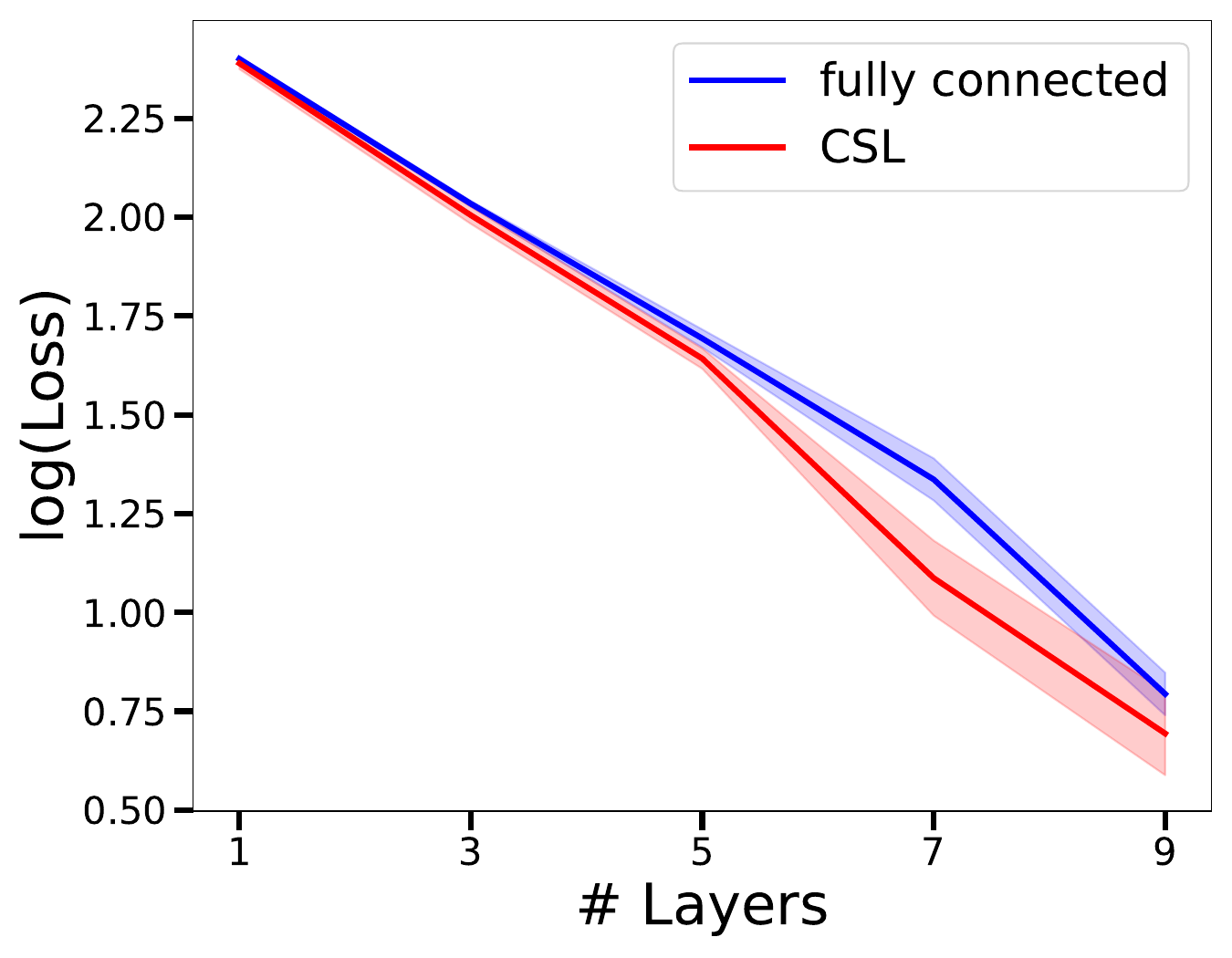}&
\includegraphics[width=0.33\columnwidth]{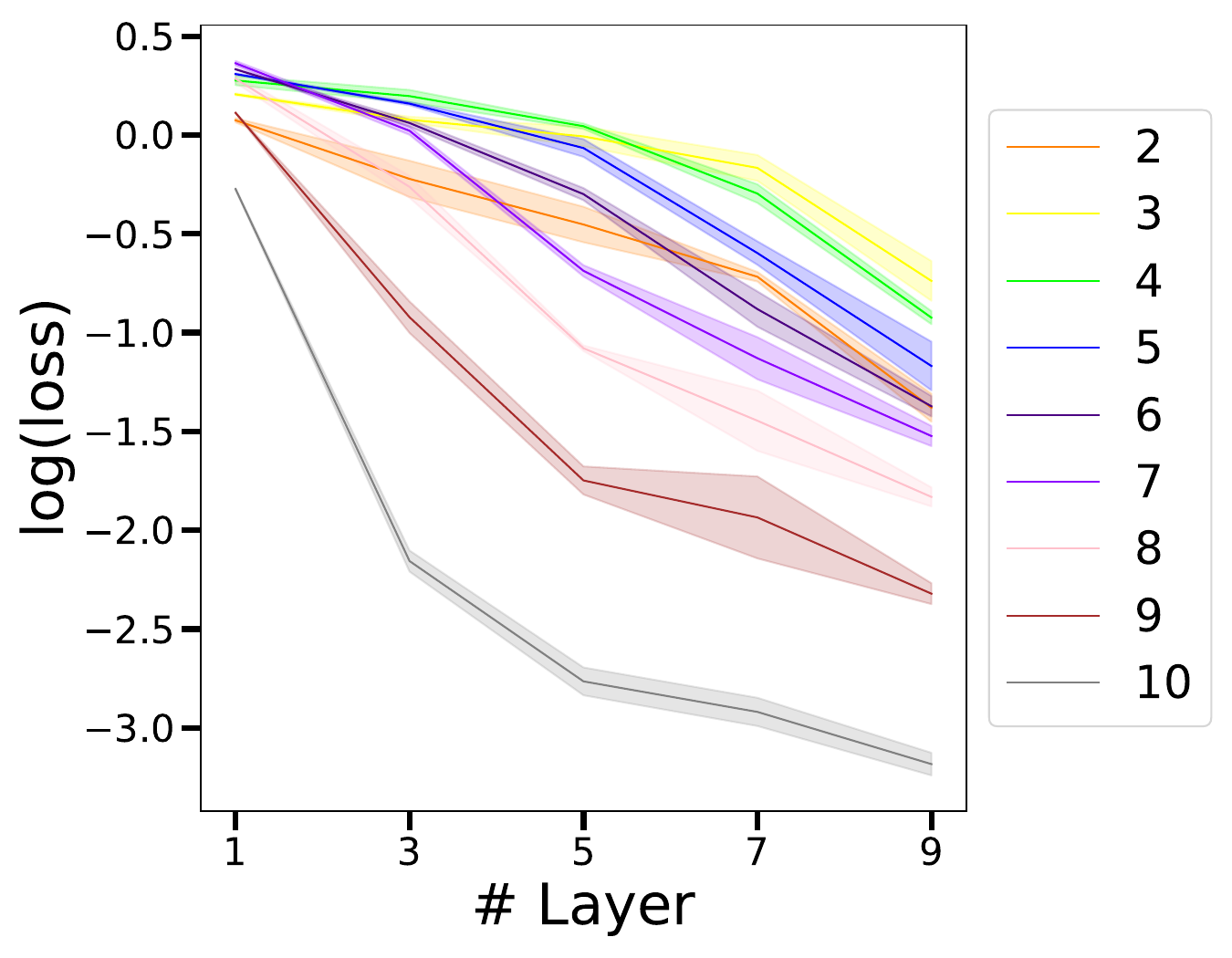}&
\includegraphics[width=0.33\columnwidth]{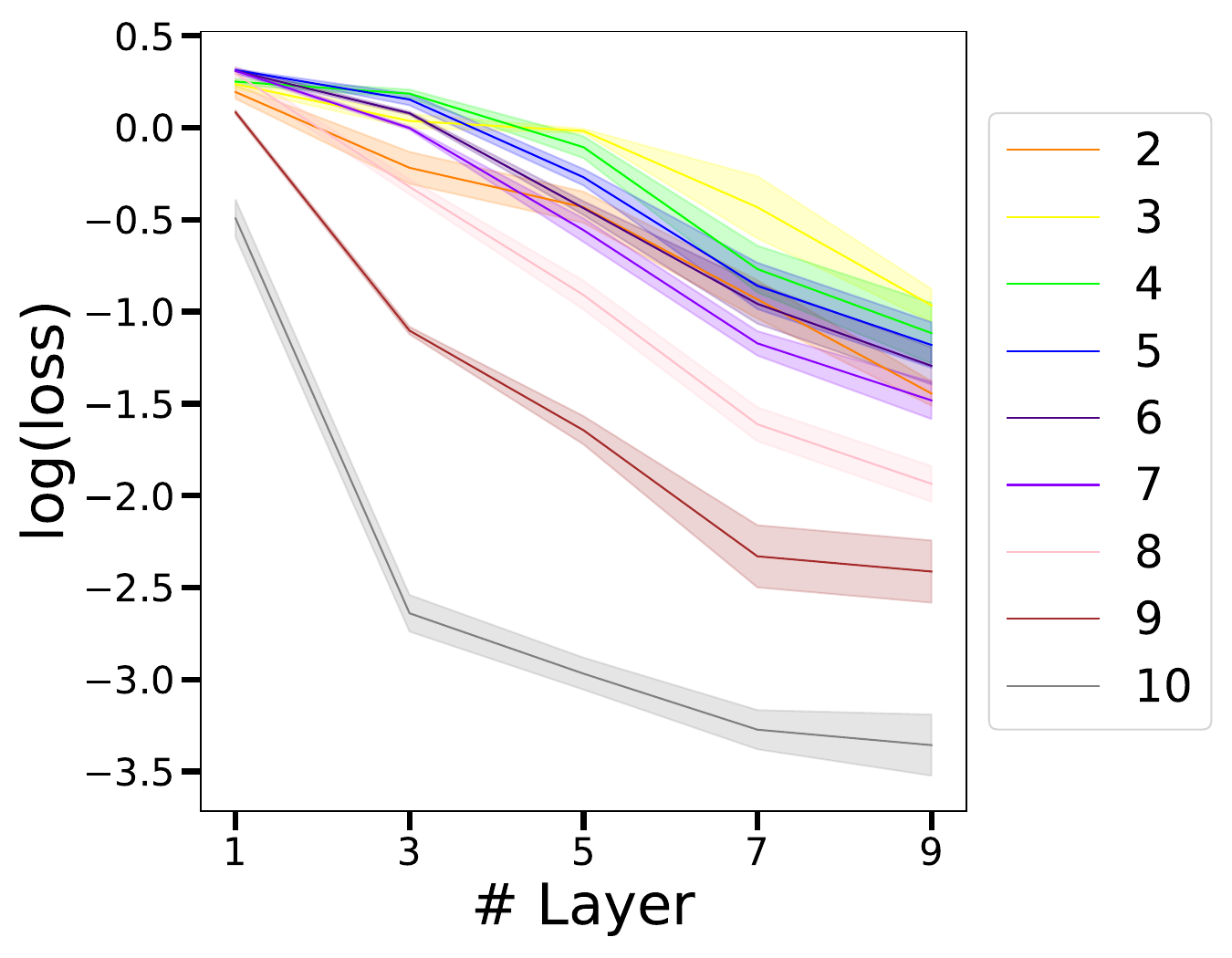}\\
\text{\quad \ref{f:ev_loss_against_layer}(d) $\text{loss}_{1-10}$}& 
\text{\quad \ref{f:ev_loss_against_layer}(e) $\lrbb{\text{loss}_{i}}_{i=2...10}$, FC}&  \text{\quad \ref{f:ev_loss_against_layer}(f) $\lrbb{\text{loss}_{i}}_{i=2...10}$, CSL}
\end{tabular}
\vspace{0em}
\caption{$\log(\text{loss}_*)$ vs. number of layers. Top row: various losses for the Transformer \textbf{trained on $\text{loss}_{1-5}$}. Bottom row: various losses for the Transformer \textbf{trained on $\text{loss}_{1-10}$}.}
\label{f:ev_loss_against_layer}
\end{figure}
\vspace{-0.8em}
\section{Parameter-Efficient Implementation}
\label{s:efficient}
\vspace{-0.8em}
As explained in Section \ref{ss:poly_setup}, the sizes of the $W^Q, W^K, W^V, W^R$ matrices can scale with $O(n^4)$ in the worst case, where $n$ is the number of nodes. This makes the memory requirement prohibitive.
To alleviate this, we present below in \eqref{e:z_dynamics_efficient} a more efficient Transformer implementation than the one defined in \eqref{e:z_dynamics}. The representation in \eqref{e:z_dynamics_efficient} is strictly more constrained than \eqref{e:z_dynamics}, but it is still expressive enough to implement all the previous constructions. For each layer $l$, let $\alpha^V_{l}$, $\alpha^Q_{l}$, $\alpha^K_{l}$, $\alpha^R_{l}$ be scalar weight parameters, and let $W_l^{V,\Phi}$, $W_l^{Q,\Phi}$, $W_l^{K,\Phi}$, $W_l^{R,\Phi} \in \R^{2k \times 2k}$.
Let $B_l, \Phi_l$ evolve as
\begin{align*}
    \numberthis
    \label{e:z_dynamics_efficient}
    & B_{l+1}^\top = (1+\alpha^R_{l}) B_l^\top + \alpha^V_{l} B_l^\top \lrp{\alpha^Q_{l} \alpha^K_{l} B_l B_l^\top + \Phi_l {W_l^{Q,\Phi}}^\top W_l^{K,\Phi} \Phi_l^\top}\\
    & \Phi_{l+1}^\top = \lrp{I+ W_l^{R,\Phi}}\Phi_l^\top + W_l^{V,\Phi} \Phi_l^\top \lrp{\alpha^Q_{l} \alpha^K_{l} B_l B_l^\top + \Phi_l {W_l^{Q,\Phi}}^\top W_l^{K,\Phi} \Phi_l^\top},
\end{align*}
with initialization $B_0 := B$, and $\Phi_0 = [\Psi, 0_{k\times n}]$. We verify that (\ref{e:z_dynamics_efficient}) matches (\ref{e:z_dynamics}), with $Z_l = \bmat{B_l^\top, \Phi_l^\top}$; the difference is that certain blocks of $W^V_l$, $W^Q_l$, $W^K_l$, $W^R_l$ constrained to be zero or some scaling of identity. Nonetheless, we verify that the constructions in Lemmas \ref{l:electric_flow}, \ref{l:L_sqrt}, \ref{l:heat} and \ref{l:power_method} can all be realized within the constrained dynamics (\ref{e:z_dynamics_efficient}). We prove this in Lemma \ref{l:efficient_construction} in Appendix \ref{app:proof_efficient}. In Figure \ref{s:efficient_experiment}, we show that the efficient implementation \eqref{e:z_dynamics_efficient} performs similarly (and in many cases better than) the standard implementation \eqref{e:z_dynamics} on all the synthetic experiments. 

Besides the reduced parameter size, we highlight two additional advantages of (\ref{e:z_dynamics_efficient}):
\begin{enumerate}[leftmargin=*]
    \item Let $\TF^{B}_l(\textcolor{blue}{B}, \textcolor{blue}{\Phi})$ (resp $\TF^{\Phi}_l(\textcolor{blue}{B}, \textcolor{blue}{\Phi}))$ be defined as the value of $B_l^\top$ (resp $\Phi_l^\top$) when initialized at $B_0 = \textcolor{blue}{B}$ and $\Phi_0 = \textcolor{blue}{\Phi}$ under the dynamics (\ref{e:z_dynamics_efficient}). Let $U\in \R^{d\times d}$ be some permutation matrix over edge indices. Then
    \begin{enumerate}
        \item $\TF^{B}_l$ is \emph{equivariant} to edge permutation, i.e. $\TF^{B}_l(B U, \Phi) = U^\top \TF^{B}_l(B, \Phi)$  
        \item $\TF^{\Phi}_l$ is \emph{invariant} to edge permutation, i.e. $\TF^{\Phi}_l(B U, \Phi) = \TF^{\Phi}_l(B, \Phi)$.
    \end{enumerate}
    We provide a short proof of this in Lemma \ref{l:invariance} in Appendix \ref{app:proof_efficient}.
    \item If $B_l$ is sparse, then $B_l^\top B_l$ can be efficiently computed using sparse matrix multiply. This is the case for all of our constructions for all layers $l$.
\end{enumerate}
\vspace{-0.8em}
\section{Learning Positional Encoding for Molecular Regression}
\label{s:real_world_experiment}
\vspace{-0.8em}
In Sections \ref{ss:experiments_poly} and \ref{ss:experiment_power_method}, we saw that the Transformer is capable of learning a variety of embeddings based on $\gL^{\dagger}, \sqrt{\gL}, e^{-s\gL}$, and EVD($\gL$), \textbf{when the training loss is explicitly the recovery loss for that embedding}.
An natural question is then: \emph{Can a GNN perform as well on a downstream task if we replace its PE by a linear Transformer, and train the GNN together with the linear Transformer?} There are two potential benefits to this approach: First, the Transformer can \textbf{learn a PE that is better-suited to the task than hard-coded PEs, and thus achieve higher accuracy}. Second, the Transformer with a few layers/dimensions may learn to compute only the most relevant features, and achieve \textbf{comparable accuracy to the hard-coded PE using less computation.}

To test this, we evaluate the performance of our proposed linear Transformer on a molecular regression task on two real-world datasets: QM9 \citep{ruddigkeit2012enumeration, ramakrishnan2014quantum} and ZINC \citep{irwin2012zinc}. The regression target is constrained solubility \citep{dwivedi2023benchmarking}. Our experiments are based on the Graph Transformer (GT) implementation from \cite{dwivedi2020generalization}. In Table \ref{tab:molecular_regression_experiment}, we compare three loss values: GT without PE, GT with Laplacian Eigenvector as PE (LapPE), and GT with Linear Transformer output as PE. We use LapPE as a baseline because it was the PE used in \cite{dwivedi2020generalization}. We present details of the datasets in Appendix \ref{ss:dataset_details}. The linear Transformer architecture is a modified version of \eqref{e:z_dynamics_efficient}, detailed in Appendix \ref{ss:architectural_adaptation}. Other experiment details, including precise model definitions, can be found in Appendix \ref{ss:molecule_experiment_setup}.  

\begin{table}[h]
\begin{center}
\begin{tabular}{|l| l | l | l|}
\hline
\ & \multicolumn{1}{|c|}{\bf Model}  &\multicolumn{1}{|c|}{\bf \# Parameters} &\multicolumn{1}{|c|}{\bf Loss}\\
\hline
\multirow{3}{*}{\rotatebox{90}{ZINC}} 
& Graph Transformer    & $800771-891$     &$0.286 \pm 0.0078 $\\ 
& Graph Transformer + LapPE    & $800771$         &$0.201 \pm 0.0034 $\\
& Graph Transformer + Linear Transformer & $800771+488$ &$0.138 \pm 0.012 $\\
\hline
\multirow{3}{*}{\rotatebox{90}{QM9}} 
& Graph Transformer         & $799747-512$ &$0.419 \pm 0.0047$\\
& Graph Transformer + LapPE    & $799747$ &$ 0.227 \pm 0.0094 $\\
& Graph Transformer + Linear Transformer & $799747+ 240$ &$0.221 \pm 0.0060 $\\
\hline
\end{tabular}
\end{center}
\caption{Regression Loss for ZINC and QM9 for different choices of PE.}
\label{tab:molecular_regression_experiment}
\end{table}
\vspace{-0.6em}
The difference between \textbf{GT} and \textbf{GT + LapPE} is substantial for both QM9 and ZINC, highlighting the importance of PE in both cases. Going from \textbf{GT + LapPE} to \textbf{GT + Linear Transformer} for ZINC, there is a significant further improvement in loss (about $30\%$). This is remarkable, considering that the linear Transformer accounts for less than $0.7\%$ of the total number of parameters. Note that the SOTA error for ZINC regression is significantly lower than 0.138 on more recent architectures; the significance of our result is in demonstrating the improvement \emph{just by replacing the LapPE with the linear Transformer, while keeping everything else fixed.} In contrast, going from \textbf{GT + LapPE} to \textbf{GT + Linear Transformer} for QM9, the difference is essentially zero. We conjecture that this may be because QM9 molecules (average of about $9$ nodes and $19$ edges) are considerably smaller than ZINC (average of about $23$ nodes and $40$ edges). Thus there may not be too many additional useful features to learn, and LapPE close to optimal. It is still consistent with our theory, for the Linear Transformer to do as well as LapPE.
\vspace{-0.8em}
\color{orange}


\color{black}
\section{Ethics Statement}
The authors have adhered to ICLR's code of ethics during the writing of this paper, and when conducting the research described herein. There are no ethics concerns which need to be highlighted. 
\section{Reproducibility }
We use both synthetic data and open-source molecular datasets in our experiments. Details of our experiment procedures and implementations have been provided in Section \ref{ss:experiments_poly}, Section \ref{ss:experiment_power_method}, Section \ref{s:real_world_experiment} and Appendix \ref{app:molecule_experiments}.
\section{Acknowledgements}
Suvrit Sra Acknowledges generous support from the Alexander von Humboldt Foundation’s AI Professorship. The authors thank Aaron Schild for the many valuable discussions, and Professor Xavier Bresson from the National University of Singapore for sharing the code which facilitated some of the experiments in this work.

\newpage
\bibliography{iclr2025_conference}

\begin{thebibliography}{34}
\providecommand{\natexlab}[1]{#1}
\providecommand{\url}[1]{\texttt{#1}}
\expandafter\ifx\csname urlstyle\endcsname\relax
  \providecommand{\doi}[1]{doi: #1}\else
  \providecommand{\doi}{doi: \begingroup \urlstyle{rm}\Url}\fi

\bibitem[Bentbib \& Kanber(2015)Bentbib and Kanber]{bentbib2015block}
AH~Bentbib and A~Kanber.
\newblock Block power method for svd decomposition.
\newblock \emph{Analele {\c{s}}tiin{\c{t}}ifice ale Universit{\u{a}}{\c{t}}ii" Ovidius" Constan{\c{t}}a. Seria Matematic{\u{a}}}, 23\penalty0 (2):\penalty0 45--58, 2015.

\bibitem[Black et~al.(2024)Black, Wan, Mishne, Nayyeri, and Wang]{black2024comparing}
Mitchell Black, Zhengchao Wan, Gal Mishne, Amir Nayyeri, and Yusu Wang.
\newblock Comparing graph transformers via positional encodings.
\newblock \emph{arXiv preprint arXiv:2402.14202}, 2024.

\bibitem[B{\"u}hler \& Hein(2009)B{\"u}hler and Hein]{buhler2009spectral}
Thomas B{\"u}hler and Matthias Hein.
\newblock Spectral clustering based on the graph p-laplacian.
\newblock In \emph{Proceedings of the 26th annual international conference on machine learning}, pp.\  81--88, 2009.

\bibitem[Charton(2021)]{charton2021linear}
Fran{\c{c}}ois Charton.
\newblock Linear algebra with transformers.
\newblock \emph{arXiv preprint arXiv:2112.01898}, 2021.

\bibitem[Choromanski et~al.(2022)Choromanski, Lin, Chen, Zhang, Sehanobish, Likhosherstov, Parker-Holder, Sarlos, Weller, and Weingarten]{choromanski2022block}
Krzysztof Choromanski, Han Lin, Haoxian Chen, Tianyi Zhang, Arijit Sehanobish, Valerii Likhosherstov, Jack Parker-Holder, Tamas Sarlos, Adrian Weller, and Thomas Weingarten.
\newblock From block-toeplitz matrices to differential equations on graphs: towards a general theory for scalable masked transformers.
\newblock In \emph{International Conference on Machine Learning}, pp.\  3962--3983. PMLR, 2022.

\bibitem[Coifman \& Lafon(2006)Coifman and Lafon]{coifman2006diffusion}
Ronald~R Coifman and St{\'e}phane Lafon.
\newblock Diffusion maps.
\newblock \emph{Applied and computational harmonic analysis}, 21\penalty0 (1):\penalty0 5--30, 2006.

\bibitem[Dwivedi \& Bresson(2020)Dwivedi and Bresson]{dwivedi2020generalization}
Vijay~Prakash Dwivedi and Xavier Bresson.
\newblock A generalization of transformer networks to graphs.
\newblock \emph{arXiv preprint arXiv:2012.09699}, 2020.

\bibitem[Dwivedi et~al.(2021)Dwivedi, Luu, Laurent, Bengio, and Bresson]{dwivedi2021graph}
Vijay~Prakash Dwivedi, Anh~Tuan Luu, Thomas Laurent, Yoshua Bengio, and Xavier Bresson.
\newblock Graph neural networks with learnable structural and positional representations.
\newblock \emph{arXiv preprint arXiv:2110.07875}, 2021.

\bibitem[Dwivedi et~al.(2023)Dwivedi, Joshi, Luu, Laurent, Bengio, and Bresson]{dwivedi2023benchmarking}
Vijay~Prakash Dwivedi, Chaitanya~K Joshi, Anh~Tuan Luu, Thomas Laurent, Yoshua Bengio, and Xavier Bresson.
\newblock Benchmarking graph neural networks.
\newblock \emph{Journal of Machine Learning Research}, 24\penalty0 (43):\penalty0 1--48, 2023.

\bibitem[Eliasof et~al.(2023)Eliasof, Frasca, Bevilacqua, Treister, Chechik, and Maron]{eliasof2023graph}
Moshe Eliasof, Fabrizio Frasca, Beatrice Bevilacqua, Eran Treister, Gal Chechik, and Haggai Maron.
\newblock Graph positional encoding via random feature propagation.
\newblock In \emph{International Conference on Machine Learning}, pp.\  9202--9223. PMLR, 2023.

\bibitem[Irwin et~al.(2012)Irwin, Sterling, Mysinger, Bolstad, and Coleman]{irwin2012zinc}
John~J Irwin, Teague Sterling, Michael~M Mysinger, Erin~S Bolstad, and Ryan~G Coleman.
\newblock Zinc: a free tool to discover chemistry for biology.
\newblock \emph{Journal of chemical information and modeling}, 52\penalty0 (7):\penalty0 1757--1768, 2012.

\bibitem[Kim et~al.(2021)Kim, Oh, and Hong]{kim2021transformers}
Jinwoo Kim, Saeyoon Oh, and Seunghoon Hong.
\newblock Transformers generalize deepsets and can be extended to graphs \& hypergraphs.
\newblock \emph{Advances in Neural Information Processing Systems}, 34:\penalty0 28016--28028, 2021.

\bibitem[Kim et~al.(2022)Kim, Nguyen, Min, Cho, Lee, Lee, and Hong]{kim2022pure}
Jinwoo Kim, Dat Nguyen, Seonwoo Min, Sungjun Cho, Moontae Lee, Honglak Lee, and Seunghoon Hong.
\newblock Pure transformers are powerful graph learners.
\newblock \emph{Advances in Neural Information Processing Systems}, 35:\penalty0 14582--14595, 2022.

\bibitem[Kreuzer et~al.(2021)Kreuzer, Beaini, Hamilton, L{\'e}tourneau, and Tossou]{kreuzer2021rethinking}
Devin Kreuzer, Dominique Beaini, Will Hamilton, Vincent L{\'e}tourneau, and Prudencio Tossou.
\newblock Rethinking graph transformers with spectral attention.
\newblock \emph{Advances in Neural Information Processing Systems}, 34:\penalty0 21618--21629, 2021.

\bibitem[Lim et~al.(2022)Lim, Robinson, Zhao, Smidt, Sra, Maron, and Jegelka]{lim2022sign}
Derek Lim, Joshua Robinson, Lingxiao Zhao, Tess Smidt, Suvrit Sra, Haggai Maron, and Stefanie Jegelka.
\newblock Sign and basis invariant networks for spectral graph representation learning.
\newblock \emph{arXiv preprint arXiv:2202.13013}, 2022.

\bibitem[Ma et~al.(2023)Ma, Lin, Lim, Romero-Soriano, Dokania, Coates, Torr, and Lim]{ma2023graph}
Liheng Ma, Chen Lin, Derek Lim, Adriana Romero-Soriano, Puneet~K Dokania, Mark Coates, Philip Torr, and Ser-Nam Lim.
\newblock Graph inductive biases in transformers without message passing.
\newblock In \emph{International Conference on Machine Learning}, pp.\  23321--23337. PMLR, 2023.

\bibitem[Mialon et~al.(2021)Mialon, Chen, Selosse, and Mairal]{mialon2021graphit}
Gr{\'e}goire Mialon, Dexiong Chen, Margot Selosse, and Julien Mairal.
\newblock Graphit: Encoding graph structure in transformers.
\newblock \emph{arXiv preprint arXiv:2106.05667}, 2021.

\bibitem[M{\"u}ller et~al.(2023)M{\"u}ller, Galkin, Morris, and Ramp{\'a}{\v{s}}ek]{muller2023attending}
Luis M{\"u}ller, Mikhail Galkin, Christopher Morris, and Ladislav Ramp{\'a}{\v{s}}ek.
\newblock Attending to graph transformers.
\newblock \emph{arXiv preprint arXiv:2302.04181}, 2023.

\bibitem[Park et~al.(2022)Park, Chang, Lee, Kim, and Hwang]{park2022grpe}
Wonpyo Park, Woonggi Chang, Donggeon Lee, Juntae Kim, and Seung-won Hwang.
\newblock Grpe: Relative positional encoding for graph transformer.
\newblock \emph{arXiv preprint arXiv:2201.12787}, 2022.

\bibitem[Peng \& Spielman(2014)Peng and Spielman]{peng2014efficient}
Richard Peng and Daniel~A Spielman.
\newblock An efficient parallel solver for sdd linear systems.
\newblock In \emph{Proceedings of the forty-sixth annual ACM symposium on Theory of computing}, pp.\  333--342, 2014.

\bibitem[Ramakrishnan et~al.(2014)Ramakrishnan, Dral, Rupp, and Von~Lilienfeld]{ramakrishnan2014quantum}
Raghunathan Ramakrishnan, Pavlo~O Dral, Matthias Rupp, and O~Anatole Von~Lilienfeld.
\newblock Quantum chemistry structures and properties of 134 kilo molecules.
\newblock \emph{Scientific data}, 1\penalty0 (1):\penalty0 1--7, 2014.

\bibitem[Ramp{\'a}{\v{s}}ek et~al.(2022)Ramp{\'a}{\v{s}}ek, Galkin, Dwivedi, Luu, Wolf, and Beaini]{rampavsek2022recipe}
Ladislav Ramp{\'a}{\v{s}}ek, Michael Galkin, Vijay~Prakash Dwivedi, Anh~Tuan Luu, Guy Wolf, and Dominique Beaini.
\newblock Recipe for a general, powerful, scalable graph transformer.
\newblock \emph{Advances in Neural Information Processing Systems}, 35:\penalty0 14501--14515, 2022.

\bibitem[Rong et~al.(2020)Rong, Bian, Xu, Xie, Wei, Huang, and Huang]{rong2020self}
Yu~Rong, Yatao Bian, Tingyang Xu, Weiyang Xie, Ying Wei, Wenbing Huang, and Junzhou Huang.
\newblock Self-supervised graph transformer on large-scale molecular data.
\newblock \emph{Advances in neural information processing systems}, 33:\penalty0 12559--12571, 2020.

\bibitem[Ruddigkeit et~al.(2012)Ruddigkeit, Van~Deursen, Blum, and Reymond]{ruddigkeit2012enumeration}
Lars Ruddigkeit, Ruud Van~Deursen, Lorenz~C Blum, and Jean-Louis Reymond.
\newblock Enumeration of 166 billion organic small molecules in the chemical universe database gdb-17.
\newblock \emph{Journal of chemical information and modeling}, 52\penalty0 (11):\penalty0 2864--2875, 2012.

\bibitem[Schlag et~al.(2021)Schlag, Irie, and Schmidhuber]{schlag2021linear}
Imanol Schlag, Kazuki Irie, and J{\"u}rgen Schmidhuber.
\newblock Linear transformers are secretly fast weight programmers.
\newblock In \emph{International Conference on Machine Learning}, pp.\  9355--9366. PMLR, 2021.

\bibitem[Shi \& Malik(1997)Shi and Malik]{shi1997normalized}
Jianbo Shi and Jitendra Malik.
\newblock Normalized cuts and image segmentation.
\newblock In \emph{Proceedings of IEEE computer society conference on computer vision and pattern recognition}, pp.\  731--737. IEEE, 1997.

\bibitem[Srinivasan \& Ribeiro(2020)Srinivasan and Ribeiro]{Srinivasan2020On}
Balasubramaniam Srinivasan and Bruno Ribeiro.
\newblock On the equivalence between positional node embeddings and structural graph representations.
\newblock In \emph{International Conference on Learning Representations}, 2020.
\newblock URL \url{https://openreview.net/forum?id=SJxzFySKwH}.

\bibitem[Vaswani et~al.(2017)Vaswani, Shazeer, Parmar, Uszkoreit, Jones, Gomez, Kaiser, and Polosukhin]{vaswani2017attention}
Ashish Vaswani, Noam Shazeer, Niki Parmar, Jakob Uszkoreit, Llion Jones, Aidan~N Gomez, {\L}ukasz Kaiser, and Illia Polosukhin.
\newblock Attention is all you need.
\newblock In \emph{Advances in neural information processing systems}, pp.\  5998--6008, 2017.
\newblock URL \url{http://arxiv.org/abs/1706.03762}.

\bibitem[Velingker et~al.(2024)Velingker, Sinop, Ktena, Veli{\v{c}}kovi{\'c}, and Gollapudi]{velingker2024affinity}
Ameya Velingker, Ali Sinop, Ira Ktena, Petar Veli{\v{c}}kovi{\'c}, and Sreenivas Gollapudi.
\newblock Affinity-aware graph networks.
\newblock \emph{Advances in Neural Information Processing Systems}, 36, 2024.

\bibitem[Von~Oswald et~al.(2023)Von~Oswald, Niklasson, Randazzo, Sacramento, Mordvintsev, Zhmoginov, and Vladymyrov]{von2023transformers}
Johannes Von~Oswald, Eyvind Niklasson, Ettore Randazzo, Jo{\~a}o Sacramento, Alexander Mordvintsev, Andrey Zhmoginov, and Max Vladymyrov.
\newblock Transformers learn in-context by gradient descent.
\newblock In \emph{International Conference on Machine Learning}, pp.\  35151--35174. PMLR, 2023.

\bibitem[Xu et~al.(2020)Xu, Shen, Cao, Cen, and Cheng]{xu2020graph}
Bingbing Xu, Huawei Shen, Qi~Cao, Keting Cen, and Xueqi Cheng.
\newblock Graph convolutional networks using heat kernel for semi-supervised learning.
\newblock \emph{arXiv preprint arXiv:2007.16002}, 2020.

\bibitem[Ying et~al.(2021)Ying, Cai, Luo, Zheng, Ke, He, Shen, and Liu]{ying2021transformers}
Chengxuan Ying, Tianle Cai, Shengjie Luo, Shuxin Zheng, Guolin Ke, Di~He, Yanming Shen, and Tie-Yan Liu.
\newblock Do transformers really perform badly for graph representation?
\newblock \emph{Advances in neural information processing systems}, 34:\penalty0 28877--28888, 2021.

\bibitem[Yun et~al.(2019)Yun, Jeong, Kim, Kang, and Kim]{yun2019graph}
Seongjun Yun, Minbyul Jeong, Raehyun Kim, Jaewoo Kang, and Hyunwoo~J Kim.
\newblock Graph transformer networks.
\newblock \emph{Advances in neural information processing systems}, 32, 2019.

\bibitem[Zhang et~al.(2023)Zhang, Luo, Wang, and He]{zhang2023rethinking}
Bohang Zhang, Shengjie Luo, Liwei Wang, and Di~He.
\newblock Rethinking the expressive power of gnns via graph biconnectivity.
\newblock \emph{arXiv preprint arXiv:2301.09505}, 2023.

\end{thebibliography}
\bibliographystyle{iclr2025_conference}

\newpage

\section{Proofs for Section \ref{s:poly}}
\label{app:proof_poly}

\subsection{Electric Flow}
\label{app:proof_electric}

\begin{proof}[Proof of Primal Dual Equivalence for Electric Flow]
    By convex duality, the primal problem in \eqref{e:electric_flow_primal} can be reformulated as
\begin{align*}
    & \min_{f: BR^{1/2}f = \psi} \quad \sum_{j=1}^d r_j f_j^2\\
    = & \min_{f} \max_{\phi} f^\top R f - 2\phi^\top (BR^{1/2}f - \psi)\\
    =& \max_{\phi} \min_{f} f^\top R f - 2\phi^\top (BR^{1/2}f - \psi)\\
    =& \max_{\phi} -\phi^\top  B B^\top \phi + 2\phi^\top \psi\\
    =& -\min_{\phi \in \R^n} \phi^\top  \gL \phi - 2\phi^\top \psi
    \numberthis \label{e:dual_proof}
\end{align*}
\end{proof}

\begin{proof}[Proof of Lemma \ref{l:electric_flow}]
Let $F_{\psi}(\phi) := \frac{1}{2} \phi^\top  \gL \phi - \phi^\top \psi$ denote ($1/2$ times) the dual optimization objective from (\ref{e:dual_potential_problem}). Its gradient is given by
\begin{align*}
    \nabla F_{\psi}(\phi) = \gL \phi - \psi.
\end{align*}
Let $Z_l$ denote the output after $l$ layers of the Transformer \eqref{e:z_dynamics}, i.e. $Z_l$ evolves according to \eqref{e:z_dynamics}:
\begin{align*}
    Z_{l+1} := Z_l + W^V_l Z_l Z_l^\top {W^Q_l}^\top W^K_l Z_l + W^R_l Z_l.
\end{align*}
We use $B_l^\top$ to denote the first $d$ rows of $Z_l$, $\Lambda_l^\top$ to denote the $(d+1)^{th}$ row to $(d+k)^{th}$ row of $Z_l$ and $\Phi_l^\top$ to denote the last $k$ rows of $Z_l$, i.e. $Z_l = \bmat{B_l^\top\\\Lambda_l^\top \\\Phi_l^\top}$. For $i=1...k$, let $\phi_{l,i}$ denote the $i^{th}$ column of $\Phi_l$. Then there exists a choice of $W^V, W^Q, W^K, W^R$, such that for all $i$,
\begin{align*}
    \numberthis \label{e:transformer_gd_electric}
    \phi_{l+1,i} = \phi_{l, i} - \delta \nabla F_{\psi_i}(\phi_{l,i}).
\end{align*}
Before proving \eqref{e:transformer_gd_electric}, we first consider its consequences. We can verify that $\nabla^2 F_{\psi}(\cdot) = \gL$, which is in turn upper and lower bounded by $\lambda_{\min} \prec \gL \prec \lambda_{\max}$. By standard analysis of gradient descent for strongly convex and Lipschitz smooth functions, for $\delta < \frac{1}{\lambda_{\max}}$, we verify that for all $i=1...k$,

\begin{align*}
    & \|\phi_{L,i} - \gL^{\dagger} \psi_i\|_2^2\\
    =& \lrn{\phi_{L,i}  - \phi^*_i}_2^2\\
    \leq& \lrp{1-{\delta \lambda_{\min}}} \lrn{\phi_{L-1,i}  - \phi^*_i}_2^2\\
    \leq& e^{-\delta L \lambda_{\min}}\lrn{\phi_{0,i}  - \phi^*_i}_2^2\\
    =& e^{-\delta L \lambda_{\min}}\lrn{\phi^*_i}_2^2\\
    \leq& \frac{\exp\lrp{{-\delta L \lambda_{\min}}}}{\lambda_{\min}} \lrn{\psi}_2^2
\end{align*}
where $\phi^*_i := \arg\min_\phi F_{\psi_i}(\phi) = \gL^{\dagger}\psi_i$.

As a final note, $F$ is only weakly convex along the $\vec{1}$ direction, but we can ignore this because $\phi_{l,i}$ will always be orthogonal to $\vec{1}$ as long as $\phi_{0,i}$ is (as we assumed in the lemma statement).

The remainder of the proof will be devoted to showing \eqref{e:transformer_gd_electric}. Recall that the input to the Transformer $Z_0$ is initialized as $Z_0 = \bmat{B^\top \\ \Psi^\top \\0_{k\times n}} \in \Re^{(d+2k)\times n}$. For some fixed $\delta$, our choice of parameter matrices will be the same across layers:
\begin{alignat*}{6}
    W^V_l &= &\bmat{0_{d\times d} & 0_{d\times k} & 0_{d\times k}\\ 0_{k\times d} & 0_{k\times k} & 0_{k\times k} \\ 0_{k\times d}& 0_{k\times k} & \textcolor{blue}{-\delta I_{k\times k}}},\  
    {W^Q_l}^\top W^K_l &= &\bmat{\textcolor{blue}{I_{d\times d}} & 0_{d\times k} & 0_{d\times k}\\ 0_{k\times d} & 0_{k\times k} & 0_{k\times k} \\ 0_{k\times d}& 0_{k\times k} & 0_{k\times k}},\  
    W^R_l &= & \bmat{0_{d\times d} & 0_{d\times k} & 0_{d\times k}\\ 0_{k\times d} & 0_{k\times k} & 0_{k\times k} \\ 0_{k\times d}& \textcolor{blue}{ \delta I_{k\times k}} & 0_{k\times k}}
    \numberthis \label{e:electric_taylor}
\end{alignat*}
By the choice of $W^V, W^R$ in (\ref{e:electric_taylor}), we verify that for all layers $l$, $B_l = B_0 = B$, so that $Z_l^\top (W^Q_l)^\top W^K_l Z_l = \gL$. Again by choice of $W^V, W^R$ in (\ref{e:electric_taylor}), $\Lambda_{l} = \Psi$ for all $l$. 
It thus follows from induction that 
\begin{align*}
    \numberthis \label{e:t:electric_taylor:induction}
    & \Phi_{l+1} = \Phi_l - \delta \gL\Phi_l + \delta \Psi \\
    \Leftrightarrow \qquad 
    & \phi_{l+1, i} = \phi_{l,i} - \delta \gL\phi_{l,i} + \delta \psi_i = \phi_{l,i} - \delta \nabla F_{\psi_i} (\phi_{l,i})  \qquad \qquad \text{for $i=1...k$}, 
\end{align*}
thus proving \eqref{e:transformer_gd_electric}.

\end{proof}

\subsection{Resistive Embedding}
\label{app:proof_resistive}
\begin{proof}[Proof of Lemma \ref{l:L_sqrt}]
    Let $\hat{I}:= I_{n\times n} - \frac{1}{n} \vec{1} \vec{1}^\top$. Let $\delta := 1/\lambda_{\max}$. 
    
    Consider the matrix power series, which converges under our choice of $\delta$:
    \begin{align*}
        \sqrt{\gL^{\dagger}} = \sqrt{\delta} \sqrt{\lrp{\hat{I} - (\hat{I} - \delta \gL)}^{\dagger}} = \sqrt{\delta} \sum_{l=0}^{\infty} {2l \choose l} \frac{\lrp{I-\delta \gL}^l}{4^{l}} \hat{I},
    \end{align*}
    where the second inequality uses the fact that $\lrp{I - \delta \gL} \hat{I} = \lrp{\hat{I} - \delta \gL} \hat{I} $.

    We define $\alpha_l := \sqrt{\delta} {2l \choose l} \frac{1}{4^{l}}$. 
    \begin{alignat*}{6}
    W^V_l &= &\bmat{0_{d\times d} & 0_{d\times k} & 0_{d\times k}\\ 0_{k\times d} & \textcolor{blue}{-\delta I_{k\times k}} & 0_{k\times k} \\ 0_{k\times d}& 0_{k\times k} & 0_{k\times k}},\ 
    {W^Q_l}^\top W^K_l &= &\bmat{\textcolor{blue}{I_{d\times d}} & 0_{d\times k} & 0_{d\times k}\\ 0_{k\times d} & 0_{k\times k} & 0_{k\times k} \\ 0_{k\times d}& 0_{k\times k} & 0_{k\times k}},\ 
    W^R_l &= & \bmat{0_{d\times d} & 0_{d\times k} & 0_{d\times k}\\ 0_{k\times d} & 0_{k\times k} & 0_{k\times k} \\ 0_{k\times d}& \textcolor{blue}{\alpha_l I_{k\times k}} & 0_{k\times k}}
    \numberthis \label{e:resist_taylor}
    \end{alignat*}
    Define $Z_l =: \bmat{B_l^\top\\\Lambda_l \\\Phi_l^\top}$. We verify that for all $l$, $B_l = B$, and 
    \begin{align*}
        & \Lambda_{l+1} = \lrp{I - \delta \gL} \Lambda_l = \lrp{I - \delta \gL}^{l}\\
        & \Phi_{l+1} = \Phi_l + \alpha_l \Lambda_{l} = \sum_{i=0}^l \alpha_i (I-\delta \gL)^i \hat{I} \Psi,
    \end{align*}
    where we use the fact that $\Psi = \hat{I} \Psi$ by assumption. Therefore, 
    \begin{align*}
        \lrn{\lrb{Z}_{L,i}^\top - \sqrt{\gL^{\dagger}} \psi_i}_2 \leq \lrn{\sum_{i=L}^\infty \alpha_i (\hat{I}-\delta \gL)^i}_2 \lrn{\psi_i}_2
    \end{align*}
    By upper and lower bounds of Stirling's formula $\sqrt{2\pi L} L^L e^{-L}\leq L! \leq e \sqrt{L} L^L e^{-L}$, we can bound $\alpha_L \leq \sqrt{\delta/L}$. Using the fact that $(\hat{I} - \delta \gL) \prec \lrp{I - \delta \lambda_{\min}} \hat{I} \prec e^{- \delta \lambda_{\min}} \hat{I}$, and using the bound on geometric sum, the above is bounded by $\frac{e^{- L\delta \lambda_{\min}}}{\lambda_{\min} \sqrt{\delta L}}\lrn{\psi_i}_2$. The conclusion follows by plugging in $\delta = 1/\lambda_{\max}$. 
\end{proof}

\subsection{Heat Kernel}
\label{app:heat}
\begin{proof}[Proof of Lemma \ref{l:heat}]
The power series for $e^{-s\gL}$ is given by
    \begin{align*}
        e^{-s\gL} = \sum_{l=0}^{\infty} \frac{(-s)^l \gL^l}{l!}.
    \end{align*}

    We define $\alpha_l := \frac{(-s)^l}{l!}$. For each layer $l$, define
    \begin{alignat*}{6}
    W^V_l &= &\bmat{0_{d\times d} & 0_{d\times k} & 0_{d\times k}\\ 0_{k\times d} & \textcolor{blue}{I_{k\times k}} & 0_{k\times k} \\ 0_{k\times d}& 0_{k\times k} & 0_{k\times k}},\ 
    {W^Q_l}^\top W^K_l &= &\bmat{\textcolor{blue}{I_{d\times d}} & 0_{d\times k} & 0_{d\times k}\\ 0_{k\times d} & 0_{k\times k} & 0_{k\times k} \\ 0_{k\times d}& 0_{k\times k} & 0_{k\times k}},\ 
    W^R_l &= & \bmat{0_{d\times d} & 0_{d\times k} & 0_{d\times k}\\ 0_{k\times d} & \textcolor{blue}{-I_{k\times k}} & 0_{k\times k} \\ 0_{k\times d}& \textcolor{blue}{\alpha_l I_{k\times k}} & 0_{k\times k}}
    \end{alignat*}
    Define $Z_l =: \bmat{B_l^\top\\\Lambda_l \\\Phi_l^\top}$. We verify that for all $l$, $B_l = B$, and 
    \begin{align*}
        & \Lambda_{l+1} = \gL \Lambda_l = \gL^{l+1} \Psi\\
        & \Phi_{l+1} = \Phi_l + \alpha_l \Lambda_{l} = \sum_{i=0}^l \alpha_i \gL^i \Psi.
    \end{align*}
    Therefore, 
    \begin{align*}
        \lrn{\lrb{Z}_{L,i}^\top - e^{-s\gL} \psi_i}_2 \leq \lrn{\sum_{i=L+1}^\infty \alpha_i \gL^i}_2 \lrn{\psi_i}_2
    \end{align*}
    By lower bound of Stirling's formula, $L ! \geq \lrp{L/e}^L$. Therefore, for $L \geq 8 s \lambda_{\max}$, we have $\lrn{\alpha_L \gL^L}_2 \leq 2^{-L+8s\lambda_{\max}}$. For $L \geq 2$, the infinite sum is within a factor 2 of the first term, thus $\lrn{\sum_{i=L+1}^\infty \alpha_i \gL^i}_2 \leq2^{-L+8s\lambda_{\max}+1}$.
\end{proof}

\subsection{Faster Constructions}
\label{app:poly_fast}
\begin{proof}[Proof of Lemma \ref{l:electric_flow_fast}]
    We begin by recalling the Taylor expansion of $\gL^{\dagger}$:
    \begin{align*}
        \gL^{\dagger} = \delta \sum_{l=0}^\infty \lrp{\hat{I}-\delta \gL}^l.
    \end{align*}
    (recall that $\hat{I} := I_{n\times n} - \vec{1}\vec{1}^\top / n$). Our proof of this section uses a simple but powerful alternative expansion (see for example \cite{peng2014efficient}): for any integer $t$,
    \begin{align*}
        \numberthis \label{e:t:electric_flow_fast:fft}
        \delta \prod_{l=0}^{t} \lrp{I + \lrp{\hat{I} - \delta \gL}^{2^l}} = \delta \sum_{l=0}^{2^t} (\hat{I} - \delta \gL)^l.
    \end{align*}

    Notice that $t$ terms in the LHS equals $2^t$ terms of the RHS. This is exactly why we will get a doubly exponential convergence. As seen in the proof of Lemma \ref{l:electric_flow}, each term in the RHS of \eqref{e:t:electric_flow_fast:fft} coincides with one step of gradient descent. Therefore, if one layer of the Transformer can implement one additional product on the LHS, a $L$ layer Transformer can efficiently implement $2^L$ steps of gradient descent. In the remainder of the proof, we will show exactly this. 
    For all layers $l$, let
    \begin{align*}
        W^V_l &= &\bmat{
        \textcolor{blue}{I_{n\times n}} & 0_{n\times n} & 0_{n\times n}\\ 
        0_{n\times n} & 0_{n\times n} & 0_{n\times n}\\ 
        \textcolor{blue}{I_{n\times n}} & 0_{n\times n} & 0_{n\times n}},\ 
        {W^Q_l}^\top W^K_l &= &\bmat{
        0_{n\times n} & 0_{n\times n} & 0_{n\times n}\\ 
        \textcolor{blue}{I_{n\times n}} & 0_{n\times n} & 0_{n\times n}\\ 
        0_{n\times n} & 0_{n\times n} & 0_{n\times n}},\ 
        W^R_l &= &\bmat{
        \textcolor{blue}{-I_{n\times n}} & 0_{n\times n} & 0_{n\times n}\\ 
        0_{n\times n} & 0_{n\times n} & 0_{n\times n}\\ 
        0_{n\times n} & 0_{n\times n} & 0_{n\times n}}.\ 
    \end{align*}
    Let $Z_l^\top =: \bmat{\Gamma_l, & \Lambda_l, & \Phi_l}$, where $\Gamma_l, \Lambda_l, \Phi_l \in \R^{n\times n}$. Under the configuration of weight matrices above, we verify that $\Lambda_l = I_{n\times n}$ for all $l$, and thus $Z_l^\top {W^Q_l}^\top W^K_l Z_l = \Lambda_l \Gamma_l^\top = \Gamma_l$. for all $l$. Next, we verify by induction that $\Gamma_{l+1} = \Gamma_l - \Gamma_l + \Gamma_{l}^\top \Gamma_l = \Gamma_{l}^\top \Gamma_l = \lrp{\hat{I} - \delta \gL}^{2^{l}}$. Note that $\Gamma_l$ is symmetric.

    Finally, we verify that $\Phi_{l+1} = \Phi_l + \Gamma_l^\top \Phi_l = \lrp{I + \lrp{\hat{I} - \delta \gL}^{2^{l}}}\Phi_l $. Thus by induction, 
    \begin{align*}
        \Phi_L = \delta \prod_{i=0}^{L} \lrp{I + (\hat{I} - \delta \gL)^{2^i}}.
    \end{align*}
    Finally, again using \eqref{e:t:electric_flow_fast:fft}, the residual term is given by 
    \begin{align*}
        \gL^{\dagger} - \Phi_L = \delta \sum_{i=2^{L} +1}^\infty \lrp{\hat{I} - \delta \gL}^i.
    \end{align*}
    Noting that $(\hat{I} - \delta \gL) \prec (1-\delta \lambda_{\min} \gL) \hat{I}$, we can bound
    \begin{align*}
        \lrn{\gL^{\dagger} - \Phi_L}_2 \leq \frac{\exp\lrp{-2^{L} \delta \lambda_{\min}}}{\lambda_{\min}}
    \end{align*}    
\end{proof}

\begin{proof}[Proof of Lemma \ref{l:heat_fast}]
    Let $C:= 3^L$. We will use the bound
    \begin{align*}
        \lrp{I - s\gL/C}^{C} \prec e^{-s \gL} \prec \lrp{I - s\gL/C}^{C} \lrp{I- s^2\gL^2/C}^{-1} \prec \lrp{I - s\gL/C}^{C} \lrp{I + 2 s^2\gL^2/C},
    \end{align*}
    where the second inequality holds by our assumption that $s\lambda_{\max} < 1/C$. Therefore,
    \begin{align*}
        \lrn{\lrp{I - s\gL/C}^{C} - \exp(-s \gL)}_2 \leq \frac{2 s^2\gL^2}{C}\lrn{\exp(-s \gL)}_2 \leq \frac{2s^2 \lambda_{\max}^2}{C}.
    \end{align*}
    We will now show that $Z_L = \lrp{I - s\gL/C}^{C}$. Let us define, for all $l$, 
    \begin{align*}
        W^V_l &= I_{n\times n}, \qquad 
        {W^Q_l}^\top W^K_l = I_{n\times n},\qquad 
        W^R_l = - I_{n\times n}.
    \end{align*}
    Then we verify that $Z_{l+1} = Z_l - Z_l + Z_l Z_l^\top Z_l = Z_l^3$ (by symmetry of $Z_l$). Thus by induction, $Z_l = Z_0^{3^l} = \lrp{I - s\gL/C}^{3^l} = \lrp{I - s\gL/C}^{C}$.
    
\end{proof}

\section{Proofs for Section \ref{s:eigenvector}}
\label{app:proof_eigenvector}
\begin{lemma}[Single Index Orthogonalization]
    \label{l:single_index_orthogonalization}
    Consider the same setup as Lemma \ref{l:power_method}, with Transformer defined in \eqref{e:transformer_dynamics_ev_norm}. Let $\phi_{l,j} \in \R^{n}$ denote the $j^{th}$ column of $\Phi_l$. For any $i=1...k$, there exists a choice of $W^V, W^Q, W^K, W^R$, such that
    \begin{align*}
        & \hat{\phi}_{l+1, i} = \phi_{l,i} - \sum_{j=i+1}^k \lin{\phi_{l,i}, \phi_{l,j}} \phi_{l,j},\qquad \qquad  {\phi}_{l+1, i} = \frac{\hat{\phi}_{l+1, i}}{\lrn{\hat{\phi}_{l+1, i}}_2},
    \end{align*}
    and for any $j \neq i$, 
    \begin{align*}
        \phi_{l+1, j} = \phi_{l, j}.
    \end{align*}
\end{lemma}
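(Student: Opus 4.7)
The plan is to exhibit explicit sparse weight matrices that isolate the $i$-th row of $\Phi_l$ as the ``query,'' and the rows $d{+}i{+}1,\dots,d{+}k$ of $\Phi_l$ as the ``keys/values.'' Because row $d{+}j$ of $Z_l$ is exactly $\phi_{l,j}^\top$, the bilinear form appearing in linear attention will coincide with the Gram matrix of $\phi_{l,i+1},\dots,\phi_{l,k}$, which is precisely what the Gram--Schmidt subtraction requires.

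Concretely, I would set $W_l^R = 0$, $W_l^V = -\,e_{d+i}e_{d+i}^\top$, and $(W_l^Q)^\top W_l^K = \sum_{j=i+1}^{k} e_{d+j} e_{d+j}^\top$, where $e_a$ denotes the $a$-th standard basis vector in $\R^{d+k}$. The first choice kills the $W^R Z_l$ contribution entirely; the second makes $W_l^V Z_l$ vanish outside row $d{+}i$, where it equals $-\phi_{l,i}^\top$; the third is the diagonal projector onto the feature coordinates associated with the later columns of $\Phi_l$.

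Next I would expand the attention output $W_l^V Z_l\, Z_l^\top (W_l^Q)^\top W_l^K Z_l$ row by row. All rows other than $d{+}i$ are zero. For row $d{+}i$, using $Z_l^\top e_{d+j} = \phi_{l,j}$ and $e_{d+j}^\top Z_l \phi_{l,i} = \langle \phi_{l,j},\phi_{l,i}\rangle$, I get
\begin{equation*}
-\,\phi_{l,i}^\top Z_l^\top \Bigl(\sum_{j=i+1}^{k} e_{d+j}e_{d+j}^\top\Bigr) Z_l \;=\; -\sum_{j=i+1}^{k} \langle \phi_{l,i},\phi_{l,j}\rangle\, \phi_{l,j}^\top .
\end{equation*}
Adding this to the identity skip connection yields $\hat{Z}_{l+1}$ whose $(d{+}i)$-th row is exactly $\hat\phi_{l+1,i}^\top = \phi_{l,i}^\top - \sum_{j>i}\langle\phi_{l,i},\phi_{l,j}\rangle\phi_{l,j}^\top$, with every other row untouched. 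The row-wise normalization step in \eqref{e:transformer_dynamics_ev_norm} then divides row $d{+}i$ by its norm to produce $\phi_{l+1,i}$; the other rows $d{+}j$, $j\neq i$, are already unit norm in the subspace-iteration context in which this lemma is used (each has just been produced by a preceding QR pass), so normalization is a no-op there and the equality $\phi_{l+1,j}=\phi_{l,j}$ follows.

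The main conceptual point, rather than an obstacle, is that the bilinear structure $Z_l Z_l^\top$ in linear attention is naturally aligned with computing inner products of feature rows; everything else is an exercise in picking out the correct indices via rank-one weight matrices. No approximation is incurred, so there is no convergence analysis to carry out; the construction is exact.
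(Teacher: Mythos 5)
Your construction is exactly the paper's: your rank-one weights $W^V_l = -e_{d+i}e_{d+i}^\top$ and $(W^Q_l)^\top W^K_l = \sum_{j>i} e_{d+j}e_{d+j}^\top$ are precisely the block matrices $-\bigl[\begin{smallmatrix}0&0\\0&A\end{smallmatrix}\bigr]$ and $\bigl[\begin{smallmatrix}0&0\\0&H\end{smallmatrix}\bigr]$ the paper uses, and the row-by-row expansion matches the paper's $\Phi_{l+1}^\top = \Phi_l^\top - A\Phi_l^\top\Phi_l H\Phi_l^\top$ computation. Your explicit remark that the normalization is a no-op on rows $j\neq i$ only because those columns are already unit-norm in the subspace-iteration context is a small clarification the paper leaves implicit, but the argument is otherwise identical.
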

\begin{proof}[Proof of Lemma \ref{l:single_index_orthogonalization}]
    Recall that $Z_l^\top =: \bmat{B_l, & \Phi_l}$. Let $A\in \R^{k\times k}$ denote the matrix where $A_{ii}=1$ and is $0$ everywhere else. Let $H$ denote the matrix where $H_{jj} = 1$ for all $j > i$, and is $0$ everywhere else. Let the weight matrices be defined as
    \begin{alignat*}{6}
        W^V_l &= - &\bmat{0_{d\times d} & 0_{d\times k}\\ 0_{k\times d} & A}, \qquad {W^Q_l}^\top W^K_l &= &\bmat{0_{d\times d} & 0_{d\times k} \\ 0_{k\times d} & H}, \qquad W^R &= & 0
    \end{alignat*}
    Under the above definition, $B_l^\top = B^\top$ is a constant across all layers $l$. We verify that $\Phi_l$ evolves as 
    \begin{align*}
        \Phi_{l+1}^\top = \Phi_l^\top - A \Phi_l^\top \Phi_l H \Phi_l^\top.
    \end{align*}
    We verify that the effect of left-multiplication by $A$ "selects the $i^{th}$ row of $ \Phi_l^\top \Phi_l$" and the effect of right-multiplication by $H$ "selects the $(i+1)^{th} ... k^{th}$ columns of $ \Phi_l^\top \Phi_l$". Thus
    \begin{align*}
        A \Phi^\top \Phi H = 
        \bmat{&0  &... &0 &... &0\\
              &\vdots & &\vdots & &\vdots\\
              &0  &... &\lin{\phi_{l,i}, \phi_{l,j+1}} &... &\lin{\phi_{l,i}, \phi_{l,k}}\\
              &\vdots & &\vdots & &\vdots\\
              &0  &... &0 &... &0}.
    \end{align*}
    The conclusion can be verified by right-multiplying the above matrix with $\Phi_l^\top$, and then applying columnn-wise normalization to $\Phi_l$ (or equivalently, applying row-wise normalization to $\lrb{Z}_{d...d+k}$), as described in \eqref{e:transformer_dynamics_ev_norm}.
\end{proof}

\begin{proof}[Proof of Lemma \ref{l:power_method}]
    Recall that $Z_l =: \bmat{B_l^T \\ \Phi_l^\top}$. Assume for now that $B_l = B$ for all layers $l$. Let $\phi_{l,i}$ denote the $i^{th}$ column of $\Phi_l$. Let $l$ be some fixed layer with weights
    \begin{alignat*}{6}
        W^V_l &= &\bmat{0_{d\times d} & 0_{d\times k}\\ 0_{k\times d} & \textcolor{blue}{I_{k\times k}}}, \qquad {W^Q_l}^\top W^K_l &= &\bmat{\textcolor{blue}{I_{d\times d}} & 0_{d\times k} \\ 0_{k\times d} & 0_{k\times k}}, \qquad W^R_l &= & \bmat{I_{d\times d} & 0_{d\times k} \\ 0_{k\times d} & \textcolor{blue}{-I_{k\times k}}}.
    \numberthis \label{e:w_index_orthogonalization}
    \end{alignat*}
    Combined with \eqref{e:transformer_dynamics_ev_norm}, these imply that $\hat{\Phi}_{l+1} = \gL \Phi_l$, and that $\phi_{l+1,i} = {\phi_{l+1,i}}/{\lrn{\phi_{l+1,i}}_2}$. This implements the first step in the loop of Algorithm \ref{alg:subspace_iteration}. Although Algorithm \ref{alg:subspace_iteration} does not contain the normalization step, the result is identical, because $QR(\Phi)$ is invariant to column-scaling of $\Phi$.
    
    We now show a (different) configuration of weight matrices which enable a sequence of layers to perform QR decomposition: Consider the construction in Lemma \ref{l:single_index_orthogonalization}. For each $i$, we can use a single layer to make $\phi_{l+1,i}$ orthogonal with to $\phi_{l,j}$ for all $j>i$. By putting $k$ such layers in sequence, we can ensure that $\phi_{l+k, i}$ is orthogonal to $\phi_{l+k,j}$ for all $i=1...k$ and for all $j=i+1...k$. Thus $\Phi_{l+k}$ is exactly the column-orthogonal matrix in a QR decomposition of $\Phi_l$.

    Finally, we observe that $B_l$ is unchanged in both \eqref{e:w_index_orthogonalization} and in the construction of Lemma \ref{l:single_index_orthogonalization}. Thus we verify the assumption at the beginning of the proof that $B_l = B$ for all $l$. This concludes the proof.
\end{proof}

\begin{proof}[Proof of Corollary \ref{c:bottom_k}]
The construction is almost identical to that in the proof of Lemma \ref{l:power_method} above. The only difference is that we replace the weight configuration in \eqref{e:w_index_orthogonalization} by 
\begin{alignat*}{6}
    W^V_l &= &\bmat{0_{d\times d} & 0_{d\times k}\\ 0_{k\times d} & \textcolor{blue}{I_{k\times k}}}, \qquad {W^Q_l}^\top W^K_l &= &\bmat{\textcolor{blue}{I_{d\times d}} & 0_{d\times k} \\ 0_{k\times d} & 0_{k\times k}}, \qquad W^R_l &= & \bmat{I_{d\times d} & 0_{d\times k} \\ 0_{k\times d} & \textcolor{red}{(\mu-1) I_{k\times k}}},
\end{alignat*}
where the change is highlighted in red. Under this, we verify that $\hat{\Phi}_{l+1} = (\mu I - \gL) \Phi_l$. 

The remainder of the proof, including construction for the $QR(\Phi)$ operation, are unchanged from Lemma \ref{l:power_method}.    
\end{proof}

\section{Lemmas and Proofs for Section \ref{s:efficient}}
\label{app:proof_efficient}

\begin{lemma}[Constructions under \eqref{e:z_dynamics_efficient}]
\label{l:efficient_construction}
    The constructions in Lemmas \ref{l:electric_flow}, \ref{l:L_sqrt}, \ref{l:heat} and \ref{l:power_method} can be realized within the constrained dynamics (\ref{e:z_dynamics_efficient}).
\end{lemma}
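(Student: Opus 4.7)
The plan is to observe that the efficient dynamics \eqref{e:z_dynamics_efficient} is precisely the restriction of the full dynamics \eqref{e:z_dynamics} to weight matrices that are block-diagonal with respect to the $(d, 2k)$ row/column partition, with the top-left $d\times d$ blocks being scalar multiples of the identity. Concretely, writing $Z_l^\top = \bmat{B_l^\top & \Phi_l^\top}$ and imposing
\[
W^V_l = \bmat{\alpha^V_l I_d & 0 \\ 0 & W^{V,\Phi}_l},\ W^Q_l = \bmat{\alpha^Q_l I_d & 0 \\ 0 & W^{Q,\Phi}_l},\ W^K_l = \bmat{\alpha^K_l I_d & 0 \\ 0 & W^{K,\Phi}_l},\ W^R_l = \bmat{\alpha^R_l I_d & 0 \\ 0 & W^{R,\Phi}_l},
\]
a direct block computation gives $Z_l^\top (W^Q_l)^\top W^K_l Z_l = \alpha^Q_l \alpha^K_l B_l B_l^\top + \Phi_l {W^{Q,\Phi}_l}^\top W^{K,\Phi}_l \Phi_l^\top$, and the two row-blocks of the Transformer update $Z_l + W^V_l Z_l Z_l^\top (W^Q_l)^\top W^K_l Z_l + W^R_l Z_l$ reproduce exactly the two lines of \eqref{e:z_dynamics_efficient}.

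With that equivalence in hand, the rest is a routine case-by-case inspection of the explicit weight matrices exhibited in the proofs of Lemmas \ref{l:electric_flow}, \ref{l:L_sqrt}, \ref{l:heat} and \ref{l:power_method}. In each case the displayed matrices are already block-diagonal in the required sense, and their top-left $d\times d$ blocks are either $0\cdot I_d$ or $I_d$; one simply reads off the scalars $\alpha^V_l, \alpha^Q_l, \alpha^K_l, \alpha^R_l$ from those blocks and takes $W^{V,\Phi}_l, W^{Q,\Phi}_l, W^{K,\Phi}_l, W^{R,\Phi}_l$ to be the bottom-right $2k\times 2k$ sub-blocks of the corresponding weight matrices (with $W^{Q,\Phi}_l$ and $W^{K,\Phi}_l$ obtained by factoring the bottom-right block of $(W^Q_l)^\top W^K_l$, which is zero or rank-one in all four constructions and so factors trivially). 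For Lemma \ref{l:power_method} the ambient dimension in the original construction is $d+k$ rather than $d+2k$, but this is handled by embedding $\Phi_l \in \R^{k\times n}$ into the first $k$ rows of a $2k\times n$ block and placing the original $k\times k$ weight blocks in the top-left $k\times k$ sub-blocks of the $W^{\cdot,\Phi}_l$ matrices, with the remaining entries set to zero so that the unused rows stay inert across layers. The orthogonalization sub-layers supplied by Lemma \ref{l:single_index_orthogonalization} fit this template in the same way.

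The only subtlety worth calling out is that the efficient form permits only a scalar rescaling of $B_l$, so one must verify that the $B$-block is preserved (up to scaling) across layers in each original construction. This is immediate: all four constructions have zero top-left $d\times d$ blocks in $W^V_l$ and $W^R_l$, giving $\alpha^V_l = \alpha^R_l = 0$ and hence $B_{l+1}^\top = B_l^\top = B^\top$ for all $l$, which is exactly what each of Lemmas \ref{l:electric_flow}--\ref{l:power_method} already assumes in its analysis. There is no other substantive obstacle, and the constrained dynamics \eqref{e:z_dynamics_efficient} therefore inherits the correctness and error guarantees of all four lemmas.
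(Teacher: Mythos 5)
Your proof takes essentially the same route as the paper's: establish that \eqref{e:z_dynamics_efficient} is precisely the restriction of \eqref{e:z_dynamics} to block-diagonal weights with scalar multiples of $I_{d\times d}$ in the top-left corner, and then read off the scalars $\alpha^V_l,\alpha^Q_l,\alpha^K_l,\alpha^R_l$ and sub-blocks $W^{\cdot,\Phi}_l$ from the explicit constructions in Lemmas~\ref{l:electric_flow}, \ref{l:L_sqrt}, \ref{l:heat}, \ref{l:power_method}. One minor imprecision: the bottom-right block of ${W^Q_l}^\top W^K_l$ in the orthogonalization step from Lemma~\ref{l:single_index_orthogonalization} is the diagonal projection $H$, which has rank up to $k-1$ rather than being ``zero or rank-one''---but since $H^2 = H$ it still factors trivially (take $W^{Q,\Phi}_l = W^{K,\Phi}_l$ with $H$ in the lower block), so the conclusion is unaffected.
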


\begin{proof}[Proof of Lemma \ref{l:efficient_construction}]\ \\
    In general, we verify that \eqref{e:z_dynamics_efficient} is equivalent to \eqref{e:z_dynamics} with weights satisfying the following form:
    
    \begin{alignat*}{6}
    W^V_l &= &\bmat{ \textcolor{red}{\alpha^V_l I_{d\times d}} & 0_{d\times 2k} \\ 0_{2k\times d} & \textcolor{red}{W^{V,\Phi}_l}},\  
    W^Q_l &= &\bmat{ \textcolor{red}{\alpha^Q_l I_{d\times d}} & 0_{d\times 2k} \\ 0_{2k\times d} & \textcolor{red}{W^{Q,\Phi}_l}},\  
    W^K_l &= &\bmat{ \textcolor{red}{\alpha^K_l I_{d\times d}} & 0_{d\times 2k} \\ 0_{2k\times d} & \textcolor{red}{W^{K,\Phi}_l}}.\\
    W^R_l &= &\bmat{ \textcolor{red}{\alpha^R_l I_{d\times d}} & 0_{d\times 2k} \\ 0_{2k\times d} & \textcolor{red}{W^{R,\Phi}_l}}. & & & &
    \end{alignat*}  

    Below, we state the weight configurations for \eqref{e:z_dynamics_efficient} which will recover the constructions in each of the stated lemmas. Note that there is a small change in notation: $\Phi_l$ as defined in Section \ref{s:efficient} corresponds to $[\Lambda_l; \Phi_l]$ from the proofs of Lemmas \ref{l:electric_flow}, \ref{l:L_sqrt} and \ref{l:heat}.
    
    We leave the simple verification of this equivalence to the reader.
    
    The construction in Lemma \ref{l:electric_flow} is equivalent to \eqref{e:z_dynamics_efficient} with weight configuration $\alpha_l^V = 0$, $\alpha_l^Q = \alpha_l^K = 1$, $\alpha_l^R = 0$, $W^{V,\Phi}_l = \bmat{0_{k\times k} & 0_{k\times k} \\ {-\delta I_{k\times k}} & 0_{k\times k}}$, ${W^{Q,\Phi}_l} = W^{K,\Phi}_l = 0$, $W^R_l =  \bmat{0_{k\times k} & 0_{k\times k} \\ { \delta I_{k\times k}} & 0_{k\times k}}$. 

    The construction in Lemma \ref{l:L_sqrt} is equivalent to \eqref{e:z_dynamics_efficient} with weight configuration $\alpha_l^V = 0$, $\alpha_l^Q = \alpha_l^K = 1$, $\alpha_l^R = 0$, $W^{V,\Phi}_l = \bmat{-\frac{1}{\lambda_{\max}} I_{k\times k} & 0_{k\times k} \\ { 0_{k\times k}} & 0_{k\times k}}$, ${W^{Q,\Phi}_l} = W^{K,\Phi}_l = 0$, $W^R_l =  \bmat{0_{k\times k} & 0_{k\times k} \\ { \frac{1}{\lambda_{\max}} I_{k\times k}} & 0_{k\times k}}$. 

    The construction in Lemma \ref{l:heat} is equivalent to \eqref{e:z_dynamics_efficient} with weight configuration $\alpha_l^V = 0$, $\alpha_l^Q = \alpha_l^K = 1$, $\alpha_l^R = 0$, $W^{V,\Phi}_l = \bmat{I_{k\times k} & 0_{k\times k} \\ { 0_{k\times k}} & 0_{k\times k}}$, ${W^{Q,\Phi}_l} = W^{K,\Phi}_l = 0$, $W^R_l =  \bmat{- I_{k\times k} & 0_{k\times k} \\ \frac{(-s)^l}{l!} I_{k\times k} & 0_{k\times k}}$, where $s$ is the temperature parameter.

    The construction in Lemma \ref{l:power_method} is equivalent to \eqref{e:z_dynamics_efficient} with the following weight configurations:
    \begin{enumerate}
        \item To implement the first step inside the loop of Algorithm \ref{alg:subspace_iteration}, let $\alpha_l^V = 0$, $\alpha_l^Q = \alpha_l^K = 1$, $\alpha_l^R = 0$, $W^{V,\Phi}_l = \bmat{0_{k\times k} & 0_{k\times k} \\ { 0_{k\times k}} & I_{k\times k}}$, ${W^{Q,\Phi}_l} = W^{K,\Phi}_l = 0$, $W^R_l =  \bmat{0_{k\times k} & 0_{k\times k} \\ { 0_{k\times k}} & -I_{k\times k}}$. 
        \item To implement the QR decomposition step inside the loop of Algorithm \ref{alg:subspace_iteration}, we will need to an equivalent construction as in Lemma \ref{l:single_index_orthogonalization}. Let $\alpha_l^V = 0$, $\alpha_l^Q = \alpha_l^K = 1$, $\alpha_l^R = 0$, and let $W^{V,\Phi}_l = \bmat{ 0_{k\times k} & 0_{k\times k} \\ { 0_{k\times k}} & A}$, ${W^{Q,\Phi}_l}^\top W^{K,\Phi}_l = \bmat{ 0_{k\times k} & 0_{k\times k} \\ { 0_{k\times k}} & H}$, $W^R_l =  0$, where $A$ and $H$ are as defined in the proof of Lemma \ref{l:single_index_orthogonalization}.
    \end{enumerate}
    
\end{proof}

\begin{lemma}[Invariance and Equivariance]
    \label{l:invariance}
    Let $\TF^{B}_l$ and $\TF^{\Phi}_l$ be as defined in Section \ref{s:efficient}. Let $U\in \Re^{d\times d}$ be any permutation matrix. Then for all layers $l$, 
    \begin{align*}
        & \TF^{B}_l(B U, \Phi) = U^\top \TF^{B}_l(B, \Phi)\\
        & \TF^{\Phi}_l(B U, \Phi) = \TF^{\Phi}_l(B, \Phi)
        \numberthis \label{e:l:invariance:ojiassd}
    \end{align*}
\end{lemma}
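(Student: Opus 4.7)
The plan is to proceed by induction on the layer index $l$, with the single key observation that $B_l B_l^\top$ is the only object through which the graph enters the update rule \eqref{e:z_dynamics_efficient}, and that this quadratic form is invariant under right-multiplication of $B$ by any orthogonal matrix. Since a permutation matrix $U \in \Re^{d\times d}$ satisfies $U U^\top = I_{d\times d}$, we immediately get $(BU)(BU)^\top = B B^\top$, which will drive the argument.

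Write $\tilde{B}_l, \tilde{\Phi}_l$ for the iterates produced by \eqref{e:z_dynamics_efficient} initialized at $B_0 = BU$, $\Phi_0 = \Phi$, and $B_l, \Phi_l$ for those initialized at $B_0 = B$, $\Phi_0 = \Phi$. The base case $l=0$ is immediate from the initialization: $\tilde{B}_0^\top = (BU)^\top = U^\top B^\top = U^\top \TF^B_0(B,\Phi)$ and $\tilde{\Phi}_0^\top = \Phi^\top = \TF^\Phi_0(B,\Phi)$. For the inductive step, assume $\tilde{B}_l = B_l U$ and $\tilde{\Phi}_l = \Phi_l$. Define the "mixing matrix"
\begin{equation*}
    M_l := \alpha^Q_l \alpha^K_l B_l B_l^\top + \Phi_l {W_l^{Q,\Phi}}^\top W_l^{K,\Phi} \Phi_l^\top,
\end{equation*}
and its tilde analogue $\tilde{M}_l$. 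By the induction hypothesis and orthogonality of $U$, $\tilde{B}_l \tilde{B}_l^\top = B_l U U^\top B_l^\top = B_l B_l^\top$ and $\tilde{\Phi}_l = \Phi_l$, hence $\tilde{M}_l = M_l$.

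Substituting into \eqref{e:z_dynamics_efficient}, the $B$-update gives $\tilde{B}_{l+1}^\top = (1+\alpha^R_l)\tilde{B}_l^\top + \alpha^V_l \tilde{B}_l^\top \tilde{M}_l = U^\top\bigl[(1+\alpha^R_l) B_l^\top + \alpha^V_l B_l^\top M_l\bigr] = U^\top B_{l+1}^\top$, which is the desired equivariance for layer $l+1$. The $\Phi$-update gives $\tilde{\Phi}_{l+1}^\top = (I + W_l^{R,\Phi})\tilde{\Phi}_l^\top + W_l^{V,\Phi} \tilde{\Phi}_l^\top \tilde{M}_l = (I + W_l^{R,\Phi})\Phi_l^\top + W_l^{V,\Phi}\Phi_l^\top M_l = \Phi_{l+1}^\top$, i.e., invariance at layer $l+1$. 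This closes the induction and proves \eqref{e:l:invariance:ojiassd}.

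There is no real obstacle: the entire proof hinges on the algebraic identity $U U^\top = I$ together with the fact that in the constrained dynamics \eqref{e:z_dynamics_efficient} the $B$-block is only updated by right-multiplication by $M_l$ (preserving the permutation factor on the right) and scalar operations (which commute with $U$), while the $\Phi$-block only sees $B_l$ through the invariant combination $B_l B_l^\top$. If anything were subtle, it would be checking that the \emph{efficient} parameterization does not inadvertently break this structure, e.g., by some block of $W^V,W^Q,W^K,W^R$ acting directly on columns of $B_l$; the constraint that the top-left blocks are scalar multiples of $I_{d\times d}$ (as spelled out in Lemma \ref{l:efficient_construction}) is exactly what rules this out.
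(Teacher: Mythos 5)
Your proof is correct and uses essentially the same approach as the paper: a joint induction on $l$, with the inductive step driven by $U U^\top = I$ and the observation that $B_l$ enters the dynamics only through $B_l B_l^\top$. The only stylistic difference is your introduction of the mixing matrix $M_l$, which tidies the bookkeeping but does not change the argument.
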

\begin{proof}
    We will prove these two claims by induction simultaneously. Recall from (\ref{e:z_dynamics_efficient}) that
    \begin{align*}
        & B_{l+1}^\top = (1+\alpha_{R,l}) B_l^\top + \alpha_{V,l} B_l^\top \lrp{\alpha_{Q,l} \alpha_{K,l} B_l B_l^\top + \Phi_l {W_l^{Q,\Phi}} W_l^{K,\Phi} \Phi_l^\top}\\
        & \Phi_{l+1}^\top = \lrp{I+ W_l^{R,\Phi}}\Phi_l^\top + W_l^{V,\Phi} \Phi_l^\top \lrp{\alpha_{Q,l} \alpha_{K,l} B_l B_l^\top + \Phi_l {W_l^{Q,\Phi}} W_l^{K,\Phi} \Phi_l^\top}.
        \end{align*}
        Recall from the definition that $\TF^{B}_0(B, \Phi) := B_0 := B$ and $\TF^{\Phi}_l(B, \Phi) := \Phi_0 := \Phi$. Thus \eqref{e:l:invariance:ojiassd} holds by definition. Now suppose \eqref{e:l:invariance:ojiassd} holds for some $l$. By \eqref{e:z_dynamics_efficient}, 
        \begin{align*}
            & \TF^{B}_{l+1}(B U, \Phi) \\
            =& (1+\alpha_{R,l}) \TF^{B}_{l}(B U, \Phi)\\
            &\quad + \alpha_{V,l} \TF^{B}_{l}(B U, \Phi) \lrp{\alpha_{Q,l} \alpha_{K,l} \TF^{B}_{l}(B U, \Phi)^\top \TF^{B}_{l}(B U, \Phi)} \\
            &\quad + \alpha_{V,l} \TF^{B}_{l}(B U, \Phi)\lrp{\TF^{\Phi}_{l}(B U, \Phi)^\top {W_l^{Q,\Phi}} W_l^{K,\Phi} \TF^{\Phi}_{l}(B U, \Phi)}\\
            =& (1+\alpha_{R,l}) U^\top \TF^{B}_{l}(B, \Phi)\\
            &\quad + \alpha_{V,l} U^\top \TF^{B}_{l}(B, \Phi) \lrp{\alpha_{Q,l} \alpha_{K,l} \TF^{B}_{l}(B, \Phi)^\top \TF^{B}_{l}(B, \Phi)} \\
            &\quad + \alpha_{V,l} U^\top \TF^{B}_{l}(B, \Phi)\lrp{\TF^{\Phi}_{l}(B, \Phi)^\top {W_l^{Q,\Phi}} W_l^{K,\Phi} \TF^{\Phi}_{l}(B, \Phi)}\\
            =& U^\top \TF^{B}_{l+1}(B, \Phi),
        \end{align*}
        where we use the fact that $U U^\top = I_{d\times d}$. By similar steps as above, we also verify that
        \begin{align*}
            & \TF^{\Phi}_{l+1}(B U, \Phi)  = \TF^{\Phi}_{l+1}(B, \Phi).
        \end{align*}
        This concludes the proof.
\end{proof}

\section{Details for Synthetic Experiments}
\label{app:synthetic_experiments}
We consider two ways of sampling random graphs:
\begin{enumerate}
    \item Fully Connected (FC) graphs: $n=10$ nodes and $d=45$ edges. 
    \item Circular Skip Links (CSL) graphs: $n=10$ nodes and $d=20$ edges. The skip-length is sampled uniformly at random from $\lrbb{2,4,6,8}$. See \citep{dwivedi2023benchmarking} for a detailed definition of CSL graphs. 
\end{enumerate}
For both FC and CSL graphs, the resistance of an edge $e$ is $r(e) = e^{u(e)}$, where $u(e)$ is independently sampled uniformly from $[-2,2]$.

\section{Details for Molecular Regression Experiment}
\label{app:molecule_experiments}

Below, we provide various details for the molecular regression experiment in Section \ref{s:real_world_experiment}.

\subsection{Dataset Details}
\label{ss:dataset_details}

For QM9, the training set size is 20,000, the validation set size is 2000, the test set size is 100,000. The training/validation set are subsampled from the full training/validation set. The average number of nodes is 8.79, and the average number of edges is 18.8.

For ZINC, the training set size is 20,000, the validation set size is 2000, the test set size is 24,445. The training/validation set are subsampled from the full training/validation set. The average number of nodes is 23.16, and the average number of edges is 39.83. 

\subsection{Architecture for Molecular Regression Experiment}
\label{ss:architectural_adaptation}
The linear Transformer we use for the experiment in Section \ref{s:real_world_experiment} is described in \eqref{e:z_dynamics_molecule} below:
\begin{align*}
    & B_{l+1}^\top = (1+\alpha_{R,l}) B_l^\top  + \alpha_{V,l} B_l^\top \lrp{\textcolor{blue}{\beta_{l,1}} \alpha_{Q,l} \alpha_{K,l} \textcolor{red}{D_l^{-1/2}}B_l B_l^\top\textcolor{red}{D_l^{-1/2}} + \textcolor{blue}{\beta_{l,2}} \Phi_l {W_l^{Q,\Phi}} W_l^{K,\Phi} \Phi_l^\top}\\
    & \Phi_{l+1}^\top = \lrp{I+ W_l^{R,\Phi}}\Phi_l^\top + W_l^{V,\Phi} \Phi_l^\top \lrp{\textcolor{blue}{\beta_{l,3}}\alpha_{Q,l} \alpha_{K,l} \textcolor{red}{D_l^{-1/2}}B_l B_l^\top\textcolor{red}{D_l^{-1/2}} + \textcolor{blue}{\beta_{l,4}}\Phi_l {W_l^{Q,\Phi}} W_l^{K,\Phi} \Phi_l^\top}\\
    & B_{l+1} \leftarrow B_{l+1} / \|B_{l+1}\|_F\\
    & \phi_{l+1, i} \leftarrow \phi_{l+1, i} / \|\phi_{l+1, i}\|_2 \qquad \qquad \text{(for i=1...k)}
    \numberthis
    \label{e:z_dynamics_molecule}
\end{align*}

It is similar to the memory-efficient Transformer architecture \eqref{e:z_dynamics_efficient} from Section \ref{s:efficient}, with a number of modifications which we explain below. Let the input to the transformer be $Z_0 \in \R^{d + k}$, where $d$ denotes the number of edges, and $k$ is the dimension of learned features.

\begin{enumerate}[leftmargin=*]
    \item \textbf{Scaling by $\textcolor{red}{D^{-1}}$}: The GT in \cite{dwivedi2020generalization} uses eigenvectors of the \emph{normalized Laplacian} $\bar{\gL} :={D^{-1/2}} \gL {D^{-1/2}}$, where $D$ is the diagonal matrix whose $i^{th}$ diagonal entry is given by $\lrb{D}_{ii} := |\gL_{ii}| = \sum_{j=1}^d \lrabs{B_{ij}}$. To be consistent with this setup, we modify the dynamics in \eqref{e:z_dynamics_efficient} to add the scaling by $ \textcolor{red}{D_l^{-1/2}}$ on the part of the self-similarity matrix involving $B_l$, highlighted in red in \eqref{e:z_dynamics_molecule}, where $D_l$ is the diagonal matrix with the $i^{th}$ diagonal entry given by
    \begin{align*}
        \lrb{D_l}_{ii} := \sum_{j=1}^d \lrabs{B_l}_{ij}.
    \end{align*}
    With this additional scaling in the dynamics, the same weight constructions in all the lemmas in this paper will compute the corresponding quantities for the normalized Laplacian $\bar{\gL}$ instead. For instance, with the $D_l^{-1/2}$ scaling, the construction in Lemma 1 computes $\bar{\gL}^{\dagger}$, and the construction in Lemma \ref{l:power_method} computes the top-$k$ eigenvectors of $\bar{\gL}$. This can be verified using the following two facts: First, in all our constructions, $\Phi_l$ interacts with $B_l$ only via the self-similarity matrix $B_l B_l^\top$. Second, $D^{-1/2} B B^\top D^{-1/2} = \bar{\gL}$.
    \item \textbf{Independently scaled similarity matrix:} For each layer $l$, we introduce additional scalar parameters $\textcolor{blue}{\beta_{l,1}, \beta_{l,2}, \beta_{l,3}, \beta_{l,4}}$. One layer of \eqref{e:z_dynamics_molecule} is more expressive than one layer of \eqref{e:z_dynamics_efficient} but less expressive than two layers of \eqref{e:z_dynamics_efficient} (ignoring the difference due to $D^{-1/2}$).
    \item \textbf{Diagonal constraints on $W$:} We constrain $W^{V,\Phi}, W^{Q,\Phi}, W^{K,\Phi}$ to be \textcolor{blue}{diagonal}, and $W^{R,\Phi}$ to be a \textcolor{blue}{scaling of identity}.
    \item \textbf{Weight sharing:} Layers $3l, 3l+1, 3l+2$ share the same parameters, for all integers $l$. Phrased another way, each layer is looped 3 times.
    \item \textbf{Per-layer Scaling: } after each layer, we scale $B_l$ to have Frobenius norm 1, and we scale each column of $\Phi_l$ to have Euclidean norm 1. 
\end{enumerate}
Items 2-4 are useful for reducing the parameter count and improving generalization. Item 5 makes training feasible for deeper layers. The construction in Lemma \ref{s:eigenvector} for subspace iteration can still be realised under the changes in items 2-5. Under the change in item 1, the construction in Lemma \ref{s:eigenvector} will find eigenvectors of $\bar{\gL}$ instead of $\gL$.

\subsection{Experiment Details}
\label{ss:molecule_experiment_setup}
The GT we use has 128 hidden dimensions, 8 heads, and 4 layers. The position encoding dimension is $3$ for QM9, and $6$ for ZINC. 

The linear Transformer we use contain $L=9$ layers, with parameters shared every 3 layers, as described in Section \ref{ss:architectural_adaptation}. The dimension of $\Phi_l$ is $8$. We use a \texttt{linear} map, with parameters $M$, to project $\Phi_L$ down to the PE dimension (PE\_dim=3 for QM9 and PE\_dim=6 for ZINC). This output is then passed to the GNN in the same way as the LapPE. To clarify the distinction among the three models, we provide pseudo-code for \textbf{Graph Transformer}, \textbf{Graph Transformer + LapPE} and \textbf{Graph Transformer + linear Transformer} below:

Let \texttt{GT} denote the graph transformer from \citep{dwivedi2020generalization}. Let $G$ denote the collection of basic graph information, including edge list,  edge features, and node features. Let \texttt{LapPE}(G) return the Laplacian eigenvector positional encoding for $G$. Let $B$ be the incidence matrix of $G$. Let \texttt{LT} denote the linear Transformer, which takes $B$ as input ($\Phi_0$ can be viewed as internal parameters of \texttt{LT})

Then the predictions for each model in Table \ref{tab:molecular_regression_experiment} are given by
\begin{alignat*}{2}
    & \textbf{Graph Transformer} && \texttt{GT}(G, \texttt{None})\\
    & \textbf{Graph Transformer + LapPE} && \texttt{GT}(G, \texttt{LapPE}(G))\\
    & \textbf{Graph Transformer + linear Transformer} \qquad \qquad && \texttt{GT}(G, \texttt{linear} (M, \texttt{LT}(B)))
    \numberthis \label{e:model_definitions}
\end{alignat*}

The trainable parameters are given by 
\begin{align*}
    & \text{\{all the parameters of \texttt{GT}\} + \{all the parameters of \texttt{LT}\}}\\
    =& \text{\{all the parameters of \texttt{GT}\} + \{$M$, $\Phi_0$, $\lrp{\text{$\alpha$'s, $\beta$'s, $W$'s from \eqref{e:z_dynamics_molecule}}}$\}}.
\end{align*}

Before training on the actual regression task, we pretrain the linear Transformer to return the top PE\_dim Laplacian eigenvectors. This gives a good initialization, and improves the optimization landscape. 

We train using AdamW. Graph Transformer parameters are updated with initial 0.001 lr. Linear Transformer parameters are updated with initial 0.01 lr. For ZINC, we train for 2000 epochs with lr halved every 800 epochs. For QM9, we train for 1000 epochs with lr halved ever 400 epochs. The means and standard deviations in Table \ref{tab:molecular_regression_experiment} are each computed over 4 independent seeded runs.

\section{Experiments for Efficient Implementation}
\label{s:efficient_experiment}
\begin{figure}[H]
\begin{tabular}{cccc}
\includegraphics[width=0.24\columnwidth]{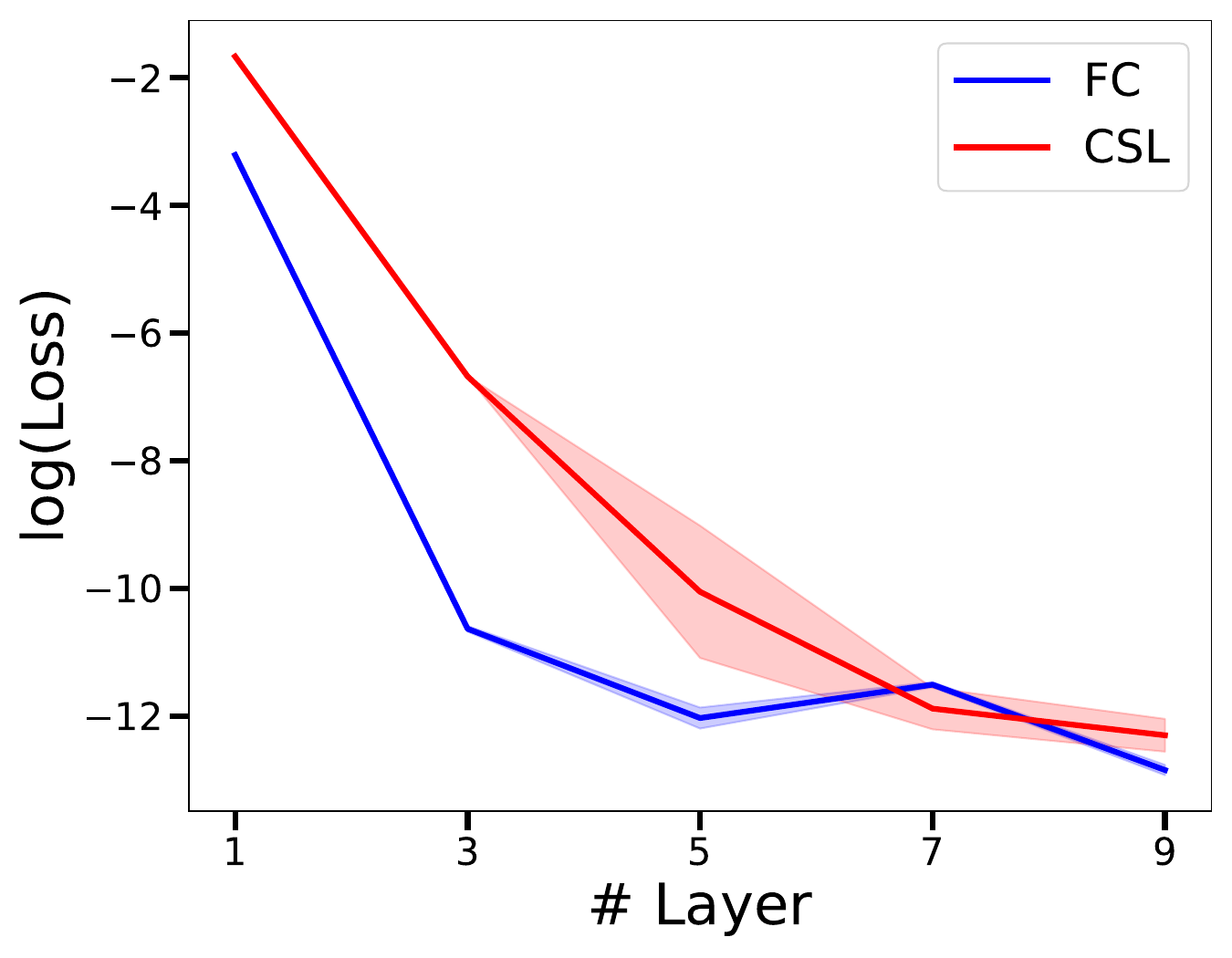}&
\includegraphics[width=0.24\columnwidth]{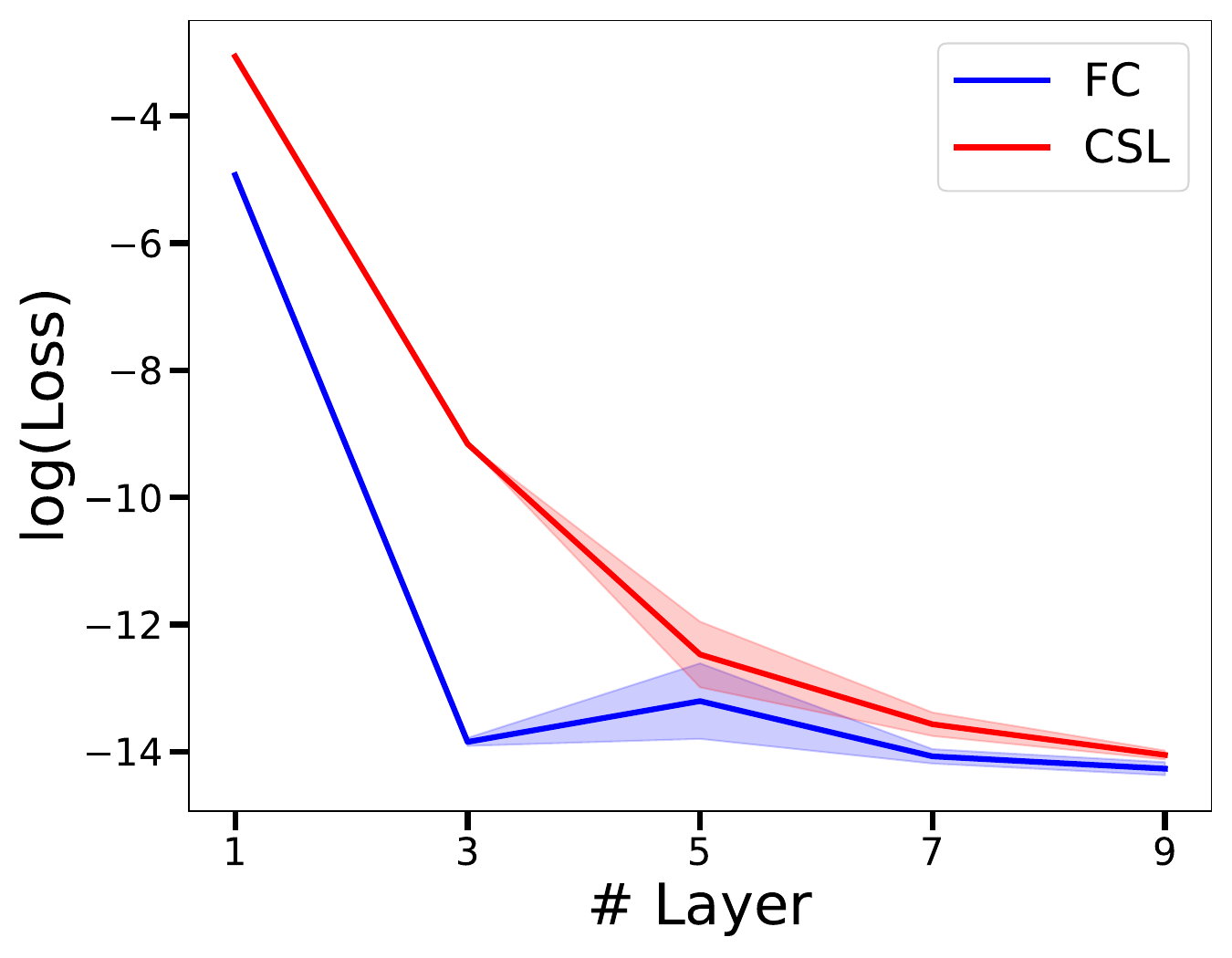}&
\includegraphics[width=0.24\columnwidth]{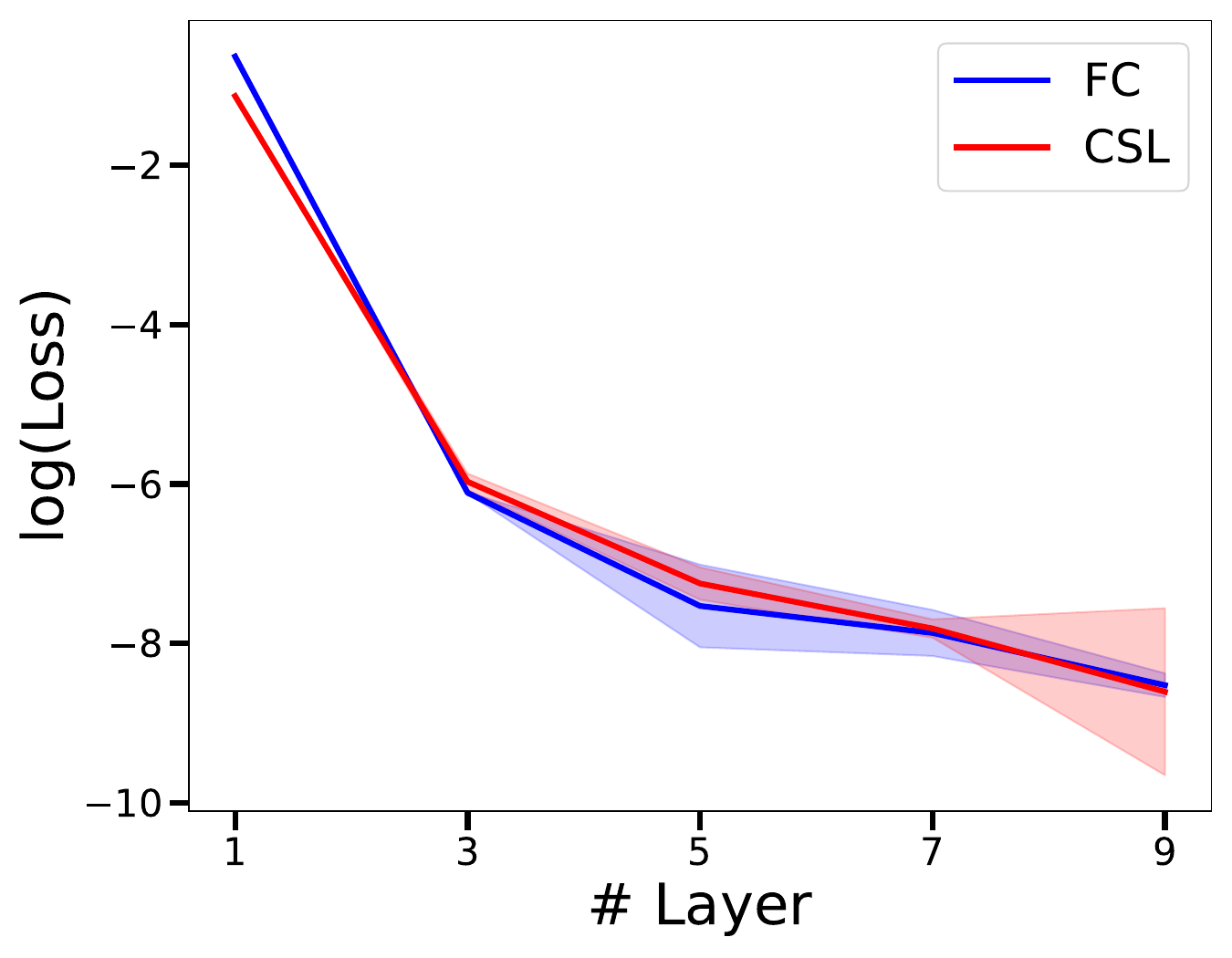}&
\includegraphics[width=0.24\columnwidth]{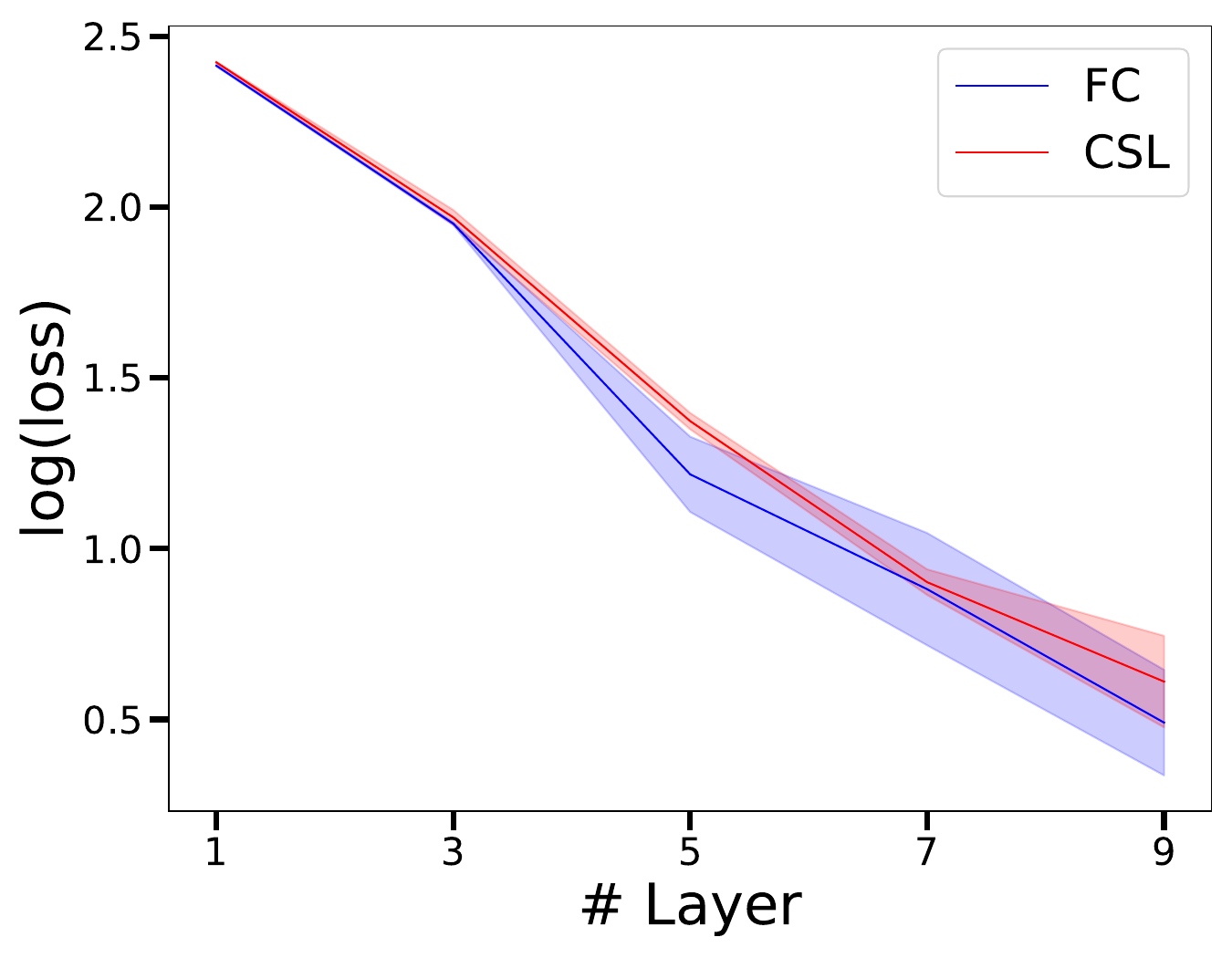}\\
\text{\quad \ref{f:loss_layer_efficient}(a) $\text{loss}_{\gL^{\dagger}}$} (electric) & \text{\quad \ref{f:loss_layer_efficient}(b) $loss_{\sqrt{\gL^{\dagger}}}$ (resistive)} &  \text{\ \  \ref{f:loss_layer_efficient}(c) $\text{loss}_{\exp\lrp{-0.5 \gL}}$ (heat)} &  \text{\ \ref{f:loss_layer_efficient}(d) Eigenvector}
\end{tabular}
\caption{Plot of loss against number of layers for the 4 problems. Figures \{\ref{f:loss_layer_efficient}(a), \ref{f:loss_layer_efficient}(b), \ref{f:loss_layer_efficient}(c), \ref{f:loss_layer_efficient}(d)\} correspond to Figures \{\ref{f:electric_loss_against_layer}(a),\ref{f:electric_loss_against_layer}(b),\ref{f:electric_loss_against_layer}(c),\ref{f:ev_loss_against_layer}(d)\} respectively. The experiment setup of each corresponding pair of plots are identical, \textbf{except for the architecture used}: all plots in Figure \ref{f:loss_layer_efficient} are made using the efficient implementation described in Section \ref{s:efficient}.}
\label{f:loss_layer_efficient}
\end{figure}

\end{document}